\newtheorem{theorem}{Theorem}[section]
\newtheorem{lemma}{Lemma}[section]
\newcommand{\mdp}{\mathcal{M}}
\newcommand{\state}{\mathcal{S}}
\newcommand{\action}{\mathcal{A}}
\newcommand{\reward}{R}
\def\approxcorrect{\checkmark\kern-1.1ex\raisebox{.89ex}{$\times$}}
\newcommand{\reals}{{\mathbb{R}}}
\newcommand{\Expect}{\mathbb{E{}}}
\newcommand{\argmin}{\mathop{\rm argmin}}
\newcommand{\argmax}{\mathop{\rm argmax}}
\newtheorem{asmp}{Assumption}[section]
\newtheorem{defn}{Definition}[section]
\def\eqref#1{equation~\ref{#1}}
\def\1{\bm{1}}
\newcommand{\ga}{\gamma}
\DeclareMathAlphabet{\mathsfit}{\encodingdefault}{\sfdefault}{m}{sl}
\SetMathAlphabet{\mathsfit}{bold}{\encodingdefault}{\sfdefault}{bx}{n}
\title{Differentially Private Temporal Difference Learning with Stochastic  Nonconvex-Strongly-Concave Optimization}
\author[1]{Canzhe Zhao}
\author[1]{Yanjie Ze}
\author[2]{Jing Dong}
\author[2]{Baoxiang Wang}
\author[1]{Shuai Li }
\affil[1]{Shanghai Jiao Tong University}
\affil[2]{The Chinese University of Hong Kong, Shenzhen}
\affil[ ]{\{canzhezhao,zeyanjie,shuaili8\}@sjtu.edu.cn, jingdong@link.cuhk.edu.cn, bxiangwang@cuhk.edu.cn}
\date{}
\begin{document}
\maketitle

\begin{abstract}
Temporal difference (TD) learning is a widely used method to evaluate policies in reinforcement learning.
While many TD learning methods
have been developed in recent years, 
little attention has been paid to preserving privacy and
most of the existing approaches might face the concerns of data privacy  from users.
To enable complex representative abilities of policies, 
in this paper,
we consider preserving privacy in TD learning with nonlinear value function approximation.
This is challenging because  such
a nonlinear problem
is usually studied in the formulation of
stochastic nonconvex-strongly-concave optimization to gain finite-sample analysis,
which would require simultaneously preserving the privacy on primal and dual sides.
To this end, 
we employ a momentum-based stochastic gradient descent ascent to achieve a 
single-timescale algorithm, and achieve a good trade-off between meaningful privacy and utility guarantees of both the primal and dual sides by perturbing the gradients on both sides using well-calibrated Gaussian noises.
As a result, our DPTD algorithm could provide $(\epsilon,\delta)$-differential privacy (DP) guarantee for the sensitive information encoded in transitions and retain the original power of TD learning,
with the utility upper bounded by 
$\widetilde{\mathcal{O}}(\frac{(d\log(1/\delta))^{1/8}}{(n\epsilon)^{1/4}})$
\footnote{The tilde in this paper hides the log factor.},
where $n$ is the trajectory length and $d$ is the dimension. 
Extensive experiments conducted in OpenAI Gym show the advantages of our proposed algorithm. 
\end{abstract}

\section{Introduction}
% \zhao{
% \begin{itemize}
%     \item P1:RL is successful + DP
%     \item P2:rl-pe-td (
%     little TD methods pay attention to privacy preserving + why privacy preserving is important (concrete examples))
%     \item P3:Our method + our results
% \end{itemize}
% }

Reinforcement learning  (RL) has shown great success in a series of scenarios such as robot control tasks, planning tasks and games 
% \cite{peters2006policy,silver2014deterministic,mnih2015human,haarnoja2018soft,sutton2018reinforcement}. 
\cite{silver2014deterministic,haarnoja2018soft,sutton2018reinforcement}. 
However, despite their superior empirical performance, 
most of these works do not consider privacy concerns regarding user data and
many applications of RL algorithms are hindered due to data leakage \cite{balle2016differentially}. As a motivating example, in medical research, users' treatment records should remain confidential while RL policies are trained upon them. 
Without considering the data privacy, previous works  have shown that
the user historical information can be inferred by recursively interacting with the released policies 
% \cite{zhang2020secret,wu2016methodology}. 
\cite{wu2016methodology}. 

Policy evaluation (PE), which aims to approximate a value function, is an essential step in many RL algorithms. For instance, in actor-critic \cite{zhang2020provably}, the resulting value function could be used to estimate the expected return of the states for a given policy, which can be further used in a policy improvement step. 
The first algorithm for PE achieving differential privacy (DP) is proposed by
\cite{balle2016differentially}, which originates from Monte-Carlo methods.  However, Monte-Carlo methods need a full trajectory before updating the estimation,  which might be impractical when the task 
% would never end and the full trajectory can not be obtained. 
incurs a long trajectory for an episode.

% As another more widely used policy evaluation method, temporal difference (TD) learning updates the approximation to the value function at the end of each step \cite{sutton1988learning}. 
% An alternative to the Monte Carlo method is the Temporal Difference (TD) method, which enjoys incremental updates without full trajectory information. 
% An alternative to the Monte Carlo method is the temporal difference (TD) learning \cite{sutton1988learning}, 
% \zhao{tbd}
Another classical PE method is the temporal difference (TD) learning \cite{sutton1988learning}, 
which allows incremental updates without using full trajectory information. 
% To allow TD to best approximate the value function and enable it in large or continuous state space,
% To allow TD to approximate the value function more effectively and to enable it in large or continuous state space,
To enable TD learning to approximate the value function well in large or continuous state space,
function approximation is employed. 
% Large amounts of works 
A large amount of works \cite{sutton2008convergent,sutton2009fast,bhandari2018finite,sun2020adaptive} focus on the analysis of TD learning with linear function approximation. 
To make the TD learning more effective in many RL tasks where the value function is more complex and can not be simply approximated by linear functions, 
% \citet{maei2009convergent}
% \citeauthor{maei2009convergent}\citeyear{maei2009convergent}
% \citeauthor{maei2009convergent} \citeyear{maei2009convergent}
\citet{maei2009convergent}
build up the first framework for the analysis of TD learning with nonlinear value function approximation and a great number of advances  \cite{WaiHYWT19,qiu2020single,wang2021finitesample} have been made for the effectiveness of nonlinear TD learning. 
% \zhao{i will fix it}.
Though the effectiveness has been extensively studied, the importance of privacy in TD learning has long been ignored.

In this paper, we propose the first differentially private temporal difference (DPTD) learning algorithm
to preserve privacy in TD learning  with nonlinear  value  function approximation in the formulation of 
stochastic nonconvex-strongly-concave optimization.
% To analyze the sensitivity and achieve DP in nonlinear TD learning, we
% consider perturbing the gradients on both the primal and dual sides, which requires injecting
% the noises on the primal and dual sides simultaneously since protecting the privacy of the gradients on both the primal and dual sides in nonlinear TD learning is needed. 
To analyze the sensitivity and achieve DP in nonlinear TD learning, we consider perturbing the gradients on both the primal and dual sides by injecting noise to the primal and dual sides simultaneously.
However, different from canonical tasks of preserving privacy in stochastic gradient descent, devising such noises in the formulation of nonlinear TD learning is more challenging since the noises on the primal side will also suppress
the convergence of the dual side and vice versa.
%and
%hence the noises injected on both sides are coupled with each
%other. 
We overcome this challenge by employing the momentum-based stochastic gradient descent ascent to achieve a  single-timescale  algorithm, which enables us to update the parameters of both the primal and dual sides with the learning rates of the same order.
In this way, it is possible to preserve the privacy of both the primal and dual sides using noises with the same variances to avoid the large  privacy  cost.
Finally, we perturb the gradients on primal and dual sides using Gaussian noises with the same and carefully chosen 
% variances to make a good trade-off to achieve the privacy and utility guarantees.
variances to efficiently preserve the privacy and make a good trade-off between the privacy and utility guarantees.
% As a result, DPTD is able to protect the single state transition, \textit{i.e.}, a triple $ (s,a,s') $ in a given trajectory. This means our algorithm can make two neighboring  trajectories that only differ in one state transition indistinguishable.

% In summary, we make the following three-fold contributions: 
In summary, we make the following contributions.
% \zhao{to be fixed}
\begin{itemize}
    \item We propose the first TD learning method that achieves DP with nonlinear function approximation, named DPTD. 
    % We prove that our algorithm satisfies $ (\epsilon,\delta) $-DP and protects the single state transition. 
    We prove that our algorithm could protect the single state transition with $(\epsilon,\delta) $-DP guarantee. 
    % \item We propose the first TD learning method that achieves DP, named DPTD (Algorithm \ref{alg: DPTD}). We prove that our algorithm satisfies $ (\epsilon,\delta) $-DP and protects the single state transition. 
    % \zhao{we give two results about dp?}\ze{Should this be mentioned?}
    
    \item 
    We prove that the utility of our algorithm is upper bounded by $\widetilde{\mathcal{O}} \left(\frac{ (d\log (1/\delta) ) ^\frac{1}{8}}{ (n\epsilon) ^\frac{1}{4}}\right) $, where the tilde hides the log factor. 
    % $\sqrt{\log (n \epsilon) }$. 
    % We prove our algorithm's utility, upper bounded by $\widetilde{\mathcal{O}} \left(\frac{ (d\log (1/\delta) ) ^\frac{1}{8}}{ (n\epsilon) ^\frac{1}{4}}\right) $, where the tilde hides the factor $\sqrt{\log (n \epsilon) }$. 
    % Second, we prove our algorithm's utility, upper bounded by $\widetilde{\mathcal{O}} (\frac{ (d\log (1/\delta) ) ^\frac{1}{8}}{ (n\epsilon) ^\frac{1}{4}}) $, where the tilde hides the factor $\sqrt{\log (T) }$. 
    % \zhao{seems a bit strange (there is no $T$ but we `hide' $\log T$ (i know the reason but the reviewer might be confused))}
    % \ze{just delete this: The utility of DPTD outperforms the previous DP results $\mathcal{O}\left (\frac{ (d \log  (1 / \delta) ) ^{1 / 4}}{ (n \epsilon) ^{1 / 2}}\right) $} \cite{wang2019efficient}. \dong{I think ``at least the same convergence rate'' is a bit confusing..is it the same? up to log factor? maybe we should give the converegence rate for TD here?}\ze{solved} \zhao{is this the SOTA results?}
    
    \item 
    % Third, we report  experiment results conducted on the simulation environments, \textit{i.e.}, OpenAI Gym, to test the utility of our algorithm. We compare our method with several advanced algorithms and show the efficiency of our algorithm. \zhao{i will rewrite here}
    % We conduct extensive experiments in OpenAI Gym environments. The experimental results validate our theoretical analysis of utility and show significant improvements over previous approaches. 
    We conduct extensive experiments in OpenAI Gym environments. The experimental results show clear improvements  against previous approaches.

\end{itemize}

\paragraph{Notations} 
Throughout this paper, we use $\|\cdot\|$ to denote the $\ell_2$ norm of the vectors and $ (x,y) $ to denote the concatenation of two vectors $x$ and $y$. For a given set $\mathcal{X}$, let $\mathcal{P}_{\mathcal{X}}(\cdot)$ be the projection to the set $\mathcal{X}$. 
We denote $[n]=\{1,\cdots,n\}$ for $n\in \mathbb{N}^+$. 
% Let $\tau$ represent a trajectory and $\xi_i=(s_i,a_i,s^\prime_i)$ be the $i$-th state-action-state pair in a given trajectory. 
Let $\tau$ represent a trajectory and $\xi_i=(s_i,a_i,s^\prime_i)$ be the $i$-th state transition in a given trajectory.

\section{Related Work}
In this section, we present the works for studying TD learning and 
the recent advances in achieving DP in RL.
\paragraph{Temporal Difference Learning}
Policy evaluation (PE), which approximates the value function of a given policy, is a fundamental part of RL. 
One of the most widely used policy evaluation methods is temporal difference (TD) learning which is first proposed in  \cite{sutton1988learning} and 
% aims to minimize the Bellman error by approximating the value function of a given policy. 
aims to solve PE by minimizing the Bellman error. 
While most of the existing works focus on analyzing the  convergence rate of TD learning with linear value function approximation \cite{sutton2008convergent,sutton2009fast,bhandari2018finite,sun2020adaptive}, nonlinear function approximation might be more preferable which can
tackle the complex learning objectives in some complex tasks better. 
The most notable example might be using neural networks with nonlinearities to approximate the value functions. 
% Hence TD learning with nonlinear function approximation has attracted significant attention in recent years.
% This motivates researchers to study on a more general case, smooth non-linear \zhao{tbd} value functions.  
\citet{maei2009convergent} present the first framework for TD learning with smooth nonlinear value functions.
% by minimizing a Bellman error objective function. 
\citet{WaiHYWT19} reformulate the 
% minimization optimization problem in 
nonlinear TD learning as
a primal-dual finite-sum optimization problem via Fenchel's duality, where the primal side is nonconvex and the dual side is strongly-concave, and propose a TD learning method with variance reduction technique in the offline setting. 
% A TD learning method with variance reduction technique is proposed in \cite{WaiHYWT19} in the offline setting.
% Via Fenchel's duality, the minimization problem is 
% reformulated into a primal-dual finite-sum optimization by  \cite{WaiHYWT19}, where the primal side is nonconvex and the dual side is strongly-concave. Also,  \cite{WaiHYWT19} propose a  temporal difference method achieving variance reduction, while the offline setting is required. 
% Further, to bridge the gap between the offline setting required in variance reduction and the online algorithm,  \cite{qiu2020single} propose  primal-dual online TD algorithms, and the convergence rate for the algorithm without variance reduction is $\widetilde O (\frac{1}{T^\frac{1}{4}}) $.  \cite{wang2021finitesample} improve the convergence rate of temporal difference learning with gradient correction  to $O (\frac{1}{T^\frac{1}{2}}) $, up to a factor $O (\log T) $.
Further, \citet{qiu2020single} propose primal-dual online TD algorithms based on the variance reduction technique in the online setting.

\paragraph{Differential Privacy and Applications}
% \zhao{i will rewrite here}
% Differential privacy (DP)  is a system \zhao{maybe a `concept'?}\ze{both ?} for publicly sharing information about a dataset by describing the patterns of groups within the dataset while withholding information about individuals in the dataset.  
Differential privacy (DP) is first formally introduced by \cite{dwork2006differential} which aims to provide rigorous privacy-preserving guarantee of the systems. In recent years, privacy-preserving machine learning algorithms have been extensively studied in empirical risk minimization (ERM) \cite{wang2019efficient}, deep learning (DL) \cite{AbadiCGMMT016} and RL \cite{balle2016differentially}.

We briefly discuss the DP RL algorithms.
The first DP RL algorithm for PE is presented in  \cite{balle2016differentially}, 
motivated by the protection of user records in medical research. However, their methods originate from Monte-Carlo methods, which require at least one full trajectory for updating the value function approximation once. 
% \ze{and they apply an output perturbation mechanism to achieve DP (which is a trivial way)} \zhao{i think this sentence might offend the audience..} 
Private Q-learning algorithm is given by  \cite{wang2019privacy}, achieving DP by protecting the reward function.  \citet{lebensold2019actor} focus on  how actor-critic methods perform when initialized with a privatized first-visit Monte-Carlo estimate in  \cite{balle2016differentially}. 
% Off-policy evaluation is studied in  \cite{xie2019privacy}, where an algorithm called gradient perturbed off-policy evaluation (GPOPE)  is proposed. 
% \citeauthor{xie2019privacy} \citeyear{xie2019privacy} propose an algorithm of gradient perturbed off-policy evaluation (GPOPE)  to study the privacy-preserving off-policy evaluation. 
% \cite{vietri2020private} develop a private optimism-based learning algorithm  under a relaxation of DP called joint differential privacy (JDP) and provide the upper bound and the lower bound of the utility. 
\citet{vietri2020private} establish both the PAC and regret utility
guarantees of an optimism-based private RL algorithm for episodic tabular MDPs.
%  \zhao{what a kind of regret bound}\ze{is this ok?}

\section{Preliminaries}

%In this section, we introduce necessary definitions and assumptions in the field of DP and PE. 

Before we formally present our algorithm, we first introduce  PE, some definitions in DP and necessary assumptions.

\subsection{Policy Evaluation}
In RL, a discounted Markov decision process  (MDP)  is denoted by a tuple $\mdp =  ( \state, \action, P, \reward, \ga) $, where $\state$ is the state space, $\action$ is the action space, $P(\cdot | s, a)  $ is the transition probability kernel, $\reward:\mathcal{S}\times\mathcal{A}\to\mathbb{R}$ is the reward function, and $\ga$ is the discount factor. A policy $\pi$ takes state $s \in \state$ as an input and gives a distribution over actions $\action$. 

% J\left(\pi_{\theta}\right)=\int_{\mathcal{S}} \rho^{\pi}(s) \int_{\mathcal{A}} \pi_{\theta}(s, a) r(s, a) \mathrm{d} a \mathrm{~d} s
 
 We consider the PE problem, where the value function is learned for a policy. For a given  policy $\pi$, the corresponding reward function is  defined as $R^\pi(s) = \Expect_{a \sim \pi (\cdot | s) } \left[R (s, a) \right]$ and the induced transition matrix  is  
%  $P^\pi  (s, s’)  = \sum_{a \in \action}\pi (a | s) P (s, a, s') $. 
  $P^\pi  (s, s’)  = \int_{\action}\pi (a | s) P (s, a, s') \mathrm{d}a$. 
 The value function is defined as $V^\pi: \state \rightarrow \reals$ representing the long term expected discounted reward under the policy $\pi$, which  is formally defined as
\begin{align*}
    V^\pi  (s)  
    = \Expect \left[ \sum^\infty_{t=0} \gamma^t R^\pi (s_t)  \mid s_0 = s, \pi \right] \,.
\end{align*}

To simplify the notations, we use $R^\pi, V^\pi$ through stacking up $R^\pi(s), V^\pi(s)$ for all $s$. By definition, $V^\pi$ satisfies the Bellman equation
\begin{align*}
    V^\pi = R^\pi + \gamma P^\pi V^\pi \,.
\end{align*}

Since the true value function is intractable, it is common to proceed the policy evaluation by minimizing the mean squared Bellman error (MSBE). 
% \zhao{We assume the MDP is aperiodic and irreducible for all $\pi$.}
% We assume there exists a stationary distribution $\mu^\pi$ of the Markov chain induced by policy $\pi$ and the matrix $D$ is constructed as $D = \operatorname{Diag}  (\{\mu^\pi (s) \}_{s \in \state})$.
We assume there exists a stationary distribution $\mu^\pi$ of the Markov chain induced by policy $\pi$. Let $D = \operatorname{Diag}  (\{\mu^\pi (s) \}_{s \in \state})$.
Then the MSBE could be formulated as 
\begin{align*}
    \text{MSBE} = \frac{1}{2} \| V^\pi - R^\pi - \gamma P^\pi V^\pi \|_D^2 \,. 
\end{align*}
% In practice, however, we cannot directly optimize the above objective as the approximated value functions live in a subspace \cite{sutton2018reinforcement}. Thus a projection step is needed. 
% When $\mathcal{S}$ is  large or infinite, it is inefficient or even unrealistic to access $V^\pi$ through a tabular form and thus the function approximation is
% needed. 
When $\mathcal{S}$ is  large or infinite, it is inefficient or even unrealistic to access $V^\pi$ through a tabular form and thus the function approximation is
needed. 
In practice, however, we can not directly optimize the above objective as the approximated value functions usually lie in subspaces \cite{sutton2018reinforcement}. Thus a projection step is needed. 
We assume $V^\pi$ is parameterized by some parameter $\theta\in\mathbb{R}^d$ where $d$ is the dimension \cite{sutton2018reinforcement}.
In the case where linear function approximation is used, \textit{i.e.}, $V^\pi = \Phi\theta $ with $\Phi\in \mathbb{R}^{ |\mathcal{S}| \times d}$ as the feature matrix, the projection $\Pi = \Phi (\Phi^\top D \Phi) ^{-1}\Phi^\top D $ is well defined and well studied.
% It is not until recently that a general projection is proposed for nonlinear function approximations. 
For twice-differentiable nonlinear function approximation,  \cite{WaiHYWT19} propose a general projected Bellman error (MSPBE) as follows
% \begin{align} \label{def:PMSBE}
%     \text{MSPBE} =& \Expect\left[\delta (s) \Psi (s)  \right]^\top \Expect\left[\Psi (s)  \Psi (s) ^\top\right] \Expect\left[\delta (s) \Psi (s)  \right] \,,
% \end{align}
\begin{align} \label{def:PMSBE}
    \text{MSPBE} =& \frac{1}{2}\Expect\left[\delta (s) \Psi (s)^\top  \right] G_{\theta}^{-1} \Expect\left[\delta (s) \Psi (s)  \right] \,,
\end{align}
where $V^\pi_\theta$ denotes the value function under policy $\pi$ parametrized by $\theta$, $\Psi (s)= \nabla_\theta V^\pi_\theta (s)$ is the gradient evaluated at state $s$, $G_{\theta}=\mathbb{E}_s\left[\Psi (s)  \Psi (s) ^\top\right]\in\mathbb{R}^{d\times d}$, $\delta (s)  = R^\pi (s)  + \gamma P^\pi V^\pi_\theta (s')  - V^\pi_\theta (s) $ is the TD error and the expectation is taken over $s \in \state, a \sim \pi (\cdot | s) , s' \sim P (s, a) $.
% Via the Fenchel's duality that $1 / 2 \cdot\|x\|_{A^{-1}}^{2}=\max _{y \in \mathbb{R}^{d}}\langle x,y\rangle-1 / 2 \cdot y^{\top} A y$, we have a primal-dual formulation of MSPBE minimization problem as 
Via the Fenchel's duality that $\frac{1}{2}\|x\|_{A^{-1}}^{2}=\max _{y \in \mathbb{R}^{d}}\langle x,y\rangle-\frac{1}{2} y^{\top} A y$, 
% we have a primal-dual formulation of MSPBE minimization problem as 
the MSPBE minimization problem has a primal-dual formulation as 
\begin{equation}
\label{problem: min-max RL}
\begin{aligned}
&\min _{\theta \in \Theta} \text{MSPBE} (\theta) \\
&=\min _{\theta \in \Theta} \max _{\omega \in \Omega}\left\{\mathcal{L} (\theta, \omega) :=\mathbb{E}_{s, a, s^{\prime}}\left[\ell\left (\theta, \omega ; s, a, s^{\prime}\right) \right]\right\}\,,
\end{aligned}
\end{equation}
% where we define
% \begin{align*}
% &\ell\left (\theta, \omega ; s, a, s^{\prime}\right) \\
% &\quad:=\left\langle\delta \cdot \nabla_{\theta} V_{\theta} (s) , \omega\right\rangle-\frac{1}{2} \omega^{\top}\left[\nabla_{\theta} V_{\theta} (s)  \nabla_{\theta} V_{\theta} (s) ^{\top}\right] \omega
% \end{align*}
% with $
% \delta:=R\left (s, a\right) +\gamma V_{\theta}\left (s^{\prime}\right) -V_{\theta} (s) \,.
% $
where
\begin{align*}
\ell\left (\theta, \omega ; s, a, s^{\prime}\right):=\left\langle\delta(s) \Psi(s) , \omega\right\rangle-\frac{1}{2} \omega^{\top}\left[\Psi (s)  \Psi (s) ^\top\right] \omega\,.
\end{align*}
% with $
% \delta:=R\left (s, a\right) +\gamma V_{\theta}\left (s^{\prime}\right) -V_{\theta} (s) \,.
% $

% \zhao{$\delta_{\theta}(s,a,s^\prime)$?}

% More generally,  we denote $f (\theta,\omega;\xi) =\ell (\theta,\omega;s,a,s') $.
% % , where $\xi$ represents a state-action-state triple in RL, \textit{i.e.}, $ (s,a,s')$. 
% Then we define $F (\theta,\omega) :=\mathbb{E}_{\xi\sim \Xi}[f (\theta,\omega;\xi]$ and the original min-max problem (\textit{i.e.}, Eq.(\ref{problem: min-max RL})) is transformed into the following form
More generally, let $f (\theta,\omega;\xi) =\ell (\theta,\omega;s,a,s') $ and $F (\theta,\omega) :=\mathbb{E}_{\xi\sim \Xi}[f (\theta,\omega;\xi)]$. Then the original minimax problem in Eq. (\ref{problem: min-max RL}) is transformed into the following form
% \begin{equation}
% \label{problem: NCSC min-max}
% \min _{\theta \in \Theta} \max _{\omega \in \Omega} F (\theta, \omega) =\min_{\theta \in \Theta} \text{MSPBE} (\theta)\,.
% \end{equation}
\begin{align}\label{problem: NCSC min-max}
\min_{\theta \in \Theta} \text{MSPBE} (\theta)=\min _{\theta \in \Theta} \max _{\omega \in \Omega} F (\theta, \omega)\,.
\end{align}
% \zhao{justify the assumptions}

The difficulty of solving the above objective largely arises from the fact that it may be nonconvex in $\Theta$ but concave in $\Omega$. 
% Moreover, as RL proceeds in a Markov chain, the gradient sampling is not i.i.d. \dong{whats the relationship between i.i.d and the following assumptions?} 
% Thus, we maintain the following assumptions about the objective function, gradients, and the Markov chain.
% Thus, we maintain the following assumptions which are common in the field of nonconvex-strongly-concave primal-dual optimization \cite{luo2020stochastic,qiu2020single,WaiHYWT19} and DP empirical risk minimization
% (ERM) problem \cite{wang2019efficient,Wang2017differentially,wang2019differentially}.
Like previous works, we need the following assumptions which are common in the field of nonconvex-strongly-concave primal-dual optimization \cite{luo2020stochastic,qiu2020single,WaiHYWT19} and DP ERM problem \cite{wang2019efficient,Wang2017differentially,wang2019differentially}.

% The first assumption is about the existence of solutions. If there is no solution, then it is impossible for the optimization algorithm to work
The first assumption guarantees the existence of a solution, which hence ensures the feasibility of the problem
\cite{luo2020stochastic,qiu2020single}.
% \begin{asmp}[Existence of solutions]
% \label{asmp: existence of solution}
% The solution $\theta^*=\argmin_{\theta\in \Theta}\emph{MSPBE} (\theta) $ exists.  For notational convenience, we denote $J (\theta) =\max_{\omega\in \Omega} F (\theta,\omega) $. We assume $J (\theta^*) > -\infty$, such that  $J (\theta) \geq J (\theta^*) >-\infty$. 
% \end{asmp}
\begin{asmp}[Existence of solutions]
\label{asmp: existence of solution}
The solution $\theta^*=\argmin_{\theta\in \Theta}\emph{MSPBE} (\theta) $ exists.  Let $J (\theta) :=\max_{\omega\in \Omega} F (\theta,\omega) $. We assume $J (\theta^*) > -\infty$. 
\end{asmp}
% This is a standard assumption in  nonconvex-strongly-concave primal-dual optimization literature \cite{luo2020stochastic,qiu2020single}.
% \shuai{don't need to mention 'standard' any more, since we already state it above}\zhao{fixed}

% % \zhao{
% \shuai{give motivations for assumptions first, then it can hold}

%\zhao{
% The next assumption is a mild one that constrains the gradient of $F$ and ensures smoothness of the function. 
The next assumption is about continuity of the gradient, which holds when the parametric family of functions has bounded, smooth gradient and Hessian \cite{qiu2020single,WaiHYWT19}. Furthermore, this assumption implies that $F (\theta,\cdot) $ and $F (\cdot, \omega) $ are both $L_F$-Lipschitz smooth.
%}
% The next\shuai{Next} assumption \ze{constrain the gradient of $F$ and smooth the function.} is a mild \shuai{one} one and it holds when the parametric family of functions has bounded, smooth gradient and Hessian \cite{qiu2020single,WaiHYWT19}. Furthermore, this assumption implies that $F (\theta,\cdot) $ and $F (\cdot, \omega) $ are both $L_F$-Lipschitz smooth.
% % }

% \begin{asmp}[Lipschitz continuity of $\nabla F$]
% \label{asmp:wlipschitz}
% There exists some constant $L_F>0$ such that for all $ (\omega, \theta) ,  (\omega', \theta') $ 
% \begin{align*}
%     \left\| \nabla F (\omega, \theta)  - \nabla F (\omega', \theta')  \right\| 
%     \leq L_F \left\| (\omega, \theta)  -  (\omega', \theta') \right\|  \,.
% \end{align*}
% \end{asmp}
\begin{asmp}[Lipschitz continuity of $\nabla F$]
\label{asmp:wlipschitz}
There exists some constant $L_F>0$ such that for any $\theta,\theta'\in\Theta$,  $\omega,\omega'\in\Omega$, the gradient $\nabla F (\theta,\omega)=(\nabla_{\theta} F (\theta,\omega),\nabla_{\omega} F (\theta,\omega))$ satisfies
\begin{align*}
    \left\| \nabla F (\theta,\omega)  - \nabla F (\theta',\omega' )  \right\| 
    \leq L_F \left\| (\theta,\omega)  -  (\theta',\omega') \right\|  \,.
\end{align*}
\end{asmp}
The third assumption upper bounds the stochastic gradient, which is critical for bounding the sensitivity in the analysis of DP \cite{wang2019efficient,Wang2017differentially}. 
% This assumption also 
% motivates a key lemma
% that helps bound the variance of stochastic gradients.
% \zhao{tbd}
 \begin{asmp}[Stochastic G-Lipschitz]
 \label{asmp: G-lispchitz}
%  For any stochastic function $f$, for all $ (\omega,\theta) , (\omega',\theta') $
% \begin{align*}
% \left\|  f (\omega, \theta)  -  f (\omega', \theta')  \right\| 
%     \leq G \left\| (\omega, \theta)  -  (\omega', \theta') \right\|  \,.
% \end{align*}
For any $\xi$, $ (\omega,\theta)$ and $(\omega',\theta') $, the stochastic function $f$ satisfies
\begin{align*}
\left\|  f (\omega, \theta;\xi)  -  f (\omega', \theta';\xi)  \right\| 
    \leq G \left\| (\omega, \theta)  -  (\omega', \theta') \right\|  \,.
\end{align*}
 
 \end{asmp} 
%  Assumption \ref{asmp: G-lispchitz} upper bounds the stochastic gradient by the constant $G$, thus critical to DP's analysis. 

% Though stochastic G-lipschitz is required in most DP papers, it is a strong assumption which is not easily satisfied in Markovian Sampling. We  consider a weaker assumption to bound the variance of stochastic gradients, shown in Assumption \ref{asmp: bounded variance}. It will be shown that if we aim to get DP under trajectory rather than under state-action-state, Assumption \ref{asmp: G-lispchitz}, \ref{asmp:bounded} and Lemma \ref{lemma: bounded variance} are directly replaced by Assumption \ref{asmp: bounded variance}.
% \begin{asmp}[Bounded variance]
% \label{asmp: bounded variance}
%  The variance of the stochastic gradient $\nabla f (\theta, \omega; \xi) =\left (\nabla_{\theta} f (\theta, \omega;\xi) , \nabla_{\omega} f (\theta, \omega;\xi) \right) $ is bounded as $\mathbb{E}_{\xi \sim \Xi }\|\nabla f (\theta, \omega;\xi) -\nabla F (\theta, \omega) \|^{2} \leq \sigma^{2}$, where $\sigma^2$ is a given constant. 
% \end{asmp}

The fourth assumption 
% gives the feasible sets, 
restricts the feasible sets of the parameter to be convex, which is
common in TD learning \cite{qiu2020single,sun2020adaptive,bhandari2018finite}. 
% If we set $\Theta$ and $\Omega$ as $\mathbb{R}^d$, the projection step in our algorithm can be viewed as eliminated.
% \shuai{this sentence is almost the same with the assumption content. then why do we need to write again??}\ze{ok}
\begin{asmp}[Convex sets]
\label{asmp: feasible convex set}
The feasible sets $\Theta$ for the primal variable $\theta$ and $\Omega$ for the dual variable $\omega$
% \shuai{these notations are first mentioned here? then why the reader would know what does this mean}\ze{ok}
are closed convex sets. 
\end{asmp}

The next assumption guarantees the existence and uniqueness of the solution  $\omega^\ast=\max_{\omega\in \Omega}F(\theta,\omega)$, for any fixed $\theta\in \Theta$. It holds when $G_{\theta}$ defined in Eq. (\ref{def:PMSBE}) is positive definite \cite{qiu2020single,WaiHYWT19}. 
% \zhao{tbd}
%\shuai{we do not need to repeat}

% \zhao{The last assumption guarantees the existence and uniqueness of the solution  $\omega^\ast=\max_{\omega\in \Omega}F(\theta,\omega)$, for any fixed $\theta\in \Theta$. It holds when $G_{\theta}$ defined in Eq.(\ref{def:PMSBE}) is positive definite \cite{qiu2020single,WaiHYWT19}.}
\begin{asmp}[Strong concavity]
\label{asmp: strongly concave}
For any given $\theta \in \Theta$, the function $F (\theta, \cdot) $ is $\mu$-strongly concave, i.e., 
% $\forall \theta \in \Theta$ and 
$\forall\ \omega, \omega^{\prime} \in \Omega$, $F (\theta, \cdot) $ is concave and $\left\|\nabla_{\omega} F (\theta, \omega) -\nabla_{\omega} F\left (\theta, \omega^{\prime}\right) \right\| \geq\mu\left\|\omega-\omega^{\prime}\right\|$.
\end{asmp}
% The last assumption assumes data is iid. This assumption is impractical for DP under SAS since data points in a single trajectory are dependent, though it may hold more naturally with DP under trajectory, which we discuss in Section \ref{sec: under trajectory}. 
% Moreover, this is standard in DP relevant analysis \cite{Wang2017differentially,  wang2019efficient, wang2019privacy,qiu2020single}
% and it remains to explore how to replace the iid assumption. 
The last assumption assumes data is i.i.d. 
% This assumption might be impractical for 
% DP under state-action-state in Definition \ref{def: SASDP2}
% since data points in a single trajectory might be correlated, though it may hold more naturally with DP under trajectory in Definition \ref{def: TDP2}. 
Though this assumption might be impractical for 
DP under state-action-state in Definition \ref{def: SASDP2}
since data points in a single trajectory might be correlated, it may hold more naturally with DP under trajectory in Definition \ref{def: TDP2}. 
Moreover, this is standard in DP-relevant analysis \cite{Wang2017differentially,wang2019efficient,wang2019privacy}.
% and it remains to explore how to replace the iid assumption. 

\begin{asmp}[Sampling i.i.d. data]
\label{asmp: iid sampling}
For a given dataset $S$,  data points in $S$ are independent and identical distributed (i.i.d.). Further, the algorithm samples the data points uniformly.
\end{asmp}
% Armed with the above assumptions, we provide our algorithm DPTD and the theoretical analysis in the following section.

\subsection{Differential Privacy}
% Two datasets $X$ and $X'$ are neighboring if they only differ in one data point. Then the formal definition of DP is given as follows.
Two datasets $X$ and $X^\prime$ are neighboring if they only differ in one data point. Then the DP is defined as follows.
% which is introduced in  \cite{dwork2006differential}.

\begin{defn}[$ (\epsilon,\delta) $-DP \cite{dwork2006differential}]
\label{def: DP}
A randomized mechanism $\mathcal{M}:\mathcal{X}\rightarrow\mathcal{Y}$ satisfies $ (\epsilon,\delta) $-differential privacy if for any two neighbouring
% \shuai{do we have the def for 'neighbouring'?} \zhao{we give two definitions of `neighbouring' later}
inputs $X,X^\prime\subseteq \mathcal{X}$ and any subset of outputs $Y\subseteq \mathcal{Y}$, it holds that
\begin{align*}
    \mathbb{P} (\mathcal{M} (X) \in Y) \leq e^\epsilon\ \mathbb{P} (\mathcal{M} (X') \in Y) +\delta\,.
\end{align*}
% $$
% P (\mathcal{M} (x) \in y) \leq e^\epsilon P (\mathcal{M} (x') \in y) +\delta\,.
% $$
\end{defn}
To achieve $ (\epsilon,\delta) $-DP, we consider using Gaussian mechanism \cite{dwork2014algorithmic} which adds a $d$-dimensional Gaussian noise $u_t\sim N (0,\sigma^2_t\mathbf{I}_d) $ to the output at time $t$.  
% Further, to achieve $ (\epsilon,\delta) $-DP,  \shuai{'can use' does not mean 'should use'}\ze{solved} we consider using Gaussian mechanism\cite{dwork2014algorithmic}, which adds noise $u_t\sim N (0,\sigma^2_t\mathbf{I}^d) $ to a vector with $d$ dimensions at time $t$, parametrized by $\sigma_t$.  
% And one relevant concept is $\ell_2$-sensitivity,  used to select the variance $\sigma_t$ of Gaussian noise.
% And one relevant concept is $\ell_2$-sensitivity,  used to select the variance $\sigma_t$ of Gaussian noise.
% The variance of the noise used in Gaussian mechanism depends on the $\ell_2$-sensitivity given in Definition \ref{def: l2-sensitivity}.
% The magnitude of the variance of the noise used in Gaussian mechanism depends on the $\ell_2$-sensitivity of the query function, which is formally defined in Definition \ref{def: l2-sensitivity}.
The magnitude of the noise variance depends on the $\ell_2$-sensitivity of the query function, which is formally defined in Definition \ref{def: l2-sensitivity}.
% \begin{defn}[$\ell_2$-sensitivity \cite{dwork2014algorithmic}]
% For two neighbouring datasets $S$ and $S'$\shuai{this means to fix $S,S'$}, the $\ell_2$-sensitivity $\Delta (g) $ of a function $g$ is defined as $\Delta (g) =\sup _{S, S^{\prime}}\left\|g (S) -g\left (S'\right) \right\|$.
% \end{defn}

% To analyse the composition of different Gaussian mechanisms,  a natural relaxation of differential privacy based on the R\'enyi divergence is proposed in  \cite{wang2019subsampled}. The new definition named RDP shares many important
% properties with the standard definition of differential privacy and allows tighter analysis of composite heterogeneous mechanisms in addition.
% To analyze the composition of different Gaussian mechanisms, R\'enyi differential privacy (RDP) is proposed in  \cite{mironov2017renyi} which is based on the R\'enyi divergence and is a natural relaxation of DP. RDP shares many important properties\shuai{be specific} with the standard definition of DP and allows a tighter analysis of composite heterogeneous mechanisms.
% When the dataset is accessed by a sequence of randomized mechanisms iteratively,
% To analyze the composition of different Gaussian mechanisms, 
To analyze the mechanism of a sequence of randomized mechanisms more effectively, 
R\'enyi differential privacy (RDP) is proposed in  \cite{mironov2017renyi} based on the R\'enyi divergence, which is a natural relaxation of DP. 
% RDP shares many important properties with the standard definition of DP and allows a tighter analysis of composite heterogeneous mechanisms.

\begin{defn}[$ (\alpha,\rho) $-RDP \cite{mironov2017renyi} ]
% \zhao{this paper only has one author}
\label{def: RDP}
A randomized mechanism $\mathcal{M}:\mathcal{X}\rightarrow\mathcal{Y}$ satisfies $ (\alpha,\rho) $-R\'enyi differential privacy if for any two neighbouring inputs $X,X^\prime\subseteq \mathcal{X}$ and any subset of outputs $Y\subseteq \mathcal{Y}$, it holds that
% \begin{align*}
% D_\alpha (\mathcal{M} (X) \|\mathcal{M} (X') ) :=\log \mathbb{E}\left (\mathcal{M} (X) /\mathcal{M} (X') \right) ^\alpha /  (\alpha-1)  \leq \rho\,.
% \end{align*}
\begin{align*}
D_\alpha (\mathcal{M} (X) \|\mathcal{M} (X') ) :=
\frac{\log \mathbb{E}\left (\mathcal{M} (X) /\mathcal{M} (X') \right) ^\alpha}{  (\alpha-1)}  \leq \rho\,.
\end{align*}
% $$
% D_\alpha (\mathcal{M} (X) \|\mathcal{M} (X') ) :=\log \mathbb{E}\left (\mathcal{M} (X) /\mathcal{M} (X') \right) ^\alpha /  (\alpha-1)  \leq \rho\,.
% $$
\end{defn}

% \begin{defn}[Trajectory DP (TDP) ]
% \label{def: TDP}
% We consider a dataset $S$ consisting of $m$ trajectories and each trajectory is a ordered set containing at least $\frac{n}{2}$ $ (s,a,s') $ triples and at most $n$ $ (s,a,s') $ triples. We define $S$ and $S'$ are neighbouring iff $S$ and $S'$ differ in only one trajectory, i.e.
% $\exists \text{ only one } traj\in S\ \text{and only one}\ traj'\in S'$ satisfy
%  $\exists  (s,a,s') \in traj$ but $\notin traj'$, or $\exists  (s,a,s') \in traj'$ but $\notin traj$.
% If a randomized mechanism $\mathcal{M}$ satisfies $ (\epsilon,\delta) $-DP in this setting, we call $\mathcal{M}$ satisfies $ (\epsilon,\delta) $-TDP.
% \end{defn}

%Since the agent updates the approximation of the value function after the agent experiences a new state transition in TD learning, in this paper, our main focus is how to protect the privacy of single state transition. To formalize this goal,  we give the following definition called \textit{DP under state-action-state}, which is a realization of DP under our setting. 
% \textit{DP under state-action-state} claims that  two neighbouring trajectories differ in only one state transition.
%In DP under state-action-state, two trajectories are considered to be neighboring if these two trajectories differ in only one state transition, \textit{i.e.}, a state-action-state triple .
% It is common for an RL algorithm to involve some sensitive information through the learning process, which is encoded by the experiences, \textit{i.e.}, the state-action-state triples. 
When the RL algorithm is deployed online in applications such as recommender systems, sensitive user information is often encoded through experiences, \textit{i.e.}, the state-action-state triples.
Our goal to protect the sensitive information in RL is realized by making the state-action-state triple approximately indistinguishable for attackers, which leads to our specification of neighboring datasets. This definition is applicable to our approach and other pure online RL algorithms.

For notational convenience, we use $ \xi_i=(s_i,a_i,s^\prime_i) $ and $\hat{\xi}_i=(\hat{s}_i,\hat{a}_i,\hat{s}^\prime_i) $ to denote the state-action-state triples. 

\begin{defn}[DP under state-action-state]
\label{def: SASDP2} 
% \zhao{improved}
Let $S=\{ \xi_i \}^n_{i=1}$ and $\hat S=\{ {\hat \xi}_i \}^n_{i=1}$ be two trajectories of the same length.
$S$ and $\hat{S}$ are neighbouring if there exists a unique $i\in [n]$ such that $ \xi_i \neq {\hat\xi}_i $. 
%Let $S$ be a dataset consisting of only one trajectory $\tau=\{ (s_i,a_i,s^\prime_i) \}^n_{i=1}$.
% which can be either an unordered set or an ordered set. 
%Similarly, let $\hat{S}$ be another dataset consisting of only one trajectory $\hat{\tau}=\{ (\hat{s}_i,\hat{a}_i,\hat{s}^\prime_i) \}^n_{i=1}$.
If a randomized mechanism $\mathcal{M}$ is $ (\epsilon,\delta) $-DP under this definition of neighbourhood, this mechanism is $ (\epsilon,\delta) $-DP under state-action-state.
\end{defn}

% Though our main focus is on \textit{DP under state-action-state},  it is also discussed in our theoretical analysis that  two datasets which consist of $m$ trajectories and only differ in one trajectory. Here we first give a formal definition under this setting, named \textit{DP under trajectory}.
% Some previous works \dong{I guess we can start the sentence with just balle2016?}\cite{balle2016differentially} consider to project the trajectory-wise DP \dong{I am not quite sure what does it mean by project the trajectory-wise DP?}. Specifically, two datasets are considered to be neighboring if these two datasets are consist of $m$ trajectories and only differ in one trajectory in \cite{balle2016differentially} \dong{I don't think we need to cite again here?}, which is referred to as \textit{DP under trajectory} in this paper. The formal definition of DP under trajectory is as follows.
% \dong{I guess we can start the sentence with just balle2016?}\zhao{fixed}

% When the RL algorithm involves offline learning components, the setting where one trajectory composes a dataset is no longer feasible. 
When the RL algorithm is deployed offline, the above definition may be insufficient to provide privacy guarantee since the setting where one trajectory composes a dataset is no longer feasible.  In the case where the dataset is composed of multiple trajectories, we introduce
% In this case, we introduce 
a more general definition of DP under trajectory that allows at most one trajectory to differ in neighbouring datasets. 
\begin{defn}[DP under trajectory]
\label{def: TDP2}
Let $S=\{\tau_i\}^m_{i=1}$ and $\hat{S}=\{\hat{\tau}_i\}^m_{i=1}$ be two datasets consisting of $m$ trajectories
where $\tau_i=\{\xi_j\}^{|\tau_i|}_{j=1}$ with $|\tau_i|\leq n$ and $\hat{\tau}_i=\{\hat{\xi}_j\}^{|\hat{\tau}_i|}_{j=1}$ with $|\hat{\tau}_i|\leq n$.
% with length $\frac{n}{2}\leq|\tau_i|\leq n$.
$S$ and $\hat{S}$ are neighbouring if there exists a unique $i\in [m]$ such that $\tau_i\neq\hat{\tau}_i$. 
If a randomized mechanism $\mathcal{M}$ is $ (\epsilon,\delta) $-DP  under this definition of neighbourhood, 
this mechanism is $ (\epsilon,\delta) $-DP  under trajectory.
\end{defn}
% \zhao{length $|\tau_i|\leq n$}
% In DP under trajectory, two datasets are neighboring if these two datasets are consisting of $m$ trajectories and only differ in one trajectory, which
% is also considered in \cite{balle2016differentially}.
% Specifically, they consider two datasets to be neighboring if these two datasets are consisting of $m$ trajectories and only differ in one trajectory,
% which is referred to as \textit{DP under trajectory} in this paper. 
% The formal definition of DP under trajectory is as follows.

% \begin{defn}[State-action-state DP  (SASDP) ]
% \label{def: SASDP}
% We consider a dataset $S$ consisting of only one trajectory , which can be either an unordered set or an ordered set, with $n$ $ (s,a,s') $ triples.  We define $S$ and $S'$ are neighbouring iff $S$ and $S'$ differ in only one $ (s,a,s') $ triple, i.e.,
%  $\exists!  (s_i,a_i,s'_i) \in S\ \text{and}\  (s_{i'},a_{i'},s'_{i'}) \in S'$, $ (s_i,a_i,s'_i) \neq  (s_{i'},a_{i'},s'_{i'}) $. 
%  They are indexed by $i$ and $i'$ for the convenience.
%  If a randomized mechanism $\mathcal{M}$ satisfies $ (\epsilon,\delta) $-DP in this setting, we call $\mathcal{M}$ satisfies $ (\epsilon,\delta) $-SASDP.
% \end{defn}

\section{Algorithm}

 We now present our algorithm, differentially private temporal difference learning (DPTD), detailed in Algorithm \ref{alg: DPTD}.

DPTD takes the adaptive step size $\nu_t$, and the constant parameters $\kappa$, $\eta$, $\alpha$, $\beta$ as the input.
These constant parameters are used to adjust the step sizes when updating the 
primal and dual variables with the
momentum-based gradient estimators. 
At each iteration, DPTD performs stochastic gradient descent and ascent of $\theta_t$ and $\omega_t$ respectively and then projects the updates to the feasible sets $\Theta$ and $\Omega$ (line \ref{algo:sgda}).
% After the projection step,
% DPTD obtains $\theta_{t+1}$ by taking a step from $\theta_{t}$ to $\widetilde{\theta}_{t+1}$ with step size $\nu_t$ 
% and obtains $\omega_{t+1}$  in the similar way (line 4). 
Then
DPTD obtains $\theta_{t+1}$ by taking a step from $\theta_{t}$ to $\widetilde{\theta}_{t+1}$ with step size $\nu_t$ 
and obtains $\omega_{t+1}$  in the similar way (line \ref{algo:moving_average}). 
% Specifically, $p'_{t+1}$ is taken as the convex combination of the  differentially private gradient estimator $p_{t}$ released at the $t$-th iteration and the stochastic gradient associated with $\xi_{t+1}$, which can be further viewed as the  exponentially weighted average over the gradients of history data points.
% The gradient estimator on the dual side  $d'_{t+1}$ is constructed in the same manner. 
Then DPTD computes the stochastic momentum-based gradient estimator $p'_{t+1}$ and $d'_{t+1}$ (line \ref{algo:momentum}), which are perturbed via the Gaussian noises with moderate variances to achieve DP (line \ref{algo:Gaussian}).
One of the main technical challenges lie in controlling privacy noises for primal and dual sides simultaneously. The common two-timescale framework implies an imbalance of privacy noises on the two sides, hence leading to an inefficient convergence rate and an unnecessarily large privacy cost. To overcome this challenge, we employ a single-timescale framework via the momentum-based stochastic gradient descent ascent \cite{qiu2020single}, which despite being more complicated to analyze the simultaneous descent dynamics, achieves desirable utility and privacy guarantees.
% It is worth noting that devising the variance of noises in the formulation of nonlinear TD is still not trivial since 
% the noise on the primal side will also suppress
% the convergence of the dual variable and vice versa and
% hence the noises injected on both sides are coupled with each
% other. 
% However, devising the variance of noises in the formulation of nonlinear TD is still challenging since 
% the noise on the primal side will also suppress
% the convergence of the dual variable and vice versa and
% hence the noises injected on both sides are coupled with each
% other. 
% The choice of the variance of the Gaussian noises $\sigma_{t+1}$ in DPTD is crucial to achieve  is detailed in Section \ref{sec:theo_res}.
% The choice of the variance of the Gaussian noises $\sigma_{t+1}$ is the key
% for DPTD to achieve DP while keeping the fast convergence rate
% so as to simultaneously guarantee privacy and utility, which is detailed in Section \ref{sec:theo_res}.
The other key challenge is the choice of the variance of the Gaussian noises $\sigma_{t+1}$, which is detailed in Section \ref{sec:theo_res}.

\begin{algorithm}[tb]
\caption{Differentially Private Temporal Difference Learning}
\label{alg: DPTD}
\textbf{Input}: $\nu_t>0$, $\kappa>0$, $\eta>0$, $\alpha>0$, $\beta>0$, $\theta_0\in \Theta$, $\omega_0\in \Omega$.\\
\textbf{Initialize}: 
\begin{align*}
u_0^p\sim N(0,\sigma_0^2\mathbf{I_d})\,,\ p_0=\nabla_\theta f(\theta_0,\omega_0;\xi_0)+u_0^p\,,
\end{align*}
\begin{align*}
 u_0^d\sim N(0,\sigma_0^2\mathbf{I_d})\,,\ d_0= \nabla_\omega f(\theta_0,\omega_0;\xi_0)+u_0^t\,.
\end{align*}
\begin{algorithmic}[1] %[1] enables line numbers
\FOR{$t=1,2,\cdots$}
    \STATE Perform stochastic gradient descent and ascent and project the updates to the feasible sets: \label{algo:sgda}
    \begin{align*}
        \widetilde \theta_{t+1} = \mathcal{P}_\Theta(\theta_t -\kappa p_t)\,,\
        \widetilde \omega_{t+1} = \mathcal{P}_{\Omega} (\omega_t + \eta d_t)\,.
    \end{align*}
    \STATE 
    Update the primal variable $\theta_{t+1}$ and dual variable $\omega_{t+1}$: \label{algo:moving_average}
    \begin{align*}
        \theta_{t+1} = \theta_t + \nu_t (\widetilde\theta_{t+1} - \theta_t)\,,\
        \omega_{t+1} = \omega_t + \nu_t (\widetilde\omega_{t+1} - \omega_t)\,.
    \end{align*}
    \STATE Compute the momentum-based gradient estimator on primal side $p'_{t+1}$ and on dual side  $d'_{t+1}$: \label{algo:momentum}
    \begin{align*}
    p'_{t+1} &=  (1-\alpha\nu_t)p_t +\alpha\nu_t\nabla_\theta f(\theta_{t+1},\omega_{t+1};\xi_{t+1})\,,\\
    d'_{t+1} &= (1-\beta\nu_t)d_t +\beta\nu_t\nabla_\omega f(\theta_{t+1},\omega_{t+1};\xi_{t+1})\,.
    \end{align*}
    \STATE Draw the Gaussian noises with variance $\sigma_{t+1}$: \label{algo:Gaussian}
    \begin{align*}
        u_{t+1}^p\sim N(0,\sigma_{t+1}^2\mathbf{I_d})\,,\
        u_{t+1}^d\sim N(0,\sigma_{t+1}^2\mathbf{I_d})\,,
    \end{align*}
    and release the differentially private gradient estimator $p_{t+1}$, $d_{t+1}$:
    \begin{align*}
    p_{t+1} =  p'_{t+1}+u_{t+1}^p\,,
    d_{t+1} = d'_{t+1}+u_{t+1}^d\,.
    \end{align*}
\ENDFOR
\STATE \textbf{Output:} ($\bar{\theta},\bar{\omega}$) sampled uniformly at random  from $\{(\theta_t,\omega_{t})\}^{T-1}_{t=0}$.
\end{algorithmic}
\end{algorithm}

\section{Theoretical Results}\label{sec:theo_res}
% In this section, we first provide the main theoretical results of privacy and utility in terms of DP under state-action-state defined in Definition \ref{def: SASDP2}. 
% Then we give further discussions on the theoretical results of privacy and utility in terms of DP under trajectory defined in Definition \ref{def: TDP2}.
In this section, we provide the main theoretical results of privacy and utility with DP under state-action-state. 
% We also give further discussions on the theoretical results of privacy and utility in terms of DP under trajectory defined in Definition \ref{def: TDP2} in Appendix \ref{sec:under_trajectory}.
% We also give further discussions on the theoretical results of privacy and utility in terms of DP under trajectory defined in Definition \ref{def: TDP2} in Appendix .
The presentation and discussions with DP under trajectory is deferred to Appendix \ref{sec:under_trajectory}.

% \subsection{Privacy and Utility under State-Action-State}
\subsection{Privacy Analysis}
Theorem \ref{theorem: privacy guarantee} provides a privacy guarantee in terms of DP under  state-action-state for Algorithm \ref{alg: DPTD}. 
\begin{theorem}[Privacy under state-action-state]\label{theorem: privacy guarantee}
Consider the DP defined in Definition \ref{def: SASDP2}.
Under Assumption \ref{asmp: G-lispchitz}, \ref{asmp: feasible convex set}, \ref{asmp: iid sampling}, given the total number of iterations $T$, for any $\delta>0$ and the privacy budget $\epsilon$, Algorithm \ref{alg: DPTD} satisfies $(\epsilon,\delta) $-DP under state-action-state with the variance
\begin{align*}
\sigma^2_t = \frac{14G^2T\alpha'}{n^2\left(\epsilon - \frac{\log(1/\delta)}{\alpha'-1}\right)}=\frac{14G^2T\alpha'}{n^2\beta'\epsilon}\,,\ \forall t\geq 0\,,
\end{align*}
where $\sigma'^2=\frac{\sigma^2_t}{4G^2}\geq 0.7$, $\alpha'=\frac{\log(1/\delta)}{(1-\beta') \epsilon}+1\leq 2\sigma^2\log(\frac{n}{\alpha'(1+\sigma'^2) }) /3+1$ and $\beta'\in (0,1) $.
\end{theorem}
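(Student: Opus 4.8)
The goal is to calibrate the per-iteration Gaussian noise variance $\sigma_t^2$ so that the entire trajectory of released gradient estimators $\{(p_t, d_t)\}$ is $(\epsilon,\delta)$-DP under the state-action-state neighboring relation. My plan is to proceed via R\'enyi differential privacy (RDP) and the subsampled Gaussian mechanism, then convert back to $(\epsilon,\delta)$-DP.

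\textbf{Step 1: Sensitivity of a single iteration's gradient query.} Fix an iteration $t$. The quantity released at iteration $t+1$ is $p_{t+1} = p'_{t+1} + u^p_{t+1}$ and $d_{t+1} = d'_{t+1} + u^d_{t+1}$, where $p'_{t+1}$ depends on the freshly sampled transition $\xi_{t+1}$ only through the term $\alpha\nu_t \nabla_\theta f(\theta_{t+1},\omega_{t+1};\xi_{t+1})$ (and symmetrically on the dual side). Two neighboring trajectories $S,\hat S$ differ in a unique index $i$, so for a given $t$ the query differs only on the event that $\xi_{t+1}$ is the differing transition. Conditioned on that event, Assumption~\ref{asmp: G-lispchitz} (stochastic $G$-Lipschitzness of $f$) bounds $\|\nabla_\theta f(\theta,\omega;\xi) - \nabla_\theta f(\theta,\omega;\xi')\|$ and $\|\nabla_\omega f(\theta,\omega;\xi) - \nabla_\omega f(\theta,\omega;\xi')\|$ each by (a constant multiple of) $G$; combining the primal and dual perturbations, the $\ell_2$-sensitivity of the concatenated query $(\nabla_\theta f, \nabla_\omega f)$ is at most $2G$ (hence the factor $4G^2$ appearing in $\sigma'^2 = \sigma_t^2/(4G^2)$). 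I would state this carefully, tracking that the $\alpha\nu_t$ and $\beta\nu_t$ factors are absorbed because the momentum recursion only rescales, and that since the noise variance is constant across $t$ we can treat each step as a Gaussian mechanism of sensitivity $2G$ on the raw gradient.

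\textbf{Step 2: Subsampling amplification and per-step RDP.} Because Assumption~\ref{asmp: iid sampling} says the algorithm samples transitions uniformly from the length-$n$ trajectory, each iteration touches the differing transition with probability $1/n$. So each iteration is a $(1/n)$-subsampled Gaussian mechanism with noise multiplier $\sigma' = \sigma_t/(2G)$. Invoking the RDP bound for the subsampled Gaussian mechanism (Mironov et al.\ / Wang et al.), each step satisfies $(\alpha', \rho_{\mathrm{step}})$-RDP with $\rho_{\mathrm{step}} \lesssim \frac{\alpha'}{n^2 \sigma'^2}$, valid in the regime $\alpha' \le \tfrac{2}{3}\sigma'^2 \log\!\big(\tfrac{n}{\alpha'(1+\sigma'^2)}\big) + 1$ — which is exactly the side condition stated in the theorem, and which must be checked to hold for the chosen parameters. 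The constant $14$ in the final $\sigma_t^2$ expression is the product of the sensitivity factor and the numerical constant in this RDP bound (roughly $2 \times \tfrac{7}{2}$).

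\textbf{Step 3: Composition over $T$ iterations and conversion to $(\epsilon,\delta)$-DP.} By the clean additivity of RDP under (adaptive) composition, the full algorithm is $(\alpha', T\rho_{\mathrm{step}})$-RDP, i.e.\ $\big(\alpha', \tfrac{c\,\alpha' T}{n^2\sigma'^2}\big)$-RDP for the appropriate constant $c$. Then I apply the standard RDP-to-DP conversion: an $(\alpha',\rho)$-RDP mechanism is $\big(\rho + \tfrac{\log(1/\delta)}{\alpha'-1}, \delta\big)$-DP for any $\delta\in(0,1)$. Setting $\rho + \tfrac{\log(1/\delta)}{\alpha'-1} = \epsilon$ and solving for $\sigma'^2$ yields $\sigma'^2 = \tfrac{c\,\alpha' T}{n^2(\epsilon - \log(1/\delta)/(\alpha'-1))}$, and writing $\beta' \in (0,1)$ for the fraction of the budget $\epsilon$ absorbed by the $\log(1/\delta)/(\alpha'-1)$ term — so that $\alpha' - 1 = \tfrac{\log(1/\delta)}{(1-\beta')\epsilon}$ and $\epsilon - \log(1/\delta)/(\alpha'-1) = \beta'\epsilon$ — recovers precisely the two displayed forms of $\sigma_t^2$. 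Finally I would verify consistency of the stated numerical conditions $\sigma'^2 \ge 0.7$ and the upper bound on $\alpha'$, which together delimit the parameter regime in which the subsampled-Gaussian RDP bound from Step 2 is applicable.

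\textbf{Main obstacle.} The delicate point is Step 1 combined with Step 2: arguing rigorously that the momentum recursion does not inflate sensitivity across iterations. Naively, $p_{t+1}$ depends on \emph{all} past transitions through $p_t$, so one might fear the sensitivity compounds. The resolution is that we account for each transition $\xi_j$ exactly once — at the single iteration where it is drawn — and treat the releases as an adaptive composition where the adversary (and the algorithm's own state $\theta_t,\omega_t,p_t,d_t$) may depend arbitrarily on previous noisy outputs; RDP composition then handles the dependence cleanly without any sensitivity blow-up. Getting the bookkeeping exactly right — in particular that a single differing $\xi_i$ affects the output of only one step's fresh-gradient term, and that the contractive momentum mixing of \emph{already-privatized} quantities is post-processing — is where the care is needed, and where the interplay with the i.i.d./uniform-sampling assumption (Assumption~\ref{asmp: iid sampling}) is essential.
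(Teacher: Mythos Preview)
Your proposal is correct and follows essentially the same route as the paper: bound the per-step $\ell_2$-sensitivity by $2G$ via Assumption~\ref{asmp: G-lispchitz} (the paper disposes of the $\alpha\nu_t$ factor simply by using $\alpha\nu_t\le 1$, which is the precise version of your ``absorbed by rescaling'' remark), invoke the subsampled-Gaussian RDP bound with rate $1/n$, compose adaptively over $T$ steps, and convert RDP to $(\epsilon,\delta)$-DP. The only packaging difference is that the paper analyzes the primal and dual mechanisms separately and argues the dual side ``similarly,'' whereas you treat the concatenated $(\nabla_\theta f,\nabla_\omega f)$ query at once---your joint treatment is arguably cleaner for obtaining a single $(\epsilon,\delta)$ guarantee on the full release.
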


\begin{proof}[Proof Sketch of Theorem \ref{theorem: privacy guarantee}]
% We only provide the proof sketch of the gradient estimator on primal side and the proof sketch of
Consider the randomized mechanisms on primal side induced by the update rule of the gradient estimator in Algorithm \ref{alg: DPTD}
\begin{align*}
    \mathcal{M}_{t}^p =
    \begin{cases}
    \nabla_\theta f (\theta_{0},\omega_{0};\xi_{0}) +u_{0}^p &t=0\\
      (1-\alpha\nu_{t-1}) p_{t-1} +\alpha\nu_{t-1}\nabla_\theta f (\theta_{t},\omega_{t};\xi_{t}) +u_{t}^p &t> 0\,.
    \end{cases}
\end{align*}
% We show $\mathcal{M}_{t}^p$ is RDP and then transform the RDP guarantee to RDP guarantee using Lemma \ref{lemma: RDP_to_DP}. 
% def: l2-sensitivity
We show $\mathcal{M}_{t}^p$ satisfies RDP and the privacy guarantee of DP could be transformed from privacy guarantee of RDP using Lemma \ref{lemma: RDP_to_DP}. 
Notice that
$\mathcal{M}_{t}^p$ is the composition of a series of randomized mechanisms $(\mathcal{G}^p_0,\cdots,\mathcal{G}^p_t)$ where
% \mathcal{G}_t^p = \alpha\nu_{t-1}\nabla_\theta f  (\theta_{t},\omega_{t};\xi_{t}) +u_{t}^p\,.
\begin{align*}
    \mathcal{G}_t^p =
    \begin{cases}
       \nabla_\theta f  (\theta_{0},\omega_{0};\xi_{0}) +u_{0}^p &t=0\\
      \alpha\nu_{t-1}\nabla_\theta f  (\theta_{t},\omega_{t};\xi_{t}) +u_{t}^p &t>0\,.
    \end{cases}
\end{align*}
It remains to show that $\mathcal{G}_t^p$ achieves RDP so as to show $\mathcal{M}_{t}^p$ achieves RDP by Lemma \ref{lemma: composition of RDP}.
To this end, in the case when $t=0$, we first consider the Gaussian mechanism $\widetilde{\mathcal{G}}_0^p =\sum_{i=0}^{n-1} \nabla_\theta f  (\theta_{0},\omega_{0};\xi_{i}) +u_{0}^p$ which takes the whole trajectory $\tau$ as the input instead of one state transition $\xi_0$ of $\tau$.
Gaussian mechanism $\widetilde{\mathcal{G}}_0^p$ consists of the Gaussian noise $u_{0}^p$ and the query $\widetilde{q}_0^p  (S) =\sum_{i=0}^{n-1}\nabla_\theta f  (\theta_{0},\omega_{0};\xi_{i})$ whose $\ell_2$-sensitivity could be shown to satisfy $\widetilde{\Delta}_{0}^p\leq 2G$. 
Thus $\widetilde{\mathcal{G}}_0^p$ and $\mathcal{G}_0^p$ satisfy RDP by Lemma \ref{lemma: RDP subsampling transformation} if the variance of Gaussian noise $u_{0}^p$ takes the value as suggested in Theorem \ref{theorem: privacy guarantee}. 
In the similar manner, we can prove that $\mathcal{G}_t^p$ satisfies RDP for the case $t>0$.
The proof sketch of the randomized mechanisms on the dual side is similar to that of the primal side.
\end{proof}

\subsection{Utility Analysis}
% \zhao{ }
We first introduce the utility metric to measure the  
nonconvex-strongly-concave optimization of TD learning and then present the 
utility analysis of our algorithm.

\paragraph{Utility Metric} To simultaneously measure the convergence on the primal and dual sides of our algorithm, we adopt the following metric to measure the utility and similar metrics are also adopted in the previous works \cite{WaiHYWT19,ghadimi2020single,qiu2020single}, which is 
% \zhao{tbd}
%  \begin{align*}
% \mathfrak{M}_{t}:=\kappa^{-1}\left\|\widetilde{\theta}_{t+1}-\theta_{t}\right\|+\left\|\nabla_{\theta} F\left(\theta_{t}, \omega_{t}\right) -p_{t}\right\|+L_{F}\left\|\omega_{t}-\omega^{*}\left(\theta_{t}\right) \right\|\,.
%  \end{align*}
% $
% \mathfrak{M}_{t}:=\kappa^{-1}\left\|\widetilde{\theta}_{t+1}-\theta_{t}\right\|+\left\|\nabla_{\theta} F\left(\theta_{t}, \omega_{t}\right) -p_{t}\right\|+L_{F}\left\|\omega_{t}-\omega^{*}\left(\theta_{t}\right) \right\|\,.
% $
\begin{align}\label{eq:metric}
\mathfrak{M}(\theta_t,\omega_t):=&\kappa^{-1}\left\|\widetilde{\theta}_{t+1}-\theta_{t}\right\|+\left\|\nabla_{\theta} F\left(\theta_{t}, \omega_{t}\right) -p_{t}\right\|\notag\\
&+L_{F}\left\|\omega_{t}-\omega^{*}\left(\theta_{t}\right) \right\|\,.
\end{align}
% \zhao{case 1: stationary point case 2: local minimizer at boundary}
The first two terms  of RHS in Eq. (\ref{eq:metric}) are used to measure the convergence of the primal variable $\theta$. 
If the first two terms $\kappa^{-1}\left\|\widetilde{\theta}_{t+1}-\theta_{t}\right\|+\left\|\nabla_{\theta} F\left(\theta_{t}, \omega_{t}\right) -p_{t}\right\|\approx 0$, then $\nabla_{\theta} F\left(\theta_{t}, \omega_{t}\right) \approx p_{t}$ and $\widetilde{\theta}_{t+1}\approx\theta_{t}$, which further indicates that $\widetilde{\theta}_{t+1}=\mathcal{P}_\Theta(\theta_t-\kappa p_t)\approx\mathcal{P}_\Theta(\theta_t-\kappa \nabla_{\theta} F\left(\theta_{t}, \omega_{t}\right))\approx \theta_t$ due to the update rules in Algorithm \ref{alg: DPTD}. In this circumstance, $\theta_t$ will be a stationary point if $\nabla_{\theta} F\left(\theta_{t}, \omega_{t}\right)=0$ and a local minimizer on the boundary of $\Theta$ otherwise. In either situation, $\theta_t$ could be considered convergent in constrained nonconvex optimization \cite{ghadimi2020single,qiu2020single}. The convergence of $\omega_t$ to the optimal maximizer $\omega^\ast(\theta_t)$ is measured by the third therm of RHS in Eq. (\ref{eq:metric}).

% Let's consider two conditions that  the primal variable converges.
% \begin{itemize}
%     \item If $\theta_t$ converges to the global or local minimizer. By the update rule, we have
% \begin{align*}
%     \widetilde{\theta}_{t+1}&=\mathcal{P}_\Theta(\theta_t-\kappa p_t) \,,\\
% p_{t+1}&=  (1-\alpha\nu_t) p_t +\alpha\nu_t\nabla_\theta f(\theta_{t+1},\omega_{t+1};\xi_{t+1}) +u_{t+1}^p\,.
% \end{align*}
% Though $p_{t+1}$  is updated with the Gaussian noise, we currently ignore the noise. Since $\nabla_\theta f(\theta_{t+1},\omega_{t+1};\xi_{t+1}) =0$ in this condition, we have 
% \zhao{maybe $\nabla_\theta F(\theta_{t+1},\omega_{t+1}) =0$ ?}
% \begin{align*}
% p_{t+1} =  (1-\alpha\nu_t) p_t +u_{t+1}^p\approx (1-\alpha\nu_t) p_t \,.
% \end{align*}
% Then we know $p_t$ converges to $0$ by the geometric series and $\widetilde\theta_{t+1}\approx\theta_t$. Thus the two terms converge to $0$. \zhao{i will revise here}
% \item If $\theta_t$ converges to one point at the border of the convex set $\Theta$. Then $p_t$ does not converge to $0$ but it still holds that $p_{t+1}\approx p_t$. And $\widetilde {\theta}_{t+1}$ also converges by the projection. Thus the two terms sill converge to $0$.
% \end{itemize}

% Under this metric, we present the utility under state-action-state achieved by our algorithm with the specified Gaussian noises in Theorem \ref{theorem: privacy guarantee}. 
Under this metric, we present the utility under state-action-state achieved by our algorithm
in the following theorem, whose proof is deferred to \ref{app:sec:pf_theorem_utility}, with the specified Gaussian noises in Theorem \ref{theorem: privacy guarantee}.

\begin{theorem}[Utility under state-action-state]
\label{theorem: utility}
% Under Assumptions \ref{asmp: differentiable}, \ref{asmp: existence of solution}, \ref{asmp:wlipschitz},  \ref{asmp:bounded}, setting the parameters $\alpha = \beta = 3$, $0<\eta \leq \mu /(4L^2_F) $, $0<\kappa\leq \eta \mu^2/(9L^2_F) $, and $\nu_t = 1/4(t+b) ^\frac{1}{2}$
% with $b\geq \max\{(2\kappa L_F^2/\mu) ^2,3\}$, with the updating rules in Algorithm \ref{alg: DPTD} and the Gaussian variance  in Theorem \ref{theorem: privacy guarantee}, we set 
% \begin{align*}
% T = \frac{Cn\epsilon}{\sqrt{d\log(1/\delta) }}\,,
% \end{align*}
Under Assumptions \ref{asmp: existence of solution}, \ref{asmp:wlipschitz}, \ref{asmp: G-lispchitz}, \ref{asmp: feasible convex set}, \ref{asmp: strongly concave}, \ref{asmp: iid sampling}, if we set the parameters $\alpha = \beta = 3$, $0<\eta \leq \mu /(4L^2_F) $, $0<\kappa\leq \eta \mu^2/(9L^2_F) $, $\nu_t = 1/4(t+b) ^\frac{1}{2}$ with $b\geq \max\{(2\kappa L_F^2/\mu) ^2,3\}$ and choose the number of iterations $T = \frac{Cn\epsilon}{\sqrt{d\log(1/\delta) }}$ where $C$ is a constant, then with the Gaussian noises in Theorem \ref{theorem: privacy guarantee}, the output of Algorithm \ref{alg: DPTD} satisfies
% \begin{align*}
% \frac{1}{T}\sum_{t=0}^{T-1}\mathbb{E}\left\|\mathfrak{M}_t\right\|\leq \widetilde{\mathcal{O}}\left(\frac{(d\log(1/\delta) ) ^\frac{1}{8}}{(n\epsilon) ^\frac{1}{4}}\right) \,.
% \end{align*}
\begin{align*}
\mathbb{E}\left\|\mathfrak{M}(\bar{\theta},\bar{\omega})\right\|\leq \widetilde{\mathcal{O}}\left(\frac{(d\log(1/\delta) ) ^\frac{1}{8}}{(n\epsilon) ^\frac{1}{4}}\right) \,.
\end{align*}
% $$
% \frac{1}{T}\sum_{t=0}^{T-1}\mathbb{E}\left\|\mathfrak{M}_t\right\|\leq \widetilde O(\frac{(d\log(1/\delta) ) ^\frac{1}{8}}{(n\epsilon) ^\frac{1}{4}}) 
% $$ 
% \ze{with the gradient complexity equal to $2(T+1)=\mathcal{O}(\frac{n\epsilon}{\sqrt{d\log(1/\delta) }})$}
% \ze{with the gradient complexity equal to }
Moreover, the  total gradient complexity of Algorithm \ref{alg: DPTD} is $2(T+1)=\mathcal{O}\left(\frac{n\epsilon}{\sqrt{d\log(1/\delta) }}\right)$.
\end{theorem}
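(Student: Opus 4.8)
The plan is to run a potential-function (Lyapunov) analysis of the single-timescale momentum SGDA recursion, tracking three coupled quantities: the primal progress $\kappa^{-1}\|\widetilde\theta_{t+1}-\theta_t\|$ (equivalently a gradient-mapping proxy), the primal gradient-estimation error $e^p_t := \nabla_\theta F(\theta_t,\omega_t)-p_t$, and the dual tracking error $\Delta_t := \|\omega_t-\omega^*(\theta_t)\|$. First I would introduce a composite potential of the form
\begin{align*}
\Phi_t \;=\; J(\theta_t) - J(\theta^*) \;+\; c_1\,\|e^p_t\|^2 \;+\; c_2\,\|e^d_t\|^2 \;+\; c_3\,\Delta_t^2\,,
\end{align*}
for suitable constants $c_1,c_2,c_3>0$ to be fixed in terms of $L_F,\mu,\kappa,\eta,\alpha,\beta$ (here $J(\theta)=\max_\omega F(\theta,\omega)$ is $L_J$-smooth by Assumption \ref{asmp:wlipschitz} and strong concavity, with $\omega^*(\cdot)$ Lipschitz). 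The goal is a one-step inequality $\E[\Phi_{t+1}\mid \mathcal F_t] \le \Phi_t - a\,\nu_t\,\E[\mathfrak M(\theta_t,\omega_t)^2\mid\mathcal F_t] + b_1\nu_t^2 + b_2\nu_t^2\,d\,\sigma_t^2$, where the last term is the price of the injected Gaussian noise of variance $\sigma_t^2$ per coordinate. This mirrors the noiseless single-timescale analysis of \citet{qiu2020single}, with the extra $d\sigma_t^2$ contributions to be carried through every place a momentum gradient estimator appears.

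Next I would handle each term of the descent inequality separately. (i) Smoothness of $J$ plus the update $\theta_{t+1}=\theta_t+\nu_t(\widetilde\theta_{t+1}-\theta_t)$ gives $J(\theta_{t+1})\le J(\theta_t)-\Theta(\nu_t\kappa^{-1})\|\widetilde\theta_{t+1}-\theta_t\|^2 + \Theta(\nu_t)\|e^p_t\|^2 + L_F\Theta(\nu_t)\Delta_t\cdot(\text{primal step}) + O(\nu_t^2)$, where the cross terms are absorbed by Young's inequality into the $\|e^p_t\|^2$ and $\Delta_t^2$ budgets. (ii) The momentum recursion $p_{t+1}=(1-\alpha\nu_t)p_t+\alpha\nu_t\nabla_\theta f(\theta_{t+1},\omega_{t+1};\xi_{t+1})+u^p_{t+1}$ gives the standard variance-reduction bound: $\E\|e^p_{t+1}\|^2 \le (1-\alpha\nu_t)\E\|e^p_t\|^2 + O(\nu_t^2)\big(\text{movement of }(\theta,\omega)\big) + O(\alpha^2\nu_t^2)\sigma^2_{\text{grad}} + d\,\sigma_{t+1}^2$, using Assumption \ref{asmp: G-lispchitz} (bounded stochastic gradients $\Rightarrow$ bounded variance $\sigma_{\text{grad}}^2\le G^2$) and $L_F$-smoothness to convert parameter movement into $\nu_t^2(\|\widetilde\theta_{t+1}-\theta_t\|^2 + \|\widetilde\omega_{t+1}-\omega_t\|^2)$. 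The $(1-\alpha\nu_t)$ contraction against the $-a\nu_t$ descent is exactly why $\alpha=\beta=3$ and the step-size schedule $\nu_t=1/4(t+b)^{1/2}$ are chosen. (iii) Dual tracking: since $F(\theta,\cdot)$ is $\mu$-strongly concave and $\omega^*$ is $(L_F/\mu)$-Lipschitz in $\theta$, an ascent step with stepsize $\eta\le\mu/(4L_F^2)$ contracts $\Delta_{t+1}^2\le(1-\Theta(\eta\mu\nu_t))\Delta_t^2 + O(\nu_t^2/\eta)\cdot(\text{primal movement}) + O(\nu_t^2)\E\|e^d_t\|^2 + d\sigma_{t+1}^2$; the constraint $\kappa\le\eta\mu^2/(9L_F^2)$ ensures the primal does not move fast enough to break this contraction. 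Summing the per-step inequality, telescoping $\Phi_t$, and dividing by $\sum_{t=0}^{T-1}\nu_t = \Theta(\sqrt T)$ yields
\begin{align*}
\frac{1}{T}\sum_{t=0}^{T-1}\E[\mathfrak M(\theta_t,\omega_t)^2] \;\le\; \widetilde O\!\left(\frac{1}{\sqrt T}\right) + \widetilde O\!\left(d\,\sigma^2\right)\,,
\end{align*}
where $\sigma^2=\sigma_t^2$ is the (time-uniform) variance from Theorem \ref{theorem: privacy guarantee}, i.e. $\sigma^2 = \widetilde\Theta(G^2 T/(n\epsilon)^2)\cdot\log(1/\delta)$ up to the $\alpha',\beta'$ factors. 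Since $\bar\theta,\bar\omega$ are sampled uniformly from the iterates, Jensen's inequality gives $\E\|\mathfrak M(\bar\theta,\bar\omega)\| \le \big(\tfrac1T\sum_t\E[\mathfrak M(\theta_t,\omega_t)^2]\big)^{1/2} \le \widetilde O(T^{-1/4}) + \widetilde O(\sqrt d\,\sigma)$. Plugging $\sigma^2=\widetilde\Theta(G^2 T\log(1/\delta)/(n\epsilon)^2)$ gives $\widetilde O(T^{-1/4}) + \widetilde O(\sqrt{dT\log(1/\delta)}/(n\epsilon))$, and optimizing over $T$ — the two terms balance at $T \asymp (n\epsilon)/\sqrt{d\log(1/\delta)}$, exactly the stated choice — yields the bound $\widetilde O\big((d\log(1/\delta))^{1/8}/(n\epsilon)^{1/4}\big)$. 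The gradient complexity is $2(T+1)$ since each iteration queries two stochastic gradients, giving the stated $\mathcal O(n\epsilon/\sqrt{d\log(1/\delta)})$.

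I expect the main obstacle to be step (ii)–(iii): correctly coupling the three error terms so that the contraction factors $(1-\alpha\nu_t)$, $(1-\beta\nu_t)$, $(1-\Theta(\eta\mu\nu_t))$ all dominate the cross-pollution terms that each recursion dumps into the others (primal movement feeds the dual error, the momentum error feeds the descent, etc.), while simultaneously keeping every new $d\,\sigma_{t+1}^2$ term from accumulating worse than $\sum_t \nu_t\cdot d\sigma^2 = \Theta(\sqrt T d\sigma^2)$ in the telescoped bound. This is where the specific constants $\alpha=\beta=3$, $\eta\le\mu/(4L_F^2)$, $\kappa\le\eta\mu^2/(9L_F^2)$, and the $b\ge\max\{(2\kappa L_F^2/\mu)^2,3\}$ offset all have to be verified to make the Young's-inequality bookkeeping close; getting the potential coefficients $c_1,c_2,c_3$ right is the crux. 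A secondary subtlety is that the momentum estimator is biased relative to $\nabla_\theta F$ because $f$ is evaluated at the just-updated $(\theta_{t+1},\omega_{t+1})$ rather than $(\theta_t,\omega_t)$, so the "error" recursion must absorb an $L_F\nu_t\|(\widetilde\theta_{t+1},\widetilde\omega_{t+1})-(\theta_t,\omega_t)\|$ drift term — harmless since it is $O(\nu_t^2)$ after squaring but it must be tracked carefully alongside the noise term.
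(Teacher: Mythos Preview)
Your plan is essentially the paper's proof: the same Lyapunov potential (the paper takes $c_1=c_2=2\kappa/(\mu\eta)$, $c_3=10L_F^2\kappa/(\mu\eta)$), the same three descent ingredients (your (i)--(iii) are the paper's Lemmas~\ref{lemma: first in main proof}--\ref{lemma: forth in main proof}), and the same telescoping-then-Jensen finish.

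There is, however, a genuine bookkeeping error in your noise accounting. In step (ii) you correctly record that the momentum-error recursion picks up $+d\sigma_{t+1}^2$ with \emph{no} $\nu_t$ prefactor (the Gaussian noise $u^p_{t+1}$ is injected \emph{after} the $\alpha\nu_t$ scaling of the stochastic gradient, not before). But in your target one-step Lyapunov inequality you write this contribution as $b_2\nu_t^2\,d\sigma_t^2$, and later say it accumulates to $\sum_t\nu_t\,d\sigma^2$. Both are wrong: the potential drops by $\Theta(d\sigma^2)$ per step, so telescoping gives $\Theta(T\,d\sigma^2)$, and after dividing by $\nu_T T\asymp\sqrt T$ the squared-metric bound is
\[
\frac{1}{T}\sum_{t}\E[\mathfrak M_t^2]\;\le\;\widetilde O\!\left(\frac{1}{\sqrt T}\right)+\widetilde O\!\left(d\sigma^2\sqrt T\right),
\]
not $\widetilde O(d\sigma^2)$. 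After Jensen and substituting $\sigma^2\asymp T\log(1/\delta)/(n\epsilon)^2$, the noise term is $\widetilde O\big(T^{3/4}\sqrt{d\log(1/\delta)}/(n\epsilon)\big)$, not your $\widetilde O\big(T^{1/2}\sqrt{d\log(1/\delta)}/(n\epsilon)\big)$. With the corrected exponent the two terms genuinely balance at $T\asymp n\epsilon/\sqrt{d\log(1/\delta)}$, as stated. With your expression they do \emph{not} balance there (your second term is strictly smaller), and optimizing your bound would give $T\asymp(n\epsilon/\sqrt{d\log(1/\delta)})^{4/3}$ and utility $(d\log(1/\delta))^{1/6}/(n\epsilon)^{1/3}$ --- too good to be true. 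Your final answer came out right only because the understated noise and the mis-solved balance equation cancelled. (A minor related point: the extra $+d\sigma_{t+1}^2$ in your dual-tracking recursion (iii) is spurious --- the injected noise reaches $\Delta_{t+1}$ only through $\|e^d_t\|^2$, which already absorbs all past noise in $d_t$.)
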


\paragraph{Discussion}
Compared to \cite{qiu2020single},
% However, devising the variance of noises in the formulation of nonlinear TD is still challenging since 
% the noise on the primal side will also suppress
% the convergence of the dual variable and vice versa and
% hence the noises injected on both sides are coupled with each
% other. 
the main hardness to develop DPTD is to choose
a well-calibrated Gaussian noise, where good trade-offs are
needed to simultaneously achieve meaningful privacy and
the utility guarantee. It is worth noting that devising such a
Gaussian noise in our formulation is not trivial since injecting
the noises on the primal and dual side simultaneously is
required, where the noise on the primal side will also suppress
the convergence of the dual variable and vice versa. We address this challenge by Lemma \ref{lemma: forth in main proof}.
% Besides, it is also non-trivial to find a good trade-off point in terms of $T$ and $\sigma_t$ since the injected Gaussian noises influence the convergence significantly as shown in  Eq. (\ref{ineq: main proof 7}), 
% where the numerator of the RHS is dominated by $\sigma_t$ and $\nu_t$ and the denominator of the RHS is dominated by  and $T$ respectively.
Besides, to achieve a good trade-off between the privacy and utility guarantees, it is crucial to find a good trade-off point in terms of $T$ and $\sigma_t$.
This is also nontrivial  since the injected Gaussian noises influence the convergence significantly as shown in  Eq. (\ref{ineq: main proof 7}), 
where the numerator of the RHS is dominated by $\sigma_t$ and $\nu_t$ and the denominator of the RHS is dominated by $T$ respectively.

% ---------------------------------------  fig 1 ---------------------------------------------------
\begin{figure*}[h]
\begin{minipage}[t]{0.33\textwidth}
  \includegraphics[width=0.95\linewidth]{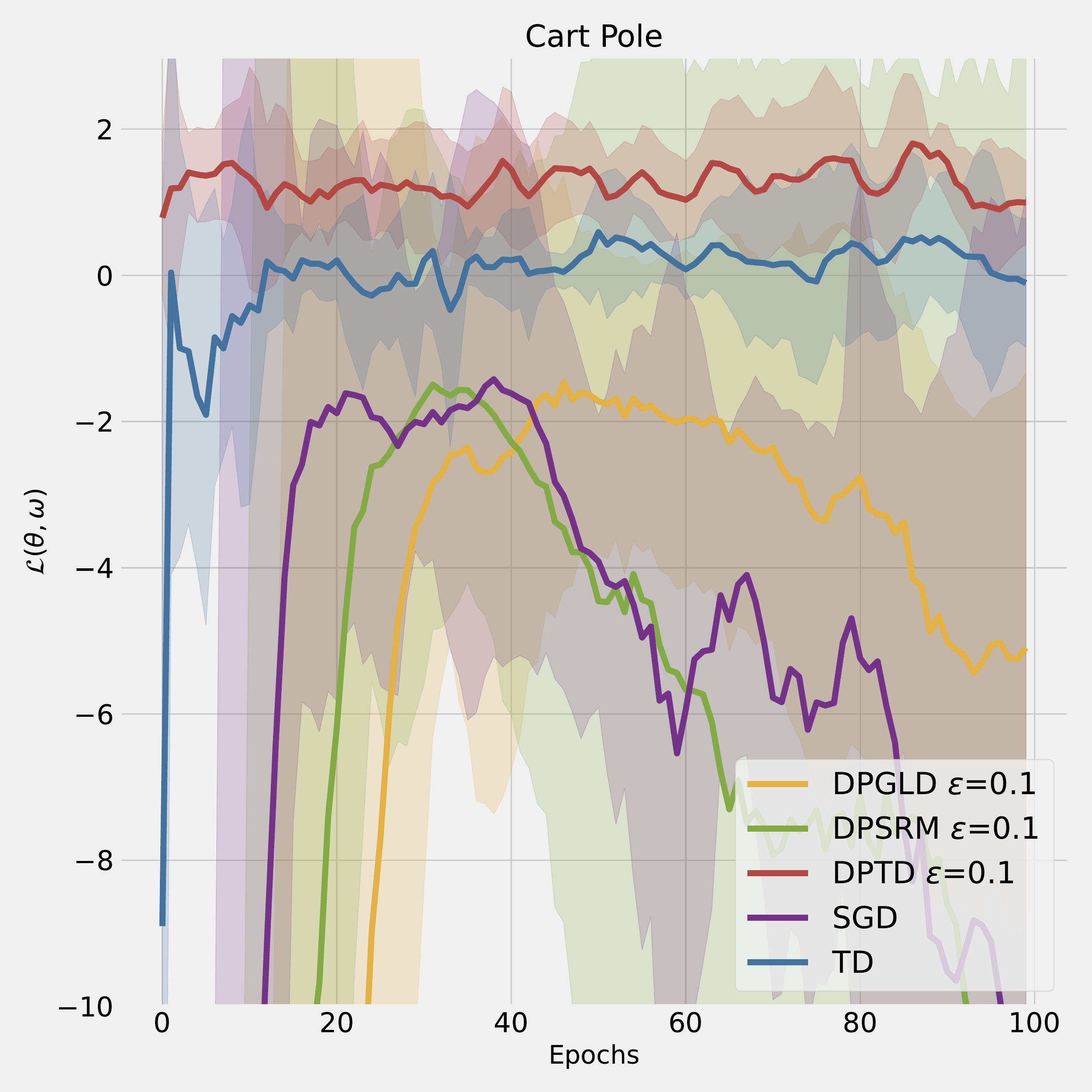}
%   \subcaption{}
   \label{fig:fig1_cp}
\end{minipage}%
\hfill
\begin{minipage}[t]{0.33\textwidth}
  \includegraphics[width=0.95\linewidth]{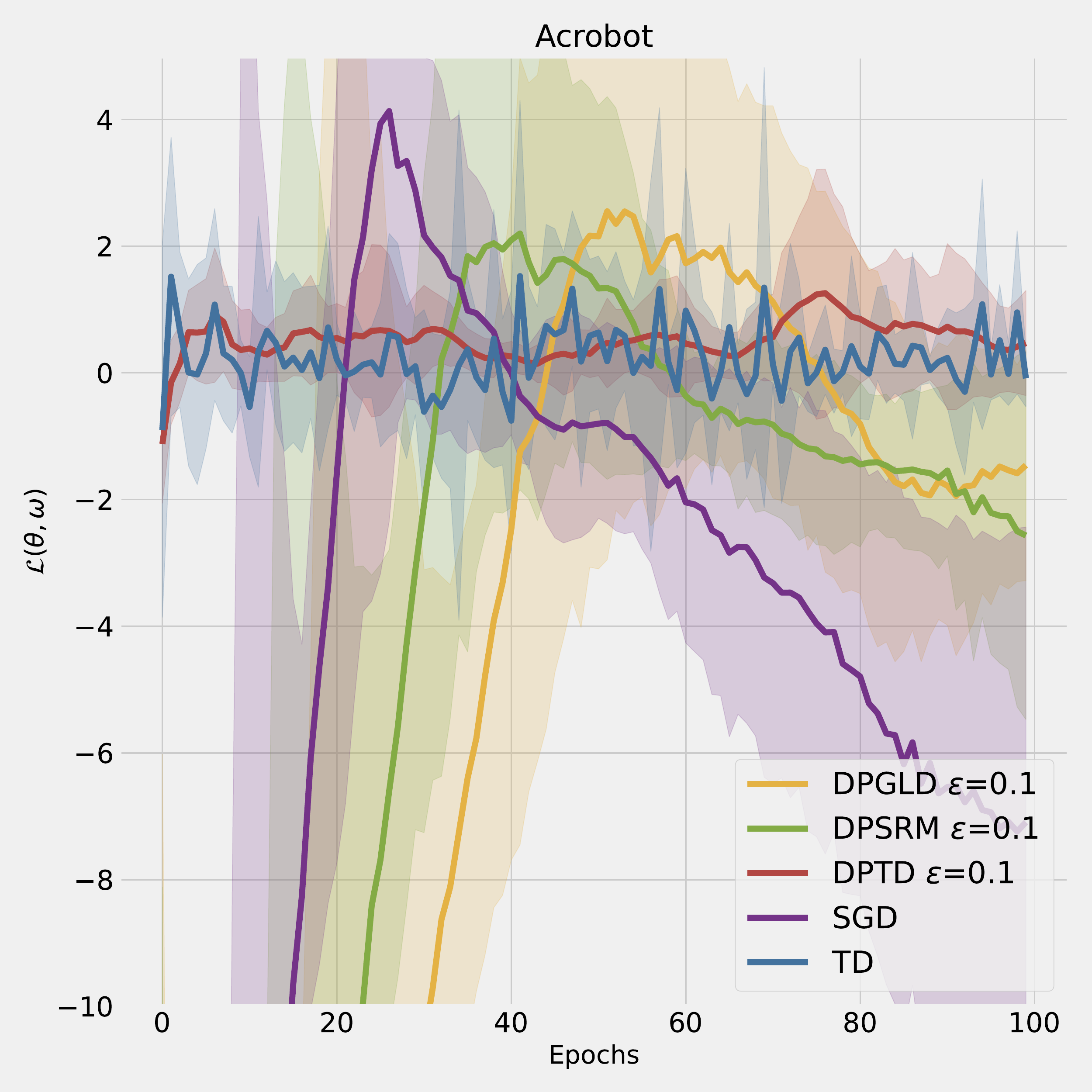}
%   \subcaption{}
   \label{fig:fig1_acro}
\end{minipage}%
\hfill
\begin{minipage}[t]{0.33\textwidth}
  \includegraphics[width=0.95\linewidth]{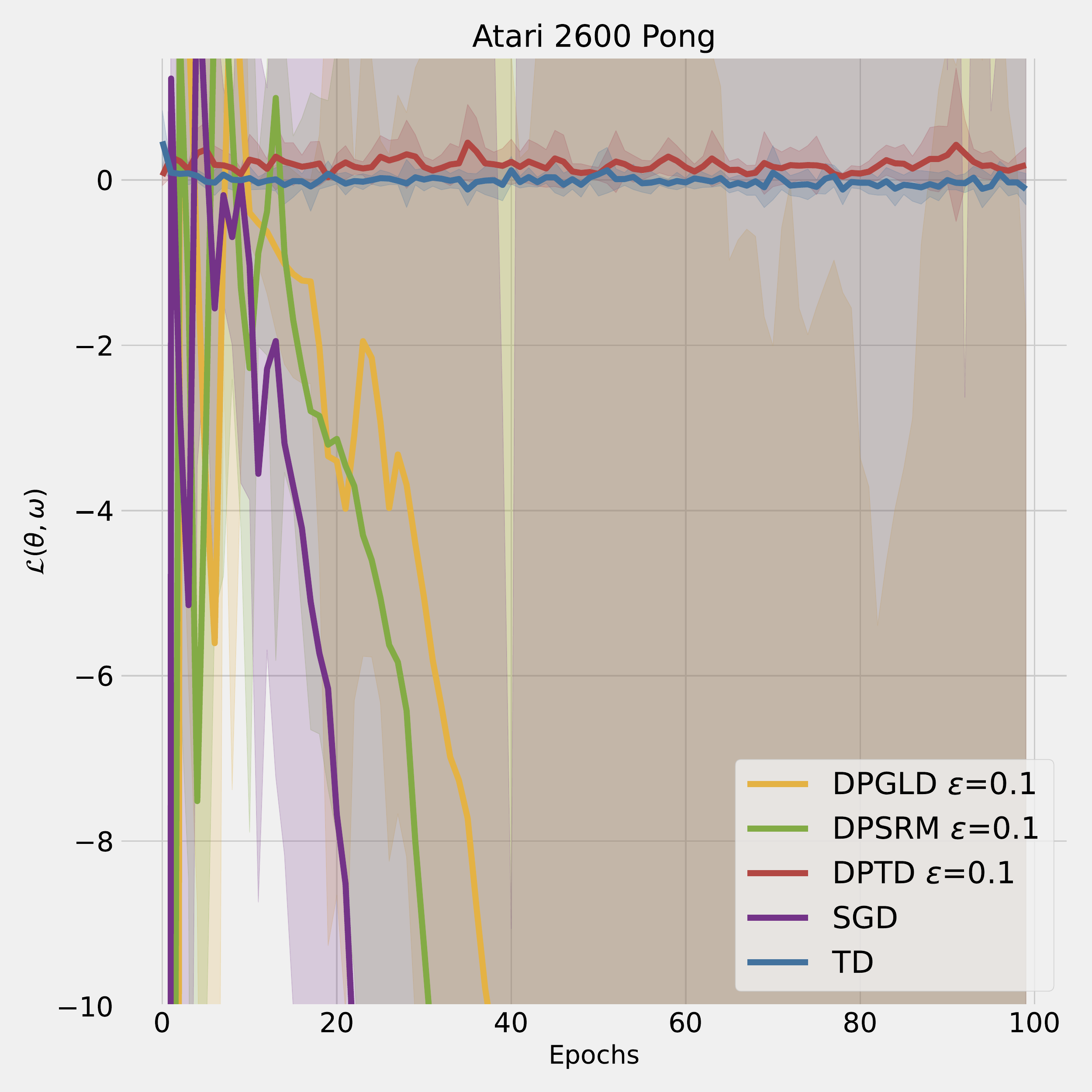}
%   \subcaption{}
   \label{fig:fig1_atari}
\end{minipage}
\caption{Compare DPTD with different algorithms (DPGLD, DPSRM, SGD, TD) on Cart Pole (a), Acrobot (b) and Atari 2600 Pong (c). 
% Figure \ref{fig: compare mountain car}, \ref{fig: compare cart pole} and \ref{fig: compare acrobot} 
Figure (a), (b) and (c) 
% These figures
show the value of objective function versus the number of epochs.  Each epoch has $5$ finite trajectories. The shadow denotes $1$-std. The learning curves are averaged over 10 random seeds and are generated without smoothing.}
\label{fig:fig_1}
\end{figure*}
% ---------------------------------------  fig 1 ---------------------------------------------------

\section{Experiments}
To validate the effectiveness of our algorithm, we conduct comprehensive empirical evaluations and present the experiment results in this section.

\subsection{Setting}

We justify our proposed algorithms empirically through classical control tasks: Cart Pole \cite{BartoSA83}, Acrobot \cite{GeramifardDKDH15} and Atari 2600 Pong
in  OpenAI Gym \cite{BrockmanCPSSTZ16} environments. 
All the algorithms 
are evaluated with data generated from Sarsa for Cart Pole and Acrobot and DQN for Atrari. To ensure that the generated trajectories are of good quality, we sample 5 trajectories for each environment.
% Since the high-dimensional inputs
% The maximum length of a trajectory depends on the simulation environment. For example, the length is at most $200$ in Mountain Car.
% \zhao{traj num \& avg. length}\ze{ok}

% DPTD outperforms DPGLD and DPSRM on three environments and converges fast. SGD performs slightly better than DPTD on Acrobot, which is accepted for SGD is not effected by the noise.

\subsection{Baselines}
% To evaluate the utility of DPTD, we select several algorithms as our baseline methods for comparison, including stochastic gradient descent (SGD) \zhao{cite}, differentially private gradient Langevin dynamics (DPGLD) \cite{wang2019differentially}, differentially private stochastic recursive momentum (DPSRM)  \cite{wang2019efficient}. We implement DPTD and other baseline algorithms on a local server using Pytorch 1.7.0  \cite {paszke2019pytorch} and Gym 0.18.0  \cite {brockman2016openai}. 
% The implementation of  other algorithms   under our setting is described as follows.  
Since our algorithm is the first differentially private temporal difference method, we have no relevant TD algorithms which can also achieve DP to compare. Thus,  we evaluate DPTD against several baseline methods in the DP ERM literature including differentially private gradient Langevin dynamics (DPGLD) \cite{wang2019differentially}, and differentially private stochastic recursive momentum (DPSRM)  \cite{wang2019efficient}. 
% To study the utility where there is no need to inject noises to achieve DP, 
% To study the utility where there is no need to achieve DP and thus no need to inject noises, 
% we also include stochastic gradient descent (SGD) \cite{GhadimiL13a,Nesterov04} as our baseline which is not injected by any noise and thus there is no privacy guarantee of it.
% We also include 
% the non-private TD as one of the baselines, which is a variant of our algorithm with no noises injected.
To study the utility where there is no need to achieve DP and thus no need to inject noises, 
we also include the non-private TD and 
stochastic gradient descent (SGD) \cite{GhadimiL13a,Nesterov04} 
as our baselines, which are not injected by any noise and thus there are no privacy guarantees of them.
% We also include 
%  as one of the baselines, which is a variant of our algorithm with no noises injected.
Though DPGLD, DPSRM, and SGD are designed for solving nonconvex optimization problems instead of nonconvex-strongly-concave primal-dual optimization problems, for a fair comparison, 
we also implement these baselines in the primal-dual form for comparing their performance with DPTD.
% Specifically, we implement these algorithms on primal side to minimize the objective function (\textit{i.e.}, Eq. (\ref{problem: min-max RL})) and also implement these algorithms on dual side to maximize the objective function. \zhao{to be fixed}
% \zhao{
Specifically, at each iteration, these algorithms are implemented to take a gradient descent step to  minimize the objective function (\textit{i.e.}, Eq. (\ref{problem: min-max RL})) on the primal side and simultaneously take a gradient ascent step to  maximize the objective function on the dual side. 
The value functions of all the algorithms are parameterized by a two-layer  fully-connected neural network with  $50$ hidden neurons and ELU activation function \cite{djork2016elu}.
Other implementation details are deferred to Appendix \ref{app:sec:exp_imple}.

% The parameters of all the algorithms are introduced as follows. The value function is parameterized by a two-layer  fully-connected neural network with  $50$ hidden neurons and ELU activation function \cite{djork2016elu}.
% The discount factor $\gamma$ is set to $0.95$ as in \cite{WaiHYWT19}. We set the feasible sets as  $\Theta=[-1,1]^d$ and $\Omega =[-1,1]^d$ where $d$ is the dimension of the neural network's parameters. For DPTD and TD, we set $\alpha=3$, $\beta=3$, $\kappa=2$, $\eta=2$, $\nu_t=\frac{1}{4(t+3)^{1/2}}$ as suggested in Theorem \ref{theorem: utility}.
% The step sizes of DPGLD and DPSRM are also taken as the suggested theoretical values in their original papers. For SGD, the step size 
% is maintained in the same order with other algorithms, ranging from $10^{-3}$ to $10^{-4}$.
% We implement all the algorithms in PyTorch 1.5.1  \cite{paszke2019pytorch} with Ubuntu 18.04 and an NVIDIA GTX 2080Ti GPU. 

% \zhao{subsections are needed here}
\subsection{Results and Analysis}
We report the experiment results in terms of utility in Figure \ref{fig:fig_1}, where the y-axis indicates the value of $\mathcal{L} (\theta, \omega)$ in Eq. (\ref{problem: min-max RL}) in the optimization process. 
% \zhao{tbd} 
The following conclusions are drawn in order. First, we observe that DPSRM, DPGLD and SGD can not converge well in all the three tasks, even though the gradients in SGD are not perturbed, since 
these methods
are not able to leverage the property of the primal-dual optimization problem inherently. In particular, one can see that the performance of DPSRM, DPGLD and SGD degrades heavily in Figure \ref{fig:fig_1} (c), perhaps due to the  high-dimensional state space  and the increasing complexity of the policies in the Atari task.
% This is reasonable since we do not inject any noise into SGD. 
Furthermore, DPTD converges faster in three tasks compared to DPSRM, DPGLD and SGD, which shows that DPTD has a better utility. 
% Finally, TD which is not injected by any noises has the best utility compared to all the other methods in all the tasks, whose values of Eq. (\ref{problem: min-max RL}) converge to $0$ rapidly.
Finally, TD without injected by any noises has the best utility compared to all the other methods in three tasks, whose values of Eq. (\ref{problem: min-max RL}) converge to $0$ rapidly. This is reasonable 
since TD is a non-private version of our algorithm.
% Besides, when the number of epochs is large, DPTD has a better utility compared to DPGLD and DPSRM under the same privacy budget and is still comparable with the non-noisy algorithm SGD.

% \zhao{tbd}
Furthermore, to study the impact of different privacy budgets on convergence, we conduct experiments to show the utility of DPTD with varying $\epsilon$ and report the experiment results in Figure \ref{fig: DP}, defered to Appendix \ref{app:sec:exp_varying_eps}. One can see that as the privacy parameter $\epsilon$ decreases from $100.0$ to $0.1$, the variance of Gaussian noises 
increases and the performance of DPTD begins to degrade,
matching our theoretical analysis.
% which matches our theoretical analysis. 
% This phenomenon matches our theoretical analysis. 
% The experiment results show that DPTD ensures a better utility under the same privacy budget compared to DPGLD and DPSRM and has the similar or better convergence rate compared to the non-noisy algorithm, \textit{i.e.}, SGD. 
% For the second set,  different values of $\epsilon$ in DPTD are tested shown in Figure \ref{fig: DP}. 
% As the privacy parameter $\epsilon$ decreases from $10$ to $0.1$, the variance of Gaussian noises 
% increases and the corresponding performance of DPTD becomes less stable. The phenomenon matches our theoretical analysis. 

% provide strong evidence that variance

% \zhao{the analysis to the exp results seems not insightful enough}

% As $\epsilon$ increases, the variance of the noises decreases, which leads to a better performance. 

\section{Conclusions}
% In this paper, we develop a rigorous and efficient algorithm for differentially private primal-dual temporal difference (TD) learning, which protects the critical state transitions in reinforcement learning (RL) and makes two neighboring trajectories indistinguishable. By adding Gaussian noise to the momentum-based gradient at each iteration, we achieve differential privacy (DP) and the utility upper bounded by $\widetilde{\mathcal{O}}(\frac{(d\log(1/\delta))^{1/8}}{(n\epsilon)^{1/4}})$. Another natural case that the full trajectory needs to be protected is also discussed, armed with the theoretical analysis \zhao{this sentence seems a bit confusing}. Comprehensive experiments corroborate our theoretical results and show the superiority of our proposed algorithm \zhao{some sentences seem to coincide with the abstract}. In our future work, we will further consider how to achieve DP under Markovian sampling \zhao{need to be more specific}. \ze{can we mention this? to be discussed}
In this paper, we make the first step to develop an efficient algorithm for differentially private primal-dual temporal difference (TD) learning, which protects the critical state transitions in reinforcement learning (RL) so as to make two neighboring trajectories indistinguishable and simultaneously achieve fast convergence rate. 
% Another natural case that the full trajectory needs to be protected is also discussed, armed with the theoretical analysis \zhao{this sentence seems a bit confusing}. 
We also show that our algorithm can achieve differential privacy (DP) with a bounded utility under the
case where the full trajectory needs to be protected.
% By adding the Gaussian noise with a carefully chosen variance to the momentum-based gradient estimator at each iteration, our algorithm achieves differential privacy (DP) with the utility upper bounded by $\widetilde{\mathcal{O}}(\frac{(d\log(1/\delta))^{1/8}}{(n\epsilon)^{1/4}})$.
The privacy guarantee and the utility guarantee of our algorithm are validated by both the rigorous theoretical analysis and comprehensive experiments conducted in three OpenAI Gym environments.
% \zhao{some sentences seem to coincide with the abstract}. 
% In our future work, we will further consider how to achieve DP under Markovian sampling 
% In our future work, we will further study how to simultaneously achieve DP and keep a fast convergence rate of TD learning under Markovian sampling.
In our future work, we are interested in how to simultaneously achieve DP and keep a fast convergence rate of TD learning with nonlinear smooth function approximation under Markovian sampling.
% \zhao{need to be more specific}. 

% \bibliographystyle{alpha}
% \bibliography{sample}

\bibliographystyle{named}
\bibliography{ijcai22_comp}

\appendix
\onecolumn
% \section{Appendix}
% Proofs of Main Results

\section{Discussion on Privacy and Utility under Trajectory}
\label{sec:under_trajectory}
\subsection{Privacy and Utility Analysis}
%  Although the aim of protecting one single state transition motivates  us to study the utility and the privacy under state-action-state, 
%  it is natural to consider two neighbouring datasets  which consist of several trajectories and only differ in one trajectory, such as \cite{balle2016differentially}. \zhao{i will revise here} We confirm that this case should be discussed for the completeness of our differentially private algorithm and the formal definition of privacy under trajectory has been described in Definition \ref{def: TDP2}.

% Although the aim of protecting one single state transition motivates  us to study the utility and the privacy under state-action-state, 
% in medical applications individual patients generate full trajectories \cite{balle2016differentially} and thus it is necessary to consider two neighboring datasets which consist of several trajectories and only differ in on trajectory (\textit{i.e.}, DP under trajectory defined in Definition \ref{def: TDP2}).

 While our algorithm runs online with the utility and the privacy under state-action-state as discussed before, a more general definition is required for the offline setting where multiple trajectories are presented in one dataset, which motivates us to consider DP under trajectory in Definition \ref{def: TDP2}. 
 %Thus, the first advantage of DP under trajectory is the extension to the offline setting for our algorithm. \zhao{`the first advantage of DP...' seems not very natural}

 Another motivation of DP under trajectory is about the assumption, including two aspects. From the first aspect, making data points in one trajectory independent and identically distributed (\textit{i.e.}, Assumption \ref{asmp: iid sampling}) cannot be easily satisfied in practice, caused by the property of the Markov chain. However, sampling different trajectories independently under the identical distribution can be achieved for less dependency between trajectories. From the second aspect, constraining the stochastic gradients (\textit{i.e.}, Assumption \ref{asmp: G-lispchitz}) is not necessary for DP under trajectory and this assumption can be replaced by a weaker one, shown below.

The following assumption gives a weaker version of Assumption \ref{asmp: G-lispchitz}, helping bound the  averaged stochastic gradients. This assumption is a necessary  but not sufficient condition for Assumption \ref{asmp: G-lispchitz}.
\begin{asmp}[Averaged G-Lipschitz]
\label{asmp: upper bound of summation}
Given a full trajectory $\tau$ with length $|\tau|\leq n$, 
% for all $(\theta, \omega)$,
$\forall \theta\in\Theta$ and $\forall \omega\in\Omega$,
\begin{align*}
\left\|\sum_{i=0}^{ |\tau| -1}\nabla_\theta f (\theta,\omega;\xi_{i}) \right\|\leq nG\,,\ \left\|\sum_{i=0}^{ |\tau| -1}\nabla_\omega f (\theta,\omega;\xi_{i}) \right\|\leq nG
\end{align*}
% \begin{align*}
% \left\|\sum_{i=0}^{ |\tau| -1}\nabla_\theta f (\theta,\omega;\xi_{i}) \right\|\leq nG\,,
% \end{align*}
% \begin{align*}
% \left\|\sum_{i=0}^{ |\tau| -1}\nabla_\omega f (\theta,\omega;\xi_{i}) \right\|\leq nG\,.
% \end{align*}
% where $\xi_i$ is a state-action-state pair in $\tau$, $n$ is the maximum length of $\tau$, and $G$ is a constant.
holds for some $G>0$.
\end{asmp}
% We confirm that this case should be discussed for the completeness of our differentially private algorithm and the formal definition of privacy under trajectory has been described in Definition \ref{def: TDP2}.
% \zhao{needs to clarify }
 
% The following two theorems give the privacy guarantee and the utility under trajectory. The variance of Gaussian noises injected depends on the max length of one trajectory $n$ and the number of trajectories $m$. The utility under trajectory of Algorithm \ref{alg: DPTD}, \textit{i.e.}, $\widetilde{\mathcal{O}} (\frac{n^\frac{7}{4} (d\log (1/\delta) ) ^\frac{1}{8}}{ (m\epsilon) ^\frac{1}{4}}) $, maintains the almost same order as the utility under state-action-state, \textit{i.e.}, $ \widetilde{\mathcal{O}}(\frac{(d\log(1/\delta) ) ^\frac{1}{8}}{(n\epsilon) ^\frac{1}{4}}) $. The difference in $m$ and $n$ is caused by the definition and  the value of $T$ set in Theorem \ref{theorem: utility} and \ref{theorem: utility in TDP}. 

% From the utility we know that Algorithm \ref{alg: DPTD} still achieves differential privacy with an acceptable utility. \zhao{i will revise here} However, the utility's factor $n^\frac{7}{4}$ should be emphasised, which tells that if $n$ is great large, the utility will be bad. 

% \zhao{i will revise here}
% Armed with the above assumption, the theorems about the privacy and the utility under trajectory are given as follows.
Armed with the above assumption, the theorems providing the privacy and the utility guarantee under trajectory are given as follows. 
\begin{theorem}[Privacy under trajectory]
\label{theorem: privacy guarantee (TDP) }
Consider the DP defined in Definition \ref{def: TDP2}.
% Under Assumption \ref{asmp: feasible convex set}, \ref{asmp: iid sampling}, \ref{asmp: upper bound of summation}, given the total number of iterations $T$, for any $\delta>0$ and the privacy budget $\epsilon$, Algorithm \ref{alg: DPTD} satisfies $(\epsilon,\delta) $-DP under trajectory if we set the variance of Gaussian noises equal to
Under Assumption \ref{asmp: feasible convex set}, \ref{asmp: iid sampling}, \ref{asmp: upper bound of summation}, given the total number of iterations $T$, for any $\delta>0$ and the privacy budget $\epsilon$, Algorithm \ref{alg: DPTD} satisfies $(\epsilon,\delta) $-DP under trajectory with the variance
\begin{align*}
\sigma^2_t = \frac{14n^2G^2T\alpha'}{m^2\left(\epsilon - \frac{\log(1/\delta) }{\alpha'-1}\right) },\ \forall t\geq 0\,,
\end{align*}
% $$
% \sigma^2_t = \frac{14n^2G^2T\alpha'}{m^2(\epsilon - \frac{\log(1/\delta) }{\alpha'-1}) }, for\ t\geq 0\,,
% $$
where $\sigma'^2=\frac{\sigma_t^2}{4n^2G^2}\geq 0.7$, $\alpha'=\frac{\log(1/\delta) }{(1-\beta') \epsilon}+1\leq 2\sigma^2\log(\frac{n}{\alpha'(1+\sigma'^2) }) /3+1$ , $\beta'\in (0,1) $, $n$ is the maximum trajectory length and $m$ is the number of trajectories.
\end{theorem}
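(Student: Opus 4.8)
\textbf{Proof plan for Theorem~\ref{theorem: privacy guarantee (TDP) }.}

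The plan is to mirror the proof of Theorem~\ref{theorem: privacy guarantee} (privacy under state-action-state), adapting the sensitivity computation to the trajectory-level neighboring relation and using Assumption~\ref{asmp: upper bound of summation} in place of Assumption~\ref{asmp: G-lispchitz}. As before, I would decompose the update rule of the primal gradient estimator in Algorithm~\ref{alg: DPTD} into a composition of per-iteration Gaussian mechanisms $(\mathcal{G}_0^p,\dots,\mathcal{G}_t^p)$, where at $t=0$ the mechanism is $\nabla_\theta f(\theta_0,\omega_0;\xi_0)+u_0^p$ and at $t>0$ it is $\alpha\nu_{t-1}\nabla_\theta f(\theta_t,\omega_t;\xi_t)+u_t^p$. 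The key conceptual point is that under Definition~\ref{def: TDP2} a single data point in the RDP sense is a \emph{whole trajectory} of length at most $n$, so the relevant ``lifted'' query that takes the entire dataset $S=\{\tau_i\}_{i=1}^m$ as input is $\widetilde q_0^p(S)=\sum_{i=1}^{m}\sum_{j=0}^{|\tau_i|-1}\nabla_\theta f(\theta_0,\omega_0;\xi_j^{(i)})$, i.e.\ a sum over all $m$ trajectories.

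First I would bound the $\ell_2$-sensitivity of this lifted query. Since two neighboring datasets differ in exactly one trajectory, say $\tau_k$ versus $\hat\tau_k$, the sensitivity equals
\begin{align*}
\widetilde\Delta_0^p=\left\|\sum_{j=0}^{|\tau_k|-1}\nabla_\theta f(\theta_0,\omega_0;\xi_j)-\sum_{j=0}^{|\hat\tau_k|-1}\nabla_\theta f(\theta_0,\omega_0;\hat\xi_j)\right\|\leq 2nG\,,
\end{align*}
where the bound follows from the triangle inequality and Assumption~\ref{asmp: upper bound of summation} applied to each of the two trajectories (each contributing at most $nG$). This is the crucial change relative to Theorem~\ref{theorem: privacy guarantee}, where the sensitivity was $2G$: here the factor $n$ enters because an entire trajectory can change. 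Next, by Assumption~\ref{asmp: iid sampling} the algorithm draws the trajectory used at each step uniformly, so the actual per-step mechanism is a \emph{subsampled} Gaussian mechanism with subsampling rate $1/m$ (rather than $1/n$ as in the state-action-state case); I would invoke Lemma~\ref{lemma: RDP subsampling transformation} to convert the RDP guarantee of the full-batch Gaussian mechanism $\widetilde{\mathcal{G}}_0^p$ into an RDP guarantee for the subsampled mechanism $\mathcal{G}_0^p$, with the noise variance $\sigma_t^2$ chosen so that $\sigma'^2=\sigma_t^2/(4n^2G^2)\geq 0.7$ and $\alpha'$ as specified. The same argument handles $t>0$ (the extra multiplicative factor $\alpha\nu_{t-1}\le 1$ only shrinks sensitivity, so the bound $2nG$ is conservatively valid), and the dual side is identical with $\nabla_\omega f$ and $\beta$ in place of $\nabla_\theta f$ and $\alpha$.

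Finally I would compose: $\mathcal{M}_t^p$ is the composition of $t+1$ mechanisms each satisfying $(\alpha',\rho)$-RDP, so by Lemma~\ref{lemma: composition of RDP} it satisfies $(\alpha', (t+1)\rho)$-RDP, and over all $T$ iterations and both primal and dual channels (a factor that the constant $14$ in the variance absorbs, as in Theorem~\ref{theorem: privacy guarantee}) the total RDP parameter scales like $T\cdot 4n^2G^2\alpha'/(2m^2\sigma_t^2)$; solving for $\sigma_t^2$ and then translating back to $(\epsilon,\delta)$-DP via Lemma~\ref{lemma: RDP_to_DP} yields exactly
\begin{align*}
\sigma_t^2=\frac{14\,n^2G^2T\alpha'}{m^2\left(\epsilon-\frac{\log(1/\delta)}{\alpha'-1}\right)}\,.
\end{align*}
The main obstacle I anticipate is the bookkeeping around the subsampling amplification step: one must verify that the technical preconditions of Lemma~\ref{lemma: RDP subsampling transformation} (the constraints relating $\alpha'$, $\sigma'^2$, and the subsampling rate $1/m$, which is why the stated hypothesis reads $\alpha'\le 2\sigma^2\log(n/(\alpha'(1+\sigma'^2)))/3+1$) remain valid when the sensitivity grows to $2nG$ and the base population size is $m$ rather than $n$; everything else is a routine transcription of the state-action-state proof with the substitution $G\mapsto nG$, $n\mapsto m$.
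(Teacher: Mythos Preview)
Your proposal is correct and follows essentially the same route as the paper's proof: decompose the released estimator into per-step subsampled Gaussian mechanisms, bound the $\ell_2$-sensitivity of the lifted full-dataset query by $2nG$ via the triangle inequality and Assumption~\ref{asmp: upper bound of summation}, apply Lemma~\ref{lemma: RDP subsampling transformation} with subsampling rate $1/m$, compose over $T$ iterations via Lemma~\ref{lemma: composition of RDP}, and convert to $(\epsilon,\delta)$-DP via Lemma~\ref{lemma: RDP_to_DP}. One small inaccuracy: the constant $14$ does not come from combining the primal and dual channels---it arises from $3.5\cdot(2nG)^2/m^2 = 14\,n^2G^2/m^2$ in Lemma~\ref{lemma: RDP subsampling transformation}, and the paper treats the two sides separately (each output is shown to be $(\epsilon,\delta)$-DP on its own); this does not affect the argument or the stated variance.
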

% Theorem \ref{theorem: privacy guarantee (TDP) } gives the privacy guarantee of Algorithm \ref{alg: DPTD} under trajectory. 
% One can see that the variance of Gaussian noise under trajectory grows as $n$ increases. 
% It is reasonable since in DP under trajectory since it requires to protect the privacy of two trajectories which have $n$ different state-action-state triples
% in the worst-case scenario.
Theorem \ref{theorem: privacy guarantee (TDP) } gives the privacy guarantee of Algorithm \ref{alg: DPTD} under trajectory. 
One can see that the variance of Gaussian noises under trajectory grows as $n$ increases. 
It is reasonable since in DP under trajectory it requires to protect the privacy of two trajectories which have $n$ different state-action-state triples
in the worst-case scenario.

The utility under trajectory of our algorithm is presented in the following theorem.
\begin{theorem}[Utility under trajectory]
\label{theorem: utility in TDP}
% Under Assumptions \ref{asmp: differentiable}, \ref{asmp: existence of solution}, \ref{asmp:wlipschitz},  \ref{asmp:bounded},  setting the parameters $\alpha = \beta = 3$,  $0<\eta \leq \mu / (4L^2_F) $, $0<\kappa\leq \eta \mu^2/ (9L^2_F) $, and $\nu_t = 1/4 (t+b) ^\frac{1}{2}$
% with $b\geq \max\{ (2\kappa L_F^2/\mu) ^2,3\}$, with the updating rules in Algorithm \ref{alg: DPTD} and the Gaussian variance  in Theorem \ref{theorem: privacy guarantee  (TDP) }, we set \zhao{i will revise here}
% \begin{align*}
% T = \frac{Cmn\epsilon}{\sqrt{d\log (1/\delta) }}\,,
% \end{align*}
% where $C$ is a constant. Then the output $\theta$, $\omega$ of Algorithm \ref{alg: DPTD} satisfies the following
Under Assumption \ref{asmp: existence of solution}, \ref{asmp:wlipschitz}, \ref{asmp: feasible convex set}, \ref{asmp: strongly concave}, \ref{asmp: iid sampling}, \ref{asmp: upper bound of summation}, if we set the parameters $\alpha = \beta = 3$,  $0<\eta \leq \mu / (4L^2_F) $, $0<\kappa\leq \eta \mu^2/ (9L^2_F) $, $\nu_t = 1/4 (t+b) ^\frac{1}{2}$ with $b\geq \max\{ (2\kappa L_F^2/\mu) ^2,3\}$ and choose the number of iterations $T = \frac{Cm\epsilon}{n\sqrt{d\log (1/\delta) }}$ where $C$ is a constant, then with the Gaussian variance  in Theorem \ref{theorem: privacy guarantee  (TDP) }, 
% we set 
% \zhao{i will revise here}
% \begin{align*}
% T = \frac{Cmn\epsilon}{\sqrt{d\log (1/\delta) }}\,,
% \end{align*}
the output of Algorithm \ref{alg: DPTD} satisfies the following
\begin{align*}
\mathbb{E}\left\|\mathfrak{M}(\bar{\theta},\bar{\omega})\right\|\leq \widetilde{\mathcal{O}} \left(\frac{n^\frac{1}{4} (d\log (1/\delta) ) ^\frac{1}{8}}{ (m\epsilon) ^\frac{1}{4}}\right) \,.
\end{align*}
% \ze{with the gradient complexity equal to $2(T+1)=\mathcal{O}(\frac{m\epsilon}{n\sqrt{d\log (1/\delta) }})$}
Moreover, the  total gradient complexity of Algorithm \ref{alg: DPTD} is $2(T+1)=\mathcal{O}\left(\frac{m\epsilon}{n\sqrt{d\log (1/\delta) }}\right)$.
\end{theorem}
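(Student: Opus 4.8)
The plan is to reduce Theorem \ref{theorem: utility in TDP} to the already-established "under state-action-state" utility bound of Theorem \ref{theorem: utility}, treating the trajectory-level statement as essentially a re-parameterization of the per-transition statement in which the effective number of samples is the number of trajectories $m$ and the per-step sensitivity is inflated by a factor of $n$. First I would recall the privacy calibration from Theorem \ref{theorem: privacy guarantee (TDP) }: the Gaussian variance is $\sigma_t^2 = \Theta\!\left(\frac{n^2 G^2 T\log(1/\delta)}{m^2\epsilon^2}\right)$, the same shape as in the state-action-state case but with $n/m$ playing the role of $1/n$. The utility analysis in the body of the paper (which I may invoke as a black box) bounds $\Expect\|\mathfrak{M}(\bar\theta,\bar\omega)\|$ by an expression whose noise-dependent part scales like $\sqrt{d}\,\sigma_t\,\nu_t$ accumulated over $T$ steps, divided by a term growing in $T$; plugging in $\nu_t = 1/(4(t+b)^{1/2})$ and optimizing over $T$ is exactly what produces the exponents $1/8$ and $1/4$ in Theorem \ref{theorem: utility}.

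The key steps, in order, are: (i) State that under Assumptions \ref{asmp: existence of solution}, \ref{asmp:wlipschitz}, \ref{asmp: feasible convex set}, \ref{asmp: strongly concave}, \ref{asmp: iid sampling}, the deterministic part of the convergence analysis is identical to that used for Theorem \ref{theorem: utility}, since the algorithm dynamics (lines \ref{algo:sgda}--\ref{algo:Gaussian}) are unchanged; only the variance $\sigma_t^2$ differs. Here I would note explicitly that Assumption \ref{asmp: upper bound of summation} replaces Assumption \ref{asmp: G-lispchitz} solely for the purpose of bounding $\ell_2$-sensitivities of the summed gradients, and that the convergence lemmas that needed the per-step gradient bound (Lemma \ref{lemma: forth in main proof} and the momentum-error recursion) still go through because in the trajectory setting each sampled $\xi_{t+1}$ is drawn from a trajectory, and the individual-transition gradient bound $G$ used there is the same constant — the distinction matters only for privacy accounting. (ii) Import the master inequality (Eq. (\ref{ineq: main proof 7}) in the appendix proof of Theorem \ref{theorem: utility}) that expresses $\Expect\|\mathfrak{M}\|^2$, or its square root, in the schematic form
\begin{align*}
\Expect\|\mathfrak{M}(\bar\theta,\bar\omega)\| \leq \widetilde{\mathcal{O}}\!\left(\frac{1}{\sqrt{T}} + \sqrt{d}\,\sigma\,\right)\,,
\end{align*}
where $\sigma^2$ is the per-iteration noise variance and the $1/\sqrt{T}$ collects the optimization error of the momentum-SGDA scheme. (iii) Substitute $\sigma_t^2 = \Theta\!\left(\frac{n^2 G^2 T\log(1/\delta)}{m^2\epsilon^2}\right)$ to get a bound of the form $\widetilde{\mathcal{O}}\!\left(T^{-1/2} + \frac{n\sqrt{d\log(1/\delta)}}{m\epsilon}\sqrt{T}\right)$, then balance the two terms by choosing $T = \Theta\!\left(\frac{m\epsilon}{n\sqrt{d\log(1/\delta)}}\right)$, which is precisely the prescribed choice; this yields $\Expect\|\mathfrak{M}\| \leq \widetilde{\mathcal{O}}\!\left(\left(\frac{n\sqrt{d\log(1/\delta)}}{m\epsilon}\right)^{1/2}\right)$, matching the claimed $\widetilde{\mathcal{O}}\!\left(\frac{n^{1/4}(d\log(1/\delta))^{1/8}}{(m\epsilon)^{1/4}}\right)$ after taking the square root appropriately (the metric $\mathfrak{M}$ appears linearly, so the exponents halve once more, giving $1/4$ and $1/8$). (iv) Finally, the gradient complexity is $2(T+1)$ by inspection of Algorithm \ref{alg: DPTD}, and substituting the chosen $T$ gives $\mathcal{O}\!\left(\frac{m\epsilon}{n\sqrt{d\log(1/\delta)}}\right)$.

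The main obstacle I anticipate is verifying carefully that every place in the original utility proof that invoked Assumption \ref{asmp: G-lispchitz} — in particular the bound on the variance of the momentum gradient estimator and the term controlling $\|\nabla_\theta F(\theta_t,\omega_t) - p_t\|$ — either does not appear in the trajectory accounting or can be rederived from the retained assumptions plus Assumption \ref{asmp: upper bound of summation}. Concretely, I would need to confirm that the convergence argument only ever uses a uniform bound on the norm of a single stochastic gradient $\nabla_\theta f(\theta,\omega;\xi)$ (which still holds, since a single transition's gradient is bounded by $G$ under the conventions here) and never genuinely needs the Lipschitz-in-$(\theta,\omega)$ consequence of Assumption \ref{asmp: G-lispchitz}; if some step does need it, I would have to supply a replacement via Assumption \ref{asmp:wlipschitz} (Lipschitz gradients) together with the compactness implied by Assumption \ref{asmp: feasible convex set}. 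Secondarily, one must double-check the constants hidden in $\widetilde{\mathcal{O}}$ so that the $n$-dependence threads through the $T$-optimization cleanly and the stated $b \geq \max\{(2\kappa L_F^2/\mu)^2, 3\}$ condition remains exactly what the recursion needs; this is routine but tedious bookkeeping that parallels the body proof line for line.
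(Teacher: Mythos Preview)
Your high-level plan is exactly the paper's own argument: the proof of Theorem \ref{theorem: utility in TDP} literally restarts the proof of Theorem \ref{theorem: utility} from the master inequality Eq.~(\ref{ineq: main proof 7})/Eq.~(\ref{ineq: main proof 9}), substitutes the trajectory-level noise variance $\sigma_t^2 = \Theta\!\bigl(n^2G^2T\log(1/\delta)/(m^2\epsilon^2)\bigr)$ in place of the state-action-state one, and balances over $T$. Your steps (i), (ii), (iv) and your flagging of the Assumption~\ref{asmp: G-lispchitz} vs.\ \ref{asmp: upper bound of summation} issue are all on target.

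The one genuine slip is in step (iii): your schematic exponents are off, and the ``extra square root'' you invoke to repair them is spurious. With $\nu_t = 1/(4(t+b)^{1/2})$ one has $\sum_{t<T}\nu_t^2 = O(\log T)$ and $T\nu_T = \Theta(T^{1/2})$, while $\sum_{t\le T}\sigma_t^2 = T\sigma_t^2 = \Theta\!\bigl(n^2 T^2\log(1/\delta)/(m^2\epsilon^2)\bigr)$ because $\sigma_t^2$ itself already contains a factor of $T$. Hence the right-hand side of Eq.~(\ref{ineq: main proof 7}) is $\widetilde{\mathcal{O}}\!\bigl(T^{-1/2} + dn^2T^{3/2}/(m^2\epsilon^2)\bigr)$, and after the Jensen step (Eq.~(\ref{ineq: main proof 8})) the bound on $\mathbb{E}\|\mathfrak{M}\|$ is
\[
\widetilde{\mathcal{O}}\!\left(T^{-1/4} + \frac{\sqrt{d}\,n\,T^{3/4}}{m\epsilon}\right),
\]
not $T^{-1/2}$ plus a $T^{1/2}$ term. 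Balancing these two terms gives the same $T = \Theta\!\bigl(m\epsilon/(n\sqrt{d\log(1/\delta)})\bigr)$ you found (by a coincidence of the algebra), and plugging back in directly yields $T^{-1/4} = \widetilde{\mathcal{O}}\!\bigl(n^{1/4}(d\log(1/\delta))^{1/8}/(m\epsilon)^{1/4}\bigr)$ with no further square root. The sentence ``the metric $\mathfrak{M}$ appears linearly, so the exponents halve once more'' is incorrect and should be deleted; once you fix the exponents the claimed bound falls out immediately.
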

% The utility of Algorithm \ref{alg: DPTD} under trajectory  maintains  almost the same order as the utility under state-action-state, \textit{i.e.}, $ \widetilde{\mathcal{O}}(\frac{(d\log(1/\delta) ) ^\frac{1}{8}}{(n\epsilon) ^\frac{1}{4}}) $. The difference in $m$ and $n$ is caused by the different definitions of DP and the value of $T$ set in Theorem \ref{theorem: utility} and \ref{theorem: utility in TDP}. 
% One can see that Algorithm \ref{alg: DPTD} still achieves differential privacy with an acceptable utility. 
% \zhao{i will revise here} 
% However, the utility's factor $n^\frac{7}{4}$ should be emphasised, which tells that if $n$ is great large, the utility will be bad. 
Compared to the utility upper bound    under state-action-state $\widetilde{\mathcal{O}}\left(\frac{(d\log(1/\delta) ) ^\frac{1}{8}}{(n\epsilon) ^\frac{1}{4}}\right)$, 
the utility upper bound  under trajectory is  worse by a factor of $\mathcal{O}\left(n^{\frac{1}{2}}\right)$
% \ze{Revise: $\mathcal{O}(n^\frac{1}{2})$} 
since larger dependence on $n$ of variance of Gaussian noises is needed to protect the privacy under trajectory than under state-action-state.
% However, considering $m$ is usually much larger than $n$ in practice, the utility under trajectory is also acceptable.
% However, the utility under trajectory will be still acceptable if $m$ is larger than $n$, which is usually the case in practice due to the sample inefficiency in RL \cite{kakade2003sample}. 
However, the utility under trajectory will be still acceptable if $m$ is larger than $n$, which is possible in practice due to the sample inefficiency of RL algorithms \cite{kakade2003sample}.

% \subsection{Comparisons with DP over Initial Visitation}
\subsection{Comparisons with DP over Initial Visitation Estimate}
\citet{balle2016differentially} consider preserving the privacy in the definition of DP over initial visitation estimate, where they strictly restrict that two different trajectories can only differ in one state transition in two neighboring datasets.
However, DP under trajectory in Definition \ref{def: TDP2} allows that
two different trajectories
can differ in at most $n$ transitions in two neighboring datasets.
Furthermore, they aim to preserve privacy in PE with linear function approximation, while we consider preserving privacy 
in PE with nonlinear function approximation.

% \zhao{cite}\ze{is this ok?}
% the dependence with respect to $n$ of the variance of Gaussian noise under trajectory is $\mathcal{O}()$
%  The difference in $m$ and $n$ is caused by the different definitions of DP and the value of $T$ set in Theorem \ref{theorem: utility} and \ref{theorem: utility in TDP}. 
% However, the utility's factor $n^\frac{7}{4}$ should be emphasised, which tells that if $n$ is great large, the utility will be bad. 
% \zhao{i will revise here, to be fixed}

% \subsection{Comparison between two DP}
% \begin{lemma}[Group Privacy]
% If  f  is  $(\epsilon, \delta)$-DP , then for any  $\left\|x-x^{\prime}\right\|_{1} \leq t$,

% $$\mathbb{P}(f(x) \in S) \leq e^{t \epsilon} \mathbb{P}\left(f\left(x^{\prime}\right) \in S\right)+t e^{t \epsilon} \delta$$
% \end{lemma}

% \subsection{Proof of Theorem \ref{theorem: privacy guarantee}}
% \alphSection{Proof of Theorem \ref{theorem: privacy guarantee}}
\section{Proof of Theorem \ref{theorem: privacy guarantee}}\label{sec:app:pf_thm_sas_privacy}
% In this section, we provide the proof of Theorem  \ref{theorem: privacy guarantee}, which gives the privacy guarantee of Algorithm \ref{alg: DPTD}. 
% To this end, we need the following rules
The formal definition of $\ell_2$-sensitivity is given as follows.
\begin{defn}[$\ell_2$-sensitivity \cite{dwork2014algorithmic}]
\label{def: l2-sensitivity}
The $\ell_2$-sensitivity $\Delta (g) $ of a function $g$ is defined as $\Delta (g) =\sup _{X, X^{\prime}}\left\|g (X) -g\left (X'\right) \right\|$, 
for any two neighbouring datasets $X\subseteq \mathcal{X}$ and $X'\subseteq \mathcal{X}$.
\end{defn}

Before proving Theorem  \ref{theorem: privacy guarantee}, we first present the following auxiliary lemmas. Lemma \ref{lemma: composition of RDP} shows that the mechanism satisfies RDP if this mechanism is a composition of a series of mechanisms which satisfy RDP. 
\begin{lemma}[\cite {mironov2017renyi}] 
\label{lemma: composition of RDP}
If $k$ randomized mechanisms $\mathcal{M}_i:\mathcal{X}\rightarrow \mathcal{Y}$ for $i\in [k]$, satisfy $  (\alpha,\rho_i) $-RDP, then their composition $\left  (\mathcal{M}_1  (X) ,\cdots,\mathcal{M}_k  (X)  \right) $ satisfies $  (\alpha,\sum_{i=1}^k \rho_i) $-RDP for $X\subseteq \mathcal{X}$. Moreover, the input of $i$-th mechanism can base on the outputs of previous $  (i-1) $ mechanisms.
\end{lemma}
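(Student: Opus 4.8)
The plan is to prove the composition bound directly from the definition of Rényi divergence by an induction on $k$, peeling off one mechanism at a time and invoking each mechanism's RDP guarantee conditionally on the outputs produced so far. The single-mechanism RDP property of each $\mathcal{M}_i$ is the only input, and the adaptivity clause is what forces the conditional (rather than product) form of the argument.

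First I would fix a pair of neighboring inputs $X, X' \subseteq \mathcal{X}$ and write $P$ for the joint density of $(\mathcal{M}_1(X), \ldots, \mathcal{M}_k(X))$ and $Q$ for that of $(\mathcal{M}_1(X'), \ldots, \mathcal{M}_k(X'))$. By the definition of $D_\alpha$, it suffices to show $\mathbb{E}_{y \sim Q}[(P(y)/Q(y))^\alpha] \leq \exp((\alpha-1)\sum_{i=1}^k \rho_i)$. The starting point is the chain-rule factorization $P(y_1, \ldots, y_k) = \prod_{i=1}^k P(y_i \mid y_1, \ldots, y_{i-1})$, and likewise for $Q$. This is precisely where the adaptivity clause enters: each conditional $P(\cdot \mid y_1, \ldots, y_{i-1})$ is the output law of $\mathcal{M}_i$ applied to $X$ with the previous outputs supplied as auxiliary input.

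For the inductive step I would peel off the last coordinate. Writing the order-$\alpha$ integral as an iterated integral and integrating over $y_k$ with $y_1, \ldots, y_{k-1}$ held fixed, the inner integral equals $\exp((\alpha-1) D_\alpha(P(\cdot \mid y_{<k}) \,\|\, Q(\cdot \mid y_{<k})))$, where I abbreviate $y_{<k} = (y_1, \ldots, y_{k-1})$. The key observation is that this conditional divergence is at most $\rho_k$ \emph{uniformly} in $y_{<k}$, since $\mathcal{M}_k$ satisfies $(\alpha, \rho_k)$-RDP for every fixed value of its auxiliary input. Hence the factor $\exp((\alpha-1)\rho_k)$ pulls outside the remaining integral, which is exactly the order-$\alpha$ integral associated with the composition of the first $k-1$ mechanisms; applying the induction hypothesis bounds that factor by $\exp((\alpha-1)\sum_{i=1}^{k-1}\rho_i)$.

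I expect the main obstacle to be handling the adaptive dependence rigorously: one must verify that fixing $y_{<k}$ genuinely reduces the conditional comparison to a single-mechanism divergence between neighboring inputs, so that the pointwise RDP bound of $\mathcal{M}_k$ legitimately applies inside the outer expectation over $Q$. Once this factorization-and-uniform-bound step is justified, taking logarithms and dividing by $\alpha - 1$ yields $D_\alpha(P \,\|\, Q) \leq \sum_{i=1}^k \rho_i$ for every neighboring pair $X, X'$, which is exactly the claimed $(\alpha, \sum_{i=1}^k \rho_i)$-RDP of the composition.
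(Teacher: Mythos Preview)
The paper does not actually prove this lemma; it is stated with a citation to \cite{mironov2017renyi} and used as a black box in the privacy analysis. Your proposal is correct and is essentially the standard argument from that reference: chain-rule factorization of the joint densities, peeling off the last mechanism, applying its $(\alpha,\rho_k)$-RDP guarantee uniformly over the conditioning history, and closing by induction. There is nothing to compare against in the present paper beyond noting that your sketch reproduces Mironov's original proof.
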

Based on Lemma \ref{lemma: RDP_to_DP}, one can establish a DP privacy guarantee of one mechanism by leveraging the privacy guarantee in terms of RDP.
\begin{lemma}[\cite {mironov2017renyi}]
\label{lemma: RDP_to_DP}
If a randomized mechanism $\mathcal{M}:\mathcal{X}\rightarrow \mathcal{Y}$ satisfies $  (\alpha,\rho) $-RDP, then $\mathcal{M}$ satisfies $  (\rho+\log   (1 / \delta)  /  (\alpha-1) , \delta) \text {-DP for all } \delta \in  (0,1) $.
\end{lemma}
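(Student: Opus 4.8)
The plan is to prove the conversion directly from the definition of R\'enyi divergence, using the standard observation that an $(\epsilon,\delta)$-DP guarantee only requires controlling the probability mass that $\mathcal{M}(X)$ places on the region where its density exceeds $e^\epsilon$ times that of $\mathcal{M}(X')$. Fix an arbitrary pair of neighbouring inputs $X,X'\subseteq\mathcal{X}$ and write $P=\mathcal{M}(X)$, $Q=\mathcal{M}(X')$ for the two output distributions, with densities $p,q$ with respect to a common base measure. Finiteness of $D_\alpha(P\|Q)$ already forces $P\ll Q$, so we may assume $p=0$ wherever $q=0$. Set $\epsilon=\rho+\log(1/\delta)/(\alpha-1)$. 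For any output event $S\subseteq\mathcal{Y}$ the target is $P(S)\le e^\epsilon Q(S)+\delta$, and the first step is to split the density pointwise as $p=\min(p,e^\epsilon q)+(p-e^\epsilon q)_+$, which gives
\[
P(S)=\int_S\min(p,e^\epsilon q)+\int_S(p-e^\epsilon q)_+\le e^\epsilon Q(S)+\int_{\mathcal{Y}}(p-e^\epsilon q)_+\,.
\]
Writing $B=\{x:p(x)>e^\epsilon q(x)\}$ for the ``bad'' set, the remaining integral equals $P(B)-e^\epsilon Q(B)\le P(B)$, so it suffices to prove $P(B)\le\delta$.

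To bound $P(B)$ I would feed it into the R\'enyi divergence. On $B$ we have $q<e^{-\epsilon}p$, and since $\alpha>1$ the map $t\mapsto t^{1-\alpha}$ is decreasing, so $q^{1-\alpha}>e^{\epsilon(\alpha-1)}p^{1-\alpha}$ pointwise on $B$. Multiplying by $p^\alpha\ge0$ and integrating over $B$ yields
\[
\int p^\alpha q^{1-\alpha}\ge\int_B p^\alpha q^{1-\alpha}\ge e^{\epsilon(\alpha-1)}\int_B p=e^{\epsilon(\alpha-1)}P(B)\,.
\]
By the definition of R\'enyi divergence the left-hand side is exactly $\mathbb{E}_{x\sim Q}[(p(x)/q(x))^\alpha]=e^{(\alpha-1)D_\alpha(P\|Q)}$, which the $(\alpha,\rho)$-RDP hypothesis bounds by $e^{(\alpha-1)\rho}$. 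Combining the two displays gives $P(B)\le e^{(\alpha-1)(\rho-\epsilon)}$, and substituting $\epsilon=\rho+\log(1/\delta)/(\alpha-1)$ makes the exponent equal to $\log\delta$, so $P(B)\le\delta$. Tracing back through the reduction, $P(S)\le e^\epsilon Q(S)+\delta$ holds for every event $S$ and every neighbouring pair $X,X'$, which is precisely $(\epsilon,\delta)$-DP with the claimed $\epsilon$.

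The main obstacle is essentially the single sign-sensitive step of replacing $q^{1-\alpha}$ by $e^{\epsilon(\alpha-1)}p^{1-\alpha}$ on $B$: because $1-\alpha<0$, the inequality $q<e^{-\epsilon}p$ reverses into a lower bound on $q^{1-\alpha}$, and this reversal is exactly what produces the factor $e^{\epsilon(\alpha-1)}$ with the sign needed to cancel against $e^{(\alpha-1)\rho}$ and leave $\delta$ after the choice of $\epsilon$. The only other care needed is the measure-theoretic bookkeeping on the null sets where $q=0$, which the absolute-continuity remark handles, together with the trivial bound $-e^\epsilon Q(B)\le0$ used to discard the $Q$-mass of the bad set.
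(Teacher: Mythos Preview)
Your proof is correct. It is essentially the classical argument from Mironov's original RDP paper: isolate the ``bad'' set $B=\{p>e^{\epsilon}q\}$, observe that the $(\epsilon,\delta)$-DP inequality reduces to $P(B)\le\delta$, and then extract $P(B)$ from the Rényi integral via the monotonicity of $t\mapsto t^{1-\alpha}$ for $\alpha>1$. The sign-sensitive step you flagged is indeed the only delicate point, and you handled it correctly; the absolute-continuity remark also cleanly disposes of the $q=0$ issue.

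As for comparison with the paper: there is nothing to compare against. The paper does not prove this lemma at all---it is quoted verbatim as a known result from \cite{mironov2017renyi} and then invoked as a black box in the privacy proofs of Theorems~\ref{theorem: privacy guarantee} and~\ref{theorem: privacy guarantee (TDP) }. So you have supplied a complete and correct proof where the paper simply cites one. One small remark worth making explicit in your write-up is that the argument uses $\alpha>1$ (so that $1-\alpha<0$ gives the needed inequality reversal); this is consistent with how the paper instantiates the lemma, since its RDP order is always $\alpha'=\log(1/\delta)/((1-\beta')\epsilon)+1>1$.
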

 In the online setting, it is unrealistic to access all the samples in a dataset via one query.
%\dong{As the data is generated by a Markov chain and we consider the online reinforcement learning setting, we essentially have an infinite dataset, which makes query of true gradient impossible. 
%}
% \zhao{
For instance, the agent needs to update the approximation of the value function after the agent experiences a new state-action-state pair in TD learning. If a mechanism works under the samples that are subsampled from the whole dataset instead of the whole dataset, this mechanism is considered to use subsampling.
% }
Lemma \ref{lemma: RDP subsampling transformation} can transform the RDP privacy guarantee for a mechanism without subsampling to the RDP privacy guarantee for the mechanism using uniform subsampling.
\begin{lemma}[\cite {wang2019efficient}]
\label{lemma: RDP subsampling transformation}
Given a function $q:\mathcal{S}^n\rightarrow \mathcal{R}$, then Gaussian Mechanism $\mathcal{M}=q  (S) +\mathbf{u}$, where $\mathbf{u} \sim N  (0,\sigma^2\mathbf{I}) $, satisfies $  (\alpha,\alpha\Delta^2  (q) /  (2\sigma^2) ) $-RDP. In addition, if we apply the mechanism $\mathcal{M}$ to a subset of samples using uniform sampling without replacement with sampling rate $\tau$, $\mathcal{M}$ satisfies $  (\alpha,3.5\tau^2\Delta^2  (q) \alpha/\sigma^2) $-RDP given $\sigma'^2=\sigma^2/\Delta^2  (q) \geq 0.7$, $\alpha \leq 2\sigma^2\log  (1/\tau\alpha  (1+\sigma'^2) ) /3+1$.
\end{lemma}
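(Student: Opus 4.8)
The plan is to prove the two assertions in turn, first the plain Gaussian mechanism and then the subsampling amplification. For the first assertion I would fix two neighbouring inputs $S,S'$ and observe that the outputs $\mathcal{M}(S)$ and $\mathcal{M}(S')$ are exactly the Gaussians $N(q(S),\sigma^2\mathbf{I})$ and $N(q(S'),\sigma^2\mathbf{I})$, which share a common covariance. The Rényi divergence between two such Gaussians has the closed form $D_\alpha(N(\mu_1,\sigma^2\mathbf{I})\|N(\mu_2,\sigma^2\mathbf{I}))=\alpha\|\mu_1-\mu_2\|^2/(2\sigma^2)$, obtained by a direct Gaussian integral of the $\alpha$-th moment of the likelihood ratio. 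Since $\|q(S)-q(S')\|\leq\Delta(q)$ by Definition \ref{def: l2-sensitivity}, taking the supremum over neighbouring pairs yields $D_\alpha(\mathcal{M}(S)\|\mathcal{M}(S'))\leq\alpha\Delta^2(q)/(2\sigma^2)$, i.e. the claimed $(\alpha,\alpha\Delta^2(q)/(2\sigma^2))$-RDP.

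For the subsampling claim, I would write the output law under uniform subsampling (rate $\tau$) as a mixture governed by whether the differing record is drawn. Under the substitution neighbouring relation of Definition \ref{def: SASDP2}, the law under $S$ is $P=(1-\tau)\mu_0+\tau\mu_1$ and the law under $S'$ is $Q=(1-\tau)\mu_0+\tau\mu_1'$, where $\mu_0$ is the common post-noise output law when the differing record is absent and $\mu_1,\mu_1'$ are the laws when the $S$- resp. $S'$-record is present. The quantity to control is $e^{(\alpha-1)D_\alpha(P\|Q)}=\mathbb{E}_{x\sim Q}[(P(x)/Q(x))^\alpha]$. I would expand this in powers of $\tau$ by the binomial theorem (first for integer $\alpha$, then extending to real $\alpha$ as in \cite{wang2019efficient}), so that the $k$-th term carries $\binom{\alpha}{k}\tau^k\mathbb{E}_{Q}[(P/Q-1)^k]$. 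The $k=0$ term is $1$, and the $k=1$ term vanishes because $\mathbb{E}_Q[P/Q-1]=\int(P-Q)=0$; hence the leading contribution is the $k=2$ term $\binom{\alpha}{2}\tau^2\mathbb{E}_Q[(P/Q-1)^2]$, which for the shifted Gaussians is of order $\tau^2(e^{\Delta^2(q)/\sigma^2}-1)$, producing the desired $\tau^2\Delta^2(q)\alpha/\sigma^2$ scaling.

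The main obstacle, and the technical heart of the argument, is uniformly controlling the higher-order terms $k\geq 3$ so that the entire series is dominated by a fixed constant ($3.5$) times the quadratic term. Here I would bound the central moments $\mathbb{E}_Q[(P/Q-1)^k]$ of the Gaussian likelihood ratio, which for a mean shift of size at most $\Delta(q)$ admit explicit estimates in terms of $\sigma'^2=\sigma^2/\Delta^2(q)$, while the binomial coefficients grow at most like $\alpha^k/k!$. The two hypotheses $\sigma'^2\geq 0.7$ and $\alpha\leq 2\sigma^2\log(1/\tau\alpha(1+\sigma'^2))/3+1$ are exactly what is needed to force the resulting series to converge and to make its tail beyond $k=2$ at most a fixed fraction of the leading term, giving $e^{(\alpha-1)D_\alpha(P\|Q)}\leq 1+3.5\tau^2\Delta^2(q)\alpha(\alpha-1)/\sigma^2$. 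Taking logarithms, using $\log(1+u)\leq u$, and dividing by $\alpha-1$ then yields the stated $(\alpha,3.5\tau^2\Delta^2(q)\alpha/\sigma^2)$-RDP bound. Because this amplification statement is quoted from \cite{wang2019efficient}, I would either cite that analysis directly or reproduce the moment estimates above; the delicate part is precisely this tail control of the binomial expansion within the given parameter regime.
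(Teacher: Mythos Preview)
The paper does not prove this lemma at all: it is stated with the citation \cite{wang2019efficient} and used as a black box in the proof of Theorem~\ref{theorem: privacy guarantee}. There is therefore no ``paper's own proof'' to compare against; the authors simply invoke the result.

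Your sketch is a faithful outline of the argument that underlies the cited result. The first assertion is exactly Mironov's computation of the R\'enyi divergence between two isotropic Gaussians with a common covariance, and your derivation is correct. For the subsampling part, your mixture decomposition and binomial expansion of the $\alpha$-moment of the likelihood ratio is the standard route taken in the subsampled-RDP literature (and in \cite{wang2019efficient} specifically): the $k=0$ term is $1$, the $k=1$ term vanishes, the $k=2$ term gives the $\tau^2\Delta^2(q)\alpha/\sigma^2$ scaling, and the stated conditions on $\sigma'^2$ and $\alpha$ are precisely the regime in which the tail $k\geq 3$ is absorbed into the constant $3.5$. You correctly identify that the delicate part is the uniform control of those higher moments, and you are right that one either cites \cite{wang2019efficient} for this or reproduces their moment estimates. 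Since the present paper opts for the former, your proposal actually goes beyond what the paper does.
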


In our main proof of privacy guarantee, we will first prove that our Algorithm \ref{alg: DPTD} satisfies RDP based on Lemma \ref{lemma: RDP subsampling transformation} and Lemma \ref{lemma: composition of RDP}. 
% Then we will transform RDP into DP using Lemma \ref{lemma: RDP_to_DP}. 
Then we show that Algorithm \ref{alg: DPTD} satisfies DP using Lemma \ref{lemma: RDP_to_DP}.

\begin{proof}[Proof of Theorem \ref{theorem: privacy guarantee}]
% For a given trajectory $S$, we use $S'$ to denote its neighbouring trajectory with only one different data point index by $i'$
%\dong{perhaps we want to specify that by data point indexed by i, we meant a   (s, a, s')  pair indexed by i?} 
% in the following discussion. 
% Recall the definition of SASDP in Definition \ref{def: SASDP2}.
% According to Algorithm \ref{alg: DPTD}, we use the following $\mathcal{M}_t$ to denote the mechanism at $t$-th iteration.
Let $\mathcal{M}^p_t$ and $\mathcal{M}^d_t$ be the privacy protection mechanisms on primal side and dual side respectively at the $t$-th iteration constructed by the update rules in in Algorithm \ref{alg: DPTD}, \textit{i.e.},
\begin{equation}
\label{eq: random mechanism of p_t}
    \mathcal{M}_{t}^p = \left\{
    \begin{aligned}
    &  (1-\alpha\nu_{t-1}) p_{t-1} +\alpha\nu_{t-1}\nabla_\theta f  (\theta_{t},\omega_{t};\xi_{t}) +u_{t}^p,&t> 0\\
    &\nabla_\theta f  (\theta_{0},\omega_{0};\xi_{0}) +u_{0}^p,&t=0
    \end{aligned}\right.\,,
\end{equation}
and
\begin{equation}
\label{eq: random mechansim of d_t}
\mathcal{M}_{t}^d =\left\{
    \begin{aligned}
    &  (1-\beta\nu_{t-1}) d_{t-1} +\beta\nu_{t-1}\nabla_\omega f  (\theta_{t},\omega_{t};\xi_{t}) +u_{t}^d,&t>0\\
    &\nabla_\omega f  (\theta_{0},\omega_{0};\xi_{0}) +u_{0}^d,&t=0
    \end{aligned}\right.\,.
\end{equation}
% Our goal is to show the privacy guarantee of both $\mathcal{M}_t^p$ and $\mathcal{M}_t^d$ for $t=0,1,2,\cdots,T-1$. 
We first show the mechanism on primal side $\mathcal{M}_t^p$ satisfies the privacy guarantee for $t=0,1,2,\cdots,T-1$.

\noindent\textbf{Case   (a) .} 
If $t=0$, we have
\begin{align*}
    \mathcal{M}_0^p =  \nabla_\theta f  (\theta_{0},\omega_{0};\xi_{0}) +u_{0}^p\,.
\end{align*}
% $$
% \mathcal{M}_0^d=\nabla_\omega f  (\theta_{0},\omega_{0};\xi_{0}) +u_{0}^d\,,
% $$
% Therefore, We first consider the following Gaussian mechanism $\mathcal{G}_t^p$ which is
% \begin{align*}
%     \mathcal{G}_t^p = \nabla_\theta f  (\theta_{t},\omega_{t};\xi_{t}) +u_{t}^p\,.
% \end{align*}
% it is clear that 
% \begin{align*}
%     \mathcal{M}_0^p = \mathcal{G}_0^p = \nabla_\theta f  (\theta_{0},\omega_{0};\xi_{0}) +u_{0}^p\,.
% \end{align*}
Therefore, we first consider the following Gaussian mechanism
\begin{align*}
    \mathcal{G}_0^p = \nabla_\theta f  (\theta_{0},\omega_{0};\xi_{0}) +u_{0}^p\,,
\end{align*}
% $$
% \mathcal{G}_0^p = \nabla_\theta f  (\theta_{0},\omega_{0};\xi_{0}) +u_{0}^p\,,
% $$
% $$
% \mathcal{G}_0^d=\nabla_\omega f  (\theta_{0},\omega_{0};\xi_{0}) +u_{0}^d\,,
% $$
% where $u_{0}^p\sim N  (0,\sigma_0^2\mathbf{I}_d) $, $u_{0}^d\sim N  (0,\sigma_0^2\mathbf{I}_d) $. Note that the two mechanisms are both based on the subsampling, thus we will first consider the mechanisms without subsampling and get the final RDP by using Lemma \ref{lemma: RDP subsampling transformation}. 
where $u_{0}^p\sim N  (0,\sigma_0^2\mathbf{I}_d) $, $u_{0}^d\sim N  (0,\sigma_0^2\mathbf{I}_d) $. Note that $\mathcal{G}_0^p$ is based on the subsampling. Hence we will first consider the mechanisms without subsampling and get the final RDP by using Lemma \ref{lemma: RDP subsampling transformation}. 
% To this end, we consider the following Gaussian mechanisms without subsampling
Specifically, we consider the following Gaussian mechanism without subsampling
\begin{align*}
    \widetilde{\mathcal{G}}_0^p =\sum_{i=0}^{n-1} \nabla_\theta f  (\theta_{0},\omega_{0};\xi_{i}) +u_{0}^p\,.
\end{align*}
% $$
% \widetilde{\mathcal{G}_0^p} =\sum_{i=0}^{n-1} \nabla_\theta f  (\theta_{0},\omega_{0};\xi_{i}) +u_{0}^p\,,
% $$
% $$
% \widetilde{\mathcal{G}_0^d}=\sum_{i=0}^{n-1}\nabla_\omega f  (\theta_{0},\omega_{0};\xi_{i}) +u_{0}^d\,,
% $$
\textbf{Sensitivity.} Consider the query on the trajectory $\tau$ in $S$ as follows
\begin{align*}
    \widetilde{q}_0^p  (S) =\sum_{i=0}^{n-1}\nabla_\theta f  (\theta_{0},\omega_{0};\xi_{i}) \,.
\end{align*}
Similarly, we can get $\widetilde{q}_0^p  (\hat{S}) $ where $\hat{S}$ is one of $S$'s neighbouring datasets as defined in Definition \ref{def: SASDP2}.
% and $\widetilde {q_0^d}  (S') $. 
Thus, we have
\begin{align*}
\widetilde{q}_0^p  (S)  - \widetilde{q}_0^p  (\hat{S}) =\nabla_\theta f  (\theta_{0},\omega_{0};\xi_{i})  - \nabla_\theta f  (\theta_{0},\omega_{0};\hat{\xi}_{i}) \,.
\end{align*}
% $$
% \widetilde {q_0^p}  (S)  - \widetilde {q_0^p}  (S') =\nabla_\theta f  (\theta_{0},\omega_{0};\xi_{i})  - \nabla_\theta f  (\theta_{0},\omega_{0};\xi_{i'}) \,,
% $$
% $$
% \widetilde {q_0^d}  (S)  - \widetilde {q_0^d}  (S') =\nabla_\omega f  (\theta_{0},\omega_{0};\xi_{i})  - \nabla_\omega f  (\theta_{0},\omega_{0};\xi_{i'}) \,.
% $$

% Since each stochastic gradient 
%\dong{Shall we refer $\nabla f$ as stochastic gradient or something else to avoid confusion? }
% is $G$-Lipschitz, we can obtain the $\ell_{2}$-sensitivity of this query as follows
% The $G$-Lipschitz continuity of each component function implies that
Then Assumption \ref{asmp: G-lispchitz} implies that
\begin{align*}
    \widetilde{\Delta}_{0}^p=\left\|\nabla_\theta f  (\theta_{0},\omega_{0};\xi_{i})  - \nabla_\theta f  (\theta_{0},\omega_{0};\hat{\xi}_{i}) \right\| \leq 2G\,.
\end{align*}
% $$
% \widetilde{\Delta_{0}^p}=\left\|\nabla_\theta f  (\theta_{0},\omega_{0};\xi_{i})  - \nabla_\theta f  (\theta_{0},\omega_{0};\xi_{i'}) \right\| \leq 2G\,,
% $$
% $$
% \widetilde{\Delta_{0}^d}=\left\|\nabla_\omega f  (\theta_{0},\omega_{0};\xi_{i})  - \nabla_\omega f  (\theta_{0},\omega_{0};\xi_{i'}) \right\| \leq 2G\,.
% $$
% \zhao{20210825}
% Actually, in this case the primal side is  the same as the dual side. We can omit one side for the simplicity.

\noindent\textbf{Privacy guarantee of $\mathcal{G}_0^p$.}  
% By Lemma \ref{lemma: RDP subsampling transformation}, if the Gaussian noises $u_0^p$ and $u_0^d$ have the following variance
By Lemma \ref{lemma: RDP subsampling transformation}, if the Gaussian noise $u_0^p$ has the following variance
\begin{align*}
    \sigma^2_0 = \frac{14G^2T\alpha'}{n^2  \left(\epsilon - \frac{\log  (1/\delta) }{\alpha'-1}\right) }\,,
\end{align*}
where $\sigma'^2=\frac{\sigma^2}{4G^2}\geq 0.7$ and $\alpha'\leq 2\sigma^2\log  (\frac{n}{\alpha'  (1+\sigma'^2) }) /3+1$, then $\mathcal{G}_0^p$ will satisfy $  (\alpha', \frac{14\alpha'G^2}{n^2\sigma_0^2}) $-RDP.

\noindent\textbf{Case   (b) .} 
If $t>0$, we have
\begin{align*}
    \mathcal{M}_t^p =   (1-\alpha\nu_{t-1}) p_{t-1} +\alpha\nu_{t-1}\nabla_\theta f  (\theta_{t},\omega_{t};\xi_{t}) +u_{t}^p\,,
\end{align*}
% $$
% \mathcal{M}_t^p =   (1-\alpha\nu_{t-1}) p_{t-1} +\alpha\nu_{t-1}\nabla_\theta f  (\theta_{t},\omega_{t};\xi_{t}) +u_{t}^p\,,
% $$
% $$\mathcal{M}_t^d=
%   (1-\beta\nu_{t-1}) d_{t-1} +\beta\nu_{t-1}\nabla_\omega f  (\theta_{t},\omega_{t};\xi_{t}) +u_{t}^d\,.
% $$
% Then we consider the following Gaussian mechanism
which suggests us considering the following Gaussian mechanism
\begin{align*}
    \mathcal{G}_t^p = \alpha\nu_{t-1}\nabla_\theta f  (\theta_{t},\omega_{t};\xi_{t}) +u_{t}^p\,.
\end{align*}
% To obtain the sensitivity \dong{sensitivity of...?}, we first consider the following query on the whole dataset, without subsampling 
Since the mechanism $\mathcal{G}_t^p$ uses subsampling, we first consider the following mechanism on the whole dataset without subsampling 
\begin{align*}
    \widetilde{\mathcal{G}}_t^p   (S)  =\alpha\nu_{t-1} \sum_{i=0}^{n-1}\nabla_\theta f  (\theta_{t},\omega_{t};\xi_{i}) +u_t^p\,.
\end{align*}
% $$
% \mathcal{G}_t^p = \alpha\nu_{t-1}\nabla_\theta f  (\theta_{t},\omega_{t};\xi_{t}) +u_{t}^p\,,
% $$
% $$
% \mathcal{G}_t^d = \beta\nu_{t-1}\nabla_\omega f  (\theta_{t},\omega_{t};\xi_{t}) +u_{t}^d\,.
% $$
\textbf{Sensitivity.} % \dong{I guess it can be confusing if we mention subsampling? since we've never formally defined what's subsampling}
% \zhao{i added the ``subsampling" in Lemma \ref{lemma: RDP subsampling transformation}.}
Consider the following query without subsampling on the whole dataset
\begin{align*}
    \widetilde{q}_t^p  =\alpha\nu_{t-1} \sum_{i=0}^{n-1}\nabla_\theta f  (\theta_{t},\omega_{t};\xi_{i}) \,.
\end{align*}
% $$
% \widetilde{q_t^p} \dong{\widetilde{q}_t^p}  (S)  =\alpha\nu_{t-1} \sum_{i=0}^{n-1}\nabla_\theta f  (\theta_{t},\omega_{t};\xi_{i}) +u_{t}^p\,,
% $$
% $$
% \widetilde{q_t^d}  (S)  =\beta\nu_{t-1} \sum_{i=0}^{n-1}\nabla_\omega f  (\theta_{t},\omega_{t};\xi_{i}) +u_{t}^d\,,
% $$
Similarly, we can get $\widetilde{q}_t^p  (S^\prime) $. 
% and $\widetilde{q_t^d}  (S') $. 
Thus, we have
\begin{align*}
    \widetilde {q}_t^p  (S)  - \widetilde{q}_t^p  (S^\prime) =\alpha\nu_{t-1}\left  (\nabla_\theta f  (\theta_{t},\omega_{t};\xi_{i})  - \nabla_\theta f  (\theta_{t},\omega_{t};\hat{\xi}_{i}) \right) \,.
\end{align*}
% $$
% \widetilde {q_t^p}  (S)  - \widetilde {q_t^p}  (S') =\alpha\nu_{t-1}\left  (\nabla_\theta f  (\theta_{t},\omega_{t};\xi_{i})  - \nabla_\theta f  (\theta_{t},\omega_{t};\xi_{i'}) \right) \,,
% $$
% $$
% \widetilde {q_t^d}  (S)  - \widetilde {q_t^d}  (S') =\beta\nu_{t-1}\left  (\nabla_\omega f  (\theta_{t},\omega_{t};\xi_{i})  - \nabla_\omega f  (\theta_{t},\omega_{t};\xi_{i'}) \right) \,.
% $$
Then we can obtain the $\ell_2$-sensitivity of the query $\widetilde {q}_t^p$ as follows
\begin{align*}
\widetilde{\Delta}_t^p& = \left\| \alpha\nu_{t-1}\left  (\nabla_\theta f  (\theta_{t},\omega_{t};\xi_{i})  - \nabla_\theta f  (\theta_{t},\omega_{t};\xi_{i'}) \right) \right\|\\
&\leq 2\alpha\nu_{t-1}G\leq 2G\,,
\end{align*}
% $$
% \begin{aligned}
% \widetilde{\Delta_t^p}& = \left\| \alpha\nu_{t-1}\left  (\nabla_\theta f  (\theta_{t},\omega_{t};\xi_{i})  - \nabla_\theta f  (\theta_{t},\omega_{t};\xi_{i'}) \right) \right\|\\
% &\leq 2\alpha\nu_{t-1}G\leq 2G\,,
% \end{aligned}
% $$
% $$
% \begin{aligned}
% \widetilde{\Delta_t^d} &= \left\| \beta\nu_{t-1}\left  (\nabla_\omega f  (\theta_{t},\omega_{t};\xi_{i})  - \nabla_\omega f  (\theta_{t},\omega_{t};\xi_{i'}) \right) \right\|\\
% &\leq 2\beta\nu_{t-1}G\leq 2G\,,
% \end{aligned}
% $$
% The inequalities are because \dong{due to/because of} the stochastic gradient is $G$-Lipschitz and the coefficients $\beta \nu_{t-1}$ and $\alpha \nu_{t-1}$  \dong{which coefficient? $\beta$? } are both less than 1.
where the first inequality comes from Assumption \ref{asmp: G-lispchitz} and $\alpha \nu_{t-1}\leq 1$.
%\dong{due to/because of} 
% the stochastic gradient is $G$-Lipschitz and the coefficients $\beta \nu_{t-1}$ and $\alpha \nu_{t-1}$  
%\dong{which coefficient? $\beta$? } 
% are both less than 1.

\noindent \textbf{Privacy guarantee of $\mathcal{G}_t^p$.} 
% By Lemma \ref{lemma: RDP subsampling transformation}, if the Gaussian noises $u_t^p$ and $u_t^d$ have the following variance
By Lemma \ref{lemma: RDP subsampling transformation}, if the Gaussian noise $u_t^p$ has the following variance
\begin{align}\label{eq:sigma}
    \sigma^2_t = \frac{14G^2T\alpha'}{n^2  \left(\epsilon - \frac{\log  (1/\delta) }{\alpha'-1}\right) }\,,
\end{align}
where $\sigma'^2=\frac{\sigma_t^2}{4G^2}\geq 0.7$, $\alpha'=\frac{\log  (1/\delta) }{  (1-\beta') \epsilon}+1\leq 2\sigma^2\log  (\frac{n}{\alpha'  (1+\sigma'^2) }) /3+1$ and $\beta'\in   (0,1) $, 
then the mechanism $\mathcal{G}_t^p$ will satisfy $  \left(\alpha', \frac{14\alpha'G^2}{n^2\sigma_t^2}\right) $-RDP.
% $$
%   (\alpha', \frac{14\alpha'G^2}{n^2\sigma_t^2}) \text{-RDP}
% $$
% Obviously, the mechanisms \zhao{which one} in Case 1 and Case 2 are able to satisfy the same RDP under the same Gaussian noise. \zhao{to be fixed}

\noindent\textbf{Privacy guarantee of $\mathcal{M}^p_t$.} 
By the definition of $\mathcal{M}^p_t$ in Eq.  (\ref{eq: random mechanism of p_t}), $\mathcal{M}^p_t$ is composed of several Gaussian mechanisms, \textit{i.e.}, $\mathcal{M}_t^p=  (\mathcal{G}^p_0,...,\mathcal{G}^p_t) $. 
Then Lemma \ref{lemma: composition of RDP} implies that $\mathcal{M}_t^p$ satisfies $  (\alpha', \sum_{i=0}^t\frac{14\alpha'G^2}{n^2\sigma_t^2}) $-RDP. 
Thus the output on the primal side  satisfies $  \left(\alpha', \sum_{i=0}^T\frac{14\alpha'G^2}{n^2\sigma_t^2}\right) $-RDP. Finally, by using Lemma \ref{lemma: RDP_to_DP}, we transform RDP to DP and thus  the output satisfies 
\begin{align*}
    \left  (\sum_{i=0}^T\left  (\frac{14\alpha'G^2}{n^2\sigma_i^2}\right) +\frac{log  (1/\delta) }{\alpha'-1}, \delta\right) \text{-DP}\,.
\end{align*}
Substituting the value of $\sigma_i$ in Eq.  (\ref{eq:sigma})  simplifies the above result to $  (\epsilon, \delta) $-DP.
The proof of privacy guarantee of the dual side is similar to that of the primal side and is omitted here.
\end{proof}

% ---------------------------------------------- Proof of thm 5.2 ----------------------------------------
% \subsection{Proof of Theorem \ref{theorem: privacy guarantee   (TDP) }}

\section{Proof of Theorem \ref{theorem: utility}}
\label{app:sec:pf_theorem_utility}
In this section, we provide the proof of Theorem \ref{theorem: utility} , which gives the utility of Algorithm \ref{alg: DPTD} with $  (\epsilon,\delta) $-DP under state-action-state. 
% To this end, we will first introduce several lemmas to support our main proof.
To this end, we first introduce the following lemmas.

\begin{lemma}[\cite {lin2020gradient}]
\label{lemma: lispchitz constant of omega}
Under Assumptions \ref{asmp:wlipschitz}, \ref{asmp: strongly concave}, the mapping $\omega^\ast  (\theta) =\argmax_{\omega\in\Omega}F  (\theta,\omega) $ is Lipschitz continuous, which is
\begin{align*}
    \left\|\omega^{\ast}  (\theta) -\omega^{\ast}\left  (\theta^{\prime}\right) \right\| \leq L_{\omega}\left\|\theta-\theta^{\prime}\right\|, \quad \forall \theta, \theta^{\prime} \in \Theta
\end{align*}
% $$
% \left\|\omega^{*}  (\theta) -\omega^{*}\left  (\theta^{\prime}\right) \right\| \leq L_{\omega}\left\|\theta-\theta^{\prime}\right\|, \quad \forall \theta, \theta^{\prime} \in \Theta
% $$
where the Lipschitz constant is $L_\omega = \frac{L_F}{\mu}$.
\end{lemma}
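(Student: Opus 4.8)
The plan is to characterize each maximizer by its constrained first-order optimality condition and then interpolate between the strong concavity of $F(\theta,\cdot)$ (which yields a quadratic lower bound) and the Lipschitz smoothness of $\nabla F$ (which yields a linear upper bound). Fix $\theta,\theta'\in\Theta$ and write $\omega_1=\omega^\ast(\theta)$, $\omega_2=\omega^\ast(\theta')$. By Assumption \ref{asmp: strongly concave} the map $F(\theta,\cdot)$ is $\mu$-strongly concave, and by Assumption \ref{asmp: feasible convex set} the set $\Omega$ is closed and convex, so each maximizer is unique and $\omega^\ast(\cdot)$ is well defined.

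First I would record the variational inequalities. Since $\omega_1$ maximizes the concave function $F(\theta,\cdot)$ over the convex set $\Omega$, optimality gives $\langle \nabla_\omega F(\theta,\omega_1),\,\omega-\omega_1\rangle\le 0$ for all $\omega\in\Omega$, and likewise $\langle \nabla_\omega F(\theta',\omega_2),\,\omega-\omega_2\rangle\le 0$ for all $\omega\in\Omega$. Taking $\omega=\omega_2$ in the first and $\omega=\omega_1$ in the second and adding yields
\[
\langle \nabla_\omega F(\theta,\omega_1)-\nabla_\omega F(\theta',\omega_2),\,\omega_2-\omega_1\rangle \le 0 .
\]

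Next I would insert the intermediate gradient $\nabla_\omega F(\theta',\omega_1)$, splitting the left-hand side into a ``fixed-primal'' part $\langle \nabla_\omega F(\theta',\omega_1)-\nabla_\omega F(\theta',\omega_2),\,\omega_2-\omega_1\rangle$ and a ``fixed-dual'' part $\langle \nabla_\omega F(\theta,\omega_1)-\nabla_\omega F(\theta',\omega_1),\,\omega_2-\omega_1\rangle$. The fixed-primal part varies only the dual argument at the common point $\theta'$, so the strong-monotonicity form of Assumption \ref{asmp: strongly concave} lower bounds it by $\mu\|\omega_1-\omega_2\|^2$. Substituting and rearranging gives $\mu\|\omega_1-\omega_2\|^2 \le \langle \nabla_\omega F(\theta',\omega_1)-\nabla_\omega F(\theta,\omega_1),\,\omega_2-\omega_1\rangle$. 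I would then bound the right-hand side by Cauchy--Schwarz and Assumption \ref{asmp:wlipschitz}, using that $\nabla_\omega F$ is a block of $\nabla F$, so that $\|\nabla_\omega F(\theta',\omega_1)-\nabla_\omega F(\theta,\omega_1)\|\le L_F\|(\theta',\omega_1)-(\theta,\omega_1)\|=L_F\|\theta-\theta'\|$. This produces $\mu\|\omega_1-\omega_2\|^2 \le L_F\|\theta-\theta'\|\,\|\omega_1-\omega_2\|$; dividing by $\mu\|\omega_1-\omega_2\|$ (the case $\omega_1=\omega_2$ being trivial) gives $\|\omega_1-\omega_2\|\le (L_F/\mu)\|\theta-\theta'\|$, i.e.\ $L_\omega=L_F/\mu$.

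The only delicate point is that $\omega^\ast(\theta)$ may sit on the boundary of $\Omega$, so the argument must run on the constrained optimality (variational) inequalities rather than on the unconstrained stationarity $\nabla_\omega F=0$; convexity of $\Omega$ (Assumption \ref{asmp: feasible convex set}) is precisely what legitimizes adding the two optimality inequalities and evaluating each at the other's maximizer. Everything after that is the standard interpolation between strong monotonicity and Lipschitz smoothness, so I anticipate no further obstacle.
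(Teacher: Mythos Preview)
Your argument is correct and is in fact the standard proof of this result. Note, however, that the paper does not supply its own proof of this lemma: it is quoted from \cite{lin2020gradient} and used as a black box, so there is no paper-side argument to compare against. Your variational-inequality route (combine the two first-order optimality conditions on the convex set $\Omega$, split via the intermediate gradient $\nabla_\omega F(\theta',\omega_1)$, apply strong monotonicity for the lower bound and Lipschitzness of $\nabla F$ plus Cauchy--Schwarz for the upper bound) is exactly how the cited reference establishes it.

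One small remark: the lemma as stated in the paper lists only Assumptions~\ref{asmp:wlipschitz} and~\ref{asmp: strongly concave}, but your proof (correctly) also invokes Assumption~\ref{asmp: feasible convex set} to justify the constrained optimality inequalities and to take $\omega=\omega_2\in\Omega$ and $\omega=\omega_1\in\Omega$ in the two variational inequalities. That dependence is genuine and is implicitly assumed throughout the paper; you are right to make it explicit.
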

Lemma \ref{lemma: lispchitz constant of omega} shows that $\omega^\ast  (\theta) $ also satisfies $L_\omega$-Lipschitz if we view $\omega^*  (\theta) $ as a mapping from the set $\Theta$ to the set $\Omega$.

\begin{lemma}[\cite {qiu2020single}]
\label{lemma: first in main proof}
Under Assumptions \ref{asmp:wlipschitz}, \ref{asmp: feasible convex set}, \ref{asmp: strongly concave}, letting $0<\kappa \nu_t\leq \mu/  (16L^2_F) $ and $\nu_t\leq1$, with the updating rules shown in Algorithm \ref{alg: DPTD}, we have
\begin{align*}
    J\left  (\theta_{t+1}\right) -J\left  (\theta_{t}\right)  
\leq -\frac{3 \nu_{t}}{4\kappa}\left\|\widetilde{\theta}_{t+1}-\theta_{t}\right\|^{2} +2 L_{F}^{2} \kappa \nu_{t}\left\|\omega_{t}-\omega\left  (\theta_{t}\right) ^{*}\right\|^{2}+4 \kappa \nu_{t}\left\|\nabla_{\theta} F\left  (\theta_{t}, \omega_{t}\right) -p_{t}\right\|^{2}\,, 
\end{align*}
% $$
% \begin{aligned}
% &J\left  (\theta_{t+1}\right) -J\left  (\theta_{t}\right)  \\
% \leq& -\frac{3 \nu_{t}}{4\kappa}\left\|\widetilde{\theta}_{t+1}-\theta_{t}\right\|^{2} +2 L_{F}^{2} \kappa \nu_{t}\left\|\omega_{t}-\omega\left  (\theta_{t}\right) ^{*}\right\|^{2}+4 \kappa \nu_{t}\left\|\nabla_{\theta} F\left  (\theta_{t}, \omega_{t}\right) -p_{t}\right\|^{2}\,, 
% \end{aligned}
% $$
where $J  (\theta) =\max _{\omega \in \Omega} F  (\theta, \omega) $ and $\omega^{*}  (\theta) :=\operatorname{argmax}_{\omega \in \Omega} F  (\theta, \omega) $.
\end{lemma}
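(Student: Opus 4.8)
The plan is to treat $J(\theta)=\max_{\omega\in\Omega}F(\theta,\omega)$ as a smooth surrogate for the primal objective and run a one-step descent argument tailored to the projected, momentum-driven update $\theta_{t+1}=\theta_t+\nu_t(\widetilde\theta_{t+1}-\theta_t)$. The three error sources that must be separated are: the descent supplied by the projected step, the gap between the stale dual iterate $\omega_t$ and the true inner maximizer $\omega^*(\theta_t)$, and the gap between the noisy momentum estimator $p_t$ and the exact partial gradient $\nabla_\theta F(\theta_t,\omega_t)$. These three are exactly the three terms appearing on the right-hand side of the claim.

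First I would establish that $J$ is differentiable with $\nabla J(\theta)=\nabla_\theta F(\theta,\omega^*(\theta))$. This is Danskin's theorem, which applies because Assumption \ref{asmp: strongly concave} makes the inner maximizer $\omega^*(\theta)$ unique and Assumption \ref{asmp: feasible convex set} makes $\Omega$ convex. Combining this identity with the $L_F$-Lipschitz continuity of $\nabla F$ (Assumption \ref{asmp:wlipschitz}) and the $L_\omega$-Lipschitz continuity of $\omega^*$ from Lemma \ref{lemma: lispchitz constant of omega} (with $L_\omega=L_F/\mu$), I would show $J$ is $L_J$-smooth with $L_J=L_F(1+L_F/\mu)\le 2L_F^2/\mu$, where the final inequality uses $\mu\le L_F$ (the strong-concavity modulus of $F(\theta,\cdot)$ never exceeds its smoothness modulus).

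Next, applying the smoothness descent inequality to $J$ and using $\theta_{t+1}-\theta_t=\nu_t(\widetilde\theta_{t+1}-\theta_t)$ gives, with $v:=\widetilde\theta_{t+1}-\theta_t$,
\[
J(\theta_{t+1})-J(\theta_t)\le \nu_t\langle\nabla J(\theta_t),v\rangle+\tfrac{L_J\nu_t^2}{2}\|v\|^2.
\]
The crux is the inner product. Testing the variational inequality for $\widetilde\theta_{t+1}=\mathcal{P}_\Theta(\theta_t-\kappa p_t)$ against $\theta_t\in\Theta$ yields $\|v\|^2+\kappa\langle p_t,v\rangle\le 0$, hence $\langle p_t,v\rangle\le -\tfrac1\kappa\|v\|^2$. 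I then write $\nabla J(\theta_t)=p_t+A+B$ with $A=\nabla_\theta F(\theta_t,\omega^*(\theta_t))-\nabla_\theta F(\theta_t,\omega_t)$ and $B=\nabla_\theta F(\theta_t,\omega_t)-p_t$, and bound $\|A\|\le L_F\|\omega_t-\omega^*(\theta_t)\|$ by Assumption \ref{asmp:wlipschitz}.

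Finally, I would apply two weighted Young's inequalities to $\langle A,v\rangle$ and $\langle B,v\rangle$ with parameters $a=1/(4\kappa)$ and $b=1/(8\kappa)$; these exact choices produce precisely the coefficients $2L_F^2\kappa\nu_t$ and $4\kappa\nu_t$ in the statement. Collecting the $\|v\|^2$ contributions leaves the coefficient $\bigl(-\tfrac1\kappa+\tfrac a2+\tfrac b2\bigr)\nu_t+\tfrac{L_J\nu_t^2}{2}=-\tfrac{13}{16\kappa}\nu_t+\tfrac{L_J\nu_t^2}{2}$, and here the step-size condition $\kappa\nu_t\le \mu/(16L_F^2)$ controls the curvature term: together with $L_J\le 2L_F^2/\mu$ it gives $\tfrac{L_J\nu_t^2}{2}\le \tfrac{\nu_t}{16\kappa}$, so the coefficient is at most $-\tfrac{12}{16\kappa}\nu_t=-\tfrac{3\nu_t}{4\kappa}$, matching the claim exactly. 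I expect the main obstacle to be the smoothness bookkeeping of the first step — justifying Danskin's formula under the constrained maximization and pinning $L_J$ down sharply enough that the budget $\mu/(16L_F^2)$ suffices — rather than the final algebra, which closes tightly (indeed with equality at several places) for the indicated Young parameters.
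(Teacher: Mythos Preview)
Your proof is correct and complete. The paper itself does not prove this lemma: it is quoted verbatim as a result of \cite{qiu2020single} and used as a black box in the proof of Theorem~\ref{theorem: utility}. Your argument is the standard one for such descent lemmas in nonconvex--strongly-concave minimax optimization, and is almost certainly what the cited reference does as well: establish smoothness of $J$ via Danskin's theorem and Lemma~\ref{lemma: lispchitz constant of omega}, apply the descent inequality along the step $\theta_{t+1}-\theta_t=\nu_t(\widetilde\theta_{t+1}-\theta_t)$, extract $\langle p_t,v\rangle\le -\tfrac1\kappa\|v\|^2$ from the projection variational inequality, and absorb the two error terms $\nabla_\theta F(\theta_t,\omega^*(\theta_t))-\nabla_\theta F(\theta_t,\omega_t)$ and $\nabla_\theta F(\theta_t,\omega_t)-p_t$ by weighted Young's inequalities. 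Your choices of Young parameters and the resulting arithmetic are exactly right, and the step-size hypothesis $\kappa\nu_t\le\mu/(16L_F^2)$ is used precisely where you indicate, to bound $\tfrac{L_J\nu_t^2}{2}\le\tfrac{\nu_t}{16\kappa}$ via $L_J\le 2L_F^2/\mu$. The condition $\nu_t\le 1$ is not actually needed in your argument (nor for the inequality itself); it appears in the lemma statement only to keep $\theta_{t+1}$ a convex combination of $\theta_t$ and $\widetilde\theta_{t+1}\in\Theta$, which matters elsewhere in the analysis but not here.
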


\begin{lemma}[\cite {qiu2020single}]
\label{lemma: second in main proof}
Under Assumptions \ref{asmp:wlipschitz}, \ref{asmp: feasible convex set}, \ref{asmp: strongly concave}, letting $0<\nu_t\leq 1/8$ and $0<\eta\leq  (4L_F) ^{-1}$, with the updating rules shown in Algorithm \ref{alg: DPTD}, we have
\begin{align*}
    \left\|\omega_{t+1}-\omega^{*}\left  (\theta_{t}\right) \right\|^{2} \leq \left  (1-\frac{\nu_{t} \eta \mu}{2}\right) \left\|\omega_{t}-\omega^{*}\left  (\theta_{t}\right) \right\|^{2}-\frac{3 \nu_{t}}{4}\left\|\widetilde{\omega}_{t+1}-\omega_{t}\right\|^{2}+\frac{4 \eta \nu_{t}}{\mu}\left\|\nabla_{\omega} F\left  (\theta_{t}, \omega_{t}\right) -d_{t}\right\|^{2}\,,
\end{align*}
where $\omega^{*}\left  (\theta_{t}\right) =\operatorname{argmax}_{\omega \in \Omega} F\left  (\theta_{t}, \omega\right) $.
\end{lemma}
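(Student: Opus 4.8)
The plan is to propagate the squared dual distance $\|\omega_{t+1}-\omega^*(\theta_t)\|^2$ through the two dual updates in Algorithm~\ref{alg: DPTD}, extracting a strong-concavity contraction from the projected-ascent step while reading off the progress term from the averaging step. Abbreviate $\omega^*:=\omega^*(\theta_t)$ and write the dual gradient error as $e_t:=\nabla_\omega F(\theta_t,\omega_t)-d_t$. First I would dispose of the momentum/averaging update $\omega_{t+1}=(1-\nu_t)\omega_t+\nu_t\widetilde\omega_{t+1}$ (a genuine convex combination since $\nu_t\le 1/8\le 1$). The elementary identity for convex combinations gives
\begin{align*}
\|\omega_{t+1}-\omega^*\|^2=(1-\nu_t)\|\omega_t-\omega^*\|^2+\nu_t\|\widetilde\omega_{t+1}-\omega^*\|^2-\nu_t(1-\nu_t)\|\widetilde\omega_{t+1}-\omega_t\|^2\,.
\end{align*}
Because $\nu_t\le 1/8\le 1/4$ we have $\nu_t(1-\nu_t)\ge \tfrac34\nu_t$, so the last term is already at most $-\tfrac{3\nu_t}{4}\|\widetilde\omega_{t+1}-\omega_t\|^2$, which is exactly the progress term in the claim. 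It therefore remains to show the contraction bound
\begin{align*}
\|\widetilde\omega_{t+1}-\omega^*\|^2\le\Big(1-\tfrac{\eta\mu}{2}\Big)\|\omega_t-\omega^*\|^2+\tfrac{4\eta}{\mu}\|e_t\|^2\,.
\end{align*}

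To prove this I would exploit that $\widetilde\omega_{t+1}=\mathcal P_\Omega(\omega_t+\eta d_t)$ while the maximizer satisfies the fixed-point relation $\omega^*=\mathcal P_\Omega(\omega^*+\eta\nabla_\omega F(\theta_t,\omega^*))$, valid for any $\eta>0$ by first-order optimality of $\omega^*$ over the convex set $\Omega$ (Assumption~\ref{asmp: feasible convex set}). Non-expansiveness of the Euclidean projection then gives $\|\widetilde\omega_{t+1}-\omega^*\|\le\|(\omega_t-\omega^*)+\eta(d_t-\nabla_\omega F(\theta_t,\omega^*))\|$, into which I substitute $d_t-\nabla_\omega F(\theta_t,\omega^*)=[\nabla_\omega F(\theta_t,\omega_t)-\nabla_\omega F(\theta_t,\omega^*)]-e_t$. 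Writing $v:=(\omega_t-\omega^*)+\eta[\nabla_\omega F(\theta_t,\omega_t)-\nabla_\omega F(\theta_t,\omega^*)]$, the co-coercivity inequality for the $\mu$-strongly-concave, $L_F$-smooth map $F(\theta_t,\cdot)$ (combining Assumptions~\ref{asmp:wlipschitz} and~\ref{asmp: strongly concave}) together with the step-size bound $\eta\le(4L_F)^{-1}\le 2/(\mu+L_F)$ yields $\|v\|^2\le(1-\eta\mu)\|\omega_t-\omega^*\|^2$; the noise enters only through $\|v-\eta e_t\|^2$, and a Young split with weight $\tfrac{\eta\mu}{2}$ (using $\mu\le L_F$ so that $\eta\le 2/\mu$) converts the cross term into half of the available contraction and leaves the error coefficient $\eta^2+\tfrac{2\eta}{\mu}\le\tfrac{4\eta}{\mu}$, giving precisely the displayed bound.

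Combining the two displays finishes the proof: the $\|\omega_t-\omega^*\|^2$ coefficients assemble as $(1-\nu_t)+\nu_t(1-\tfrac{\eta\mu}{2})=1-\tfrac{\nu_t\eta\mu}{2}$, the noise term scales by $\nu_t$ to $\tfrac{4\eta\nu_t}{\mu}\|\nabla_\omega F(\theta_t,\omega_t)-d_t\|^2$, and the averaging step already supplies $-\tfrac{3\nu_t}{4}\|\widetilde\omega_{t+1}-\omega_t\|^2$. I expect the main obstacle to be the contraction step: one must obtain the $(1-\tfrac{\eta\mu}{2})$ factor \emph{and} the exact $\tfrac{4\eta}{\mu}$ error coefficient in the constrained setting, where $\nabla_\omega F(\theta_t,\omega^*)$ need not vanish, so the argument has to be routed through the maximizer's fixed-point identity and co-coercivity rather than through a naive gradient-norm bound, with the constraints $\eta\le(4L_F)^{-1}$ and $\nu_t\le 1/8$ tuned precisely so that every residual is nonpositive.
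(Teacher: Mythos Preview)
Your argument is correct. Note that the paper itself does not prove this lemma: it is simply quoted from \cite{qiu2020single}, so there is no in-paper proof to compare against. Your route---the convex-combination identity for the averaging step (yielding the $-\tfrac{3\nu_t}{4}\|\widetilde\omega_{t+1}-\omega_t\|^2$ term via $\nu_t\le 1/4$), the fixed-point characterization $\omega^*=\mathcal P_\Omega(\omega^*+\eta\nabla_\omega F(\theta_t,\omega^*))$ together with projection nonexpansiveness, the strong-convexity/co-coercivity estimate $\|v\|^2\le(1-\eta\mu)\|\omega_t-\omega^*\|^2$ valid for $\eta\le 1/L_F$, and the Young split with parameter $\tfrac{\eta\mu}{2}$ giving the error coefficient $\eta^2+2\eta/\mu\le 4\eta/\mu$---is the standard one and every step checks out under the stated hypotheses $\eta\le(4L_F)^{-1}$ and $\nu_t\le 1/8$.
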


\begin{lemma}[\cite {qiu2020single}]
\label{lemma: third in main proof}
Under Assumptions \ref{asmp:wlipschitz}, \ref{asmp: feasible convex set}, \ref{asmp: strongly concave}, letting $0<\nu_t\leq 1/8$ and $0<\eta\leq  (4L_F) ^{-1}$, with the updating rules shown in Algorithm \ref{alg: DPTD}, we have
\begin{align*}
    \left\|\omega_{t+1}-\omega^{*}\left  (\theta_{t+1}\right) \right\|^{2} 
\leq& \left  (1-\frac{\mu \eta \nu_{t}}{4}\right) \left\|\omega_{t}-\omega^{*}\left  (\theta_{t}\right) \right\|^{2}-\frac{3 \nu_{t}}{4}\left\|\widetilde{\omega}_{t+1}-\omega_{t}\right\|^{2} +\frac{75 \eta \nu_{t}}{16 \mu}\left\|d_{t}-\nabla_{\omega} F\left  (\theta_{t}, \omega_{t}\right) \right\|^{2}\\
&+\frac{75 L_{\omega}^{2} \nu_{t}}{16 \mu \eta}\left\|\widetilde{\theta}_{t+1}-\theta_{t}\right\|^{2}\,,
\end{align*}
% $$
% \begin{aligned}
% &\left\|\omega_{t+1}-\omega^{*}\left  (\theta_{t+1}\right) \right\|^{2} \\
% \leq &\left  (1-\frac{\mu \eta \nu_{t}}{4}\right) \left\|\omega_{t}-\omega^{*}\left  (\theta_{t}\right) \right\|^{2}-\frac{3 \nu_{t}}{4}\left\|\widetilde{\omega}_{t+1}-\omega_{t}\right\|^{2} \\
% &+\frac{75 \eta \nu_{t}}{16 \mu}\left\|d_{t}-\nabla_{\omega} F\left  (\theta_{t}, \omega_{t}\right) \right\|^{2}+\frac{75 L_{\omega}^{2} \nu_{t}}{16 \mu \eta}\left\|\widetilde{\theta}_{t+1}-\theta_{t}\right\|^{2}\,,
% \end{aligned}
% $$
where $\omega^{*}\left  (\theta_{t}\right) =\operatorname{argmax}_{\omega \in \Omega} F\left  (\theta_{t}, \omega\right) $ and $\omega^{*}\left  (\theta_{t+1}\right) =\operatorname{argmax}_{\omega \in \Omega} F\left  (\theta_{t+1}, \omega\right) $.
\end{lemma}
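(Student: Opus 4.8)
The plan is to derive Lemma \ref{lemma: third in main proof} from Lemma \ref{lemma: second in main proof} by accounting for the drift of the moving target $\omega^*(\theta)$ as the primal iterate moves from $\theta_t$ to $\theta_{t+1}$. Lemma \ref{lemma: second in main proof} already controls $\|\omega_{t+1}-\omega^*(\theta_t)\|^2$ against the \emph{fixed} target $\omega^*(\theta_t)$, so the only additional work is to replace that target by $\omega^*(\theta_{t+1})$ while paying for the displacement $\|\omega^*(\theta_t)-\omega^*(\theta_{t+1})\|$.

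First I would split, for a parameter $\gamma>0$ to be fixed later, via the elementary inequality $\|a+b\|^2\le(1+\gamma)\|a\|^2+(1+1/\gamma)\|b\|^2$,
\begin{align*}
\left\|\omega_{t+1}-\omega^*(\theta_{t+1})\right\|^2\le(1+\gamma)\left\|\omega_{t+1}-\omega^*(\theta_t)\right\|^2+\left(1+\tfrac{1}{\gamma}\right)\left\|\omega^*(\theta_t)-\omega^*(\theta_{t+1})\right\|^2\,.
\end{align*}
The drift term is handled by Lemma \ref{lemma: lispchitz constant of omega}, which gives $\|\omega^*(\theta_t)-\omega^*(\theta_{t+1})\|\le L_\omega\|\theta_t-\theta_{t+1}\|$, combined with the primal update $\theta_{t+1}=\theta_t+\nu_t(\widetilde\theta_{t+1}-\theta_t)$, so that $\|\theta_{t+1}-\theta_t\|=\nu_t\|\widetilde\theta_{t+1}-\theta_t\|$ and the second summand is at most $(1+1/\gamma)L_\omega^2\nu_t^2\|\widetilde\theta_{t+1}-\theta_t\|^2$. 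For the first summand I would substitute the bound from Lemma \ref{lemma: second in main proof} verbatim.

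The crux is the choice $\gamma=\mu\eta\nu_t/4$, after which each coefficient is checked against the target. The contraction factor becomes $(1+\gamma)(1-\mu\eta\nu_t/2)=1-\mu\eta\nu_t/4-(\mu\eta\nu_t)^2/8\le1-\mu\eta\nu_t/4$, matching the leading term exactly; the negative $\|\widetilde\omega_{t+1}-\omega_t\|^2$ term only becomes more negative, so $(1+\gamma)(-3\nu_t/4)\le-3\nu_t/4$ survives; the gradient-error coefficient obeys $(1+\gamma)\tfrac{4\eta\nu_t}{\mu}\le\tfrac{75\eta\nu_t}{16\mu}$ because the step-size hypotheses force $\gamma$ to be tiny; and the drift coefficient expands to $L_\omega^2\nu_t^2+\tfrac{4L_\omega^2\nu_t}{\mu\eta}\le\tfrac{75L_\omega^2\nu_t}{16\mu\eta}$, the leftover $L_\omega^2\nu_t^2$ being absorbable since $\nu_t\le1/8$.

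I expect the only real obstacle—otherwise pure bookkeeping—to be verifying that a \emph{single} choice of $\gamma$ satisfies all four coefficient inequalities at once. The tension is structural: the factor $(1+\gamma)$ degrading the contraction wants $\gamma$ small, whereas the factor $(1+1/\gamma)$ inflating the drift term wants $\gamma$ large. Taking $\gamma\propto\mu\eta\nu_t$ balances the two, and the hypotheses $\nu_t\le1/8$ and $\eta\le(4L_F)^{-1}$, together with $\mu\le L_F$ (valid since $F(\theta,\cdot)$ is simultaneously $\mu$-strongly concave and $L_F$-smooth by Assumptions \ref{asmp:wlipschitz} and \ref{asmp: strongly concave}), keep all the resulting constants below the stated factor $75/16$.
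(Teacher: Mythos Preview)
The paper does not give its own proof of this lemma: it is quoted verbatim from \cite{qiu2020single} and used as a black box, so there is nothing in the paper to compare your argument against. That said, your derivation is correct and is essentially the standard route one would expect in the cited source: combine Lemma~\ref{lemma: second in main proof} with a Young-inequality split $\|\omega_{t+1}-\omega^*(\theta_{t+1})\|^2\le(1+\gamma)\|\omega_{t+1}-\omega^*(\theta_t)\|^2+(1+1/\gamma)\|\omega^*(\theta_t)-\omega^*(\theta_{t+1})\|^2$, bound the drift via Lemma~\ref{lemma: lispchitz constant of omega} and the primal update, and take $\gamma=\mu\eta\nu_t/4$. Your coefficient checks all go through under the stated hypotheses (in particular $\mu\eta\nu_t\le 1/32$ suffices for both the gradient-error and drift constants to land below $75/16$), so the proposal is sound.
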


\begin{lemma}[Bounded variance]
\label{lemma: bounded variance}
Under Assumption \ref{asmp: G-lispchitz}, the variance of the stochastic gradient $\nabla f (\theta, \omega; \xi) =\left (\nabla_{\theta} f (\theta, \omega;\xi) , \nabla_{\omega} f (\theta, \omega;\xi) \right) $ is bounded as $\mathbb{E}_{\xi \sim \Xi }\|\nabla f (\theta, \omega;\xi) -\nabla F (\theta, \omega) \|^{2} \leq \sigma^{2}$, where $\sigma^2=2G^2$.
\end{lemma}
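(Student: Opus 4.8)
The plan is to show the claim follows almost immediately from Assumption~\ref{asmp: G-lispchitz} once we translate the stated Lipschitz continuity of $f(\cdot,\cdot;\xi)$ into a uniform bound on its partial gradients, and then combine this with the fact that $\nabla F$ is the expectation of $\nabla f$.

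First I would argue that for every fixed $\xi$ and every $(\theta,\omega)\in\Theta\times\Omega$, Assumption~\ref{asmp: G-lispchitz}, applied with a perturbation only in the primal coordinate, yields $\|\nabla_\theta f(\theta,\omega;\xi)\|\le G$, and applied with a perturbation only in the dual coordinate, yields $\|\nabla_\omega f(\theta,\omega;\xi)\|\le G$. Concretely, for any unit vector $v$ one has $|\langle \nabla_\theta f(\theta,\omega;\xi),v\rangle| = \lim_{h\to 0}|f(\theta+hv,\omega;\xi)-f(\theta,\omega;\xi)|/|h| \le G$, using the differentiability of $f$ in its parameters (part of the smooth nonlinear function approximation setup, consistent with Assumption~\ref{asmp:wlipschitz}); the same argument applies in $\omega$. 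Consequently
\begin{align*}
\|\nabla f(\theta,\omega;\xi)\|^2 = \|\nabla_\theta f(\theta,\omega;\xi)\|^2 + \|\nabla_\omega f(\theta,\omega;\xi)\|^2 \le 2G^2 .
\end{align*}

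Next I would invoke $\nabla F(\theta,\omega) = \mathbb{E}_{\xi\sim\Xi}[\nabla f(\theta,\omega;\xi)]$ — the interchange of gradient and expectation being justified because the integrand's gradient is uniformly bounded by $\sqrt{2}\,G$ — together with the standard variance identity $\mathbb{E}\|X-\mathbb{E}X\|^2 = \mathbb{E}\|X\|^2 - \|\mathbb{E}X\|^2$, to obtain
\begin{align*}
\mathbb{E}_{\xi\sim\Xi}\left\|\nabla f(\theta,\omega;\xi) - \nabla F(\theta,\omega)\right\|^2 = \mathbb{E}_{\xi\sim\Xi}\|\nabla f(\theta,\omega;\xi)\|^2 - \|\nabla F(\theta,\omega)\|^2 \le 2G^2 ,
\end{align*}
which is exactly the stated bound with $\sigma^2 = 2G^2$.

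I do not anticipate a genuine obstacle; the only mildly delicate point is the first step — passing from the Lipschitz inequality on $f$ to the pointwise gradient bound — and, relatedly, the interchange of gradient and expectation. Both are routine once differentiability of the parametric value-function family is assumed, and I would either cite this standard fact or include the short limiting argument above.
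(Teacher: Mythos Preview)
Your proposal is correct and follows essentially the same approach as the paper: both derive the pointwise bounds $\|\nabla_\theta f\|\le G$ and $\|\nabla_\omega f\|\le G$ from Assumption~\ref{asmp: G-lispchitz}, then use $\nabla F=\mathbb{E}[\nabla f]$ together with the variance identity to conclude. The only cosmetic difference is that the paper expands $\mathbb{E}\|\nabla_\theta f-\nabla_\theta F\|^2$ and $\mathbb{E}\|\nabla_\omega f-\nabla_\omega F\|^2$ separately before combining, whereas you bound $\|\nabla f\|^2\le 2G^2$ for the stacked gradient at once and apply $\mathbb{E}\|X-\mathbb{E}X\|^2=\mathbb{E}\|X\|^2-\|\mathbb{E}X\|^2$ directly; your route is slightly cleaner and yields the stated constant $2G^2$ without ambiguity.
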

Lemma \ref{lemma: bounded variance} shows that the stochastic gradient $\nabla f (\theta,\omega;\xi) $ is bounded by a constant $\sigma^2$, related to the property of function $f$ and $F$. 
% The detailed proof of Lemma \ref{lemma: bounded variance} is behind the main proof. 
% Armed with Lemma \ref{lemma: bounded variance},  the following lemma gives two inequalities and the detailed proof is behind the main proof.
Armed with Lemma \ref{lemma: bounded variance},  the following lemma further upper bounds the variances of the gradient estimators on the primal side and the dual side and  the detailed proof is deferred to Appendix \ref{sec:app:pf_tech_lem}.
\begin{lemma}[With bounded  variance]
\label{lemma: forth in main proof}
Under Assumptions \ref{asmp:wlipschitz}, \ref{asmp: feasible convex set}, \ref{asmp: strongly concave}, letting $0<\nu_t\leq   (8\alpha) ^{-1}$ and $0<\eta\leq  (4L_F) ^{-1}$, with the updating rules shown in Algorithm \ref{alg: DPTD}, we have
\begin{align*}
     \mathbb{E}\left\|\nabla_{\theta} F\left  (\theta_{t+1}, \omega_{t+1}\right) -p_{t+1}\right\|^{2}  \leq& \left  (1-\alpha \nu_{t}\right)  \mathbb{E}\left\|\nabla_{\theta} F\left  (\theta_{t}, \omega_{t}\right) -p_{t}\right\|^{2} +\frac{9 \nu_{t} L_{F}^{2}}{8 \alpha} \mathbb{E}\left  (\left\|\widetilde{\theta}_{t+1}-\theta_{t}\right\|^{2}+\left\|\widetilde{\omega}_{t+1}-\omega_{t}\right\|^{2}\right) \\
&+\alpha^{2} \nu_{t}^{2} \sigma^{2} + d\sigma_{t+1}^2\,,
\end{align*}
and
\begin{align*}
    \mathbb{E}\left\|\nabla_{\omega} F\left  (\theta_{t+1}, \omega_{t+1}\right) -d_{t+1}\right\|^{2} \leq &\left  (1-\alpha \nu_{t}\right)  \mathbb{E}\left\|\nabla_{\omega} F\left  (\theta_{t}, \omega_{t}\right) -d_{t}\right\|^{2} +\frac{9 \nu_{t} L_{F}^{2}}{8 \alpha} \mathbb{E}\left  (\left\|\widetilde{\theta}_{t+1}-\theta_{t}\right\|^{2}+\left\|\widetilde{\omega}_{t+1}-\omega_{t}\right\|^{2}\right) \\
&+\alpha^{2} \nu_{t}^{2} \sigma^{2} + d\sigma_{t+1}^2\,,
\end{align*}
where $\sigma^2=2G^2$.
% $$
% \begin{aligned}
% & \mathbb{E}\left\|\nabla_{\theta} F\left  (\theta_{t+1}, \omega_{t+1}\right) -p_{t+1}\right\|^{2} \\ \leq& \left  (1-\alpha \nu_{t}\right)  \mathbb{E}\left\|\nabla_{\theta} F\left  (\theta_{t}, \omega_{t}\right) -p_{t}\right\|^{2} \\
% &+\frac{9 \nu_{t} L_{F}^{2}}{8 \alpha} \mathbb{E}\left  (\left\|\widetilde{\theta}_{t+1}-\theta_{t}\right\|^{2}+\left\|\widetilde{\omega}_{t+1}-\omega_{t}\right\|^{2}\right) \\
% &+\alpha^{2} \nu_{t}^{2} \sigma^{2} + d\sigma_{t+1}^2\,,
% \end{aligned}
% $$
% $$
% \begin{aligned}
% &\mathbb{E}\left\|\nabla_{\omega} F\left  (\theta_{t+1}, \omega_{t+1}\right) -d_{t+1}\right\|^{2} \\ \leq &\left  (1-\alpha \nu_{t}\right)  \mathbb{E}\left\|\nabla_{\omega} F\left  (\theta_{t}, \omega_{t}\right) -d_{t}\right\|^{2} \\
% &+\frac{9 \nu_{t} L_{F}^{2}}{8 \alpha} \mathbb{E}\left  (\left\|\widetilde{\theta}_{t+1}-\theta_{t}\right\|^{2}+\left\|\widetilde{\omega}_{t+1}-\omega_{t}\right\|^{2}\right) \\
% &+\alpha^{2} \nu_{t}^{2} \sigma^{2} + d\sigma_{t+1}^2\,,
% \end{aligned}
% $$
\end{lemma}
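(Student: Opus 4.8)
\textbf{Proof proposal for Lemma \ref{lemma: forth in main proof}.}

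The plan is to track the one-step evolution of the gradient-estimator error $\|\nabla_\theta F(\theta_{t+1},\omega_{t+1})-p_{t+1}\|^2$ and exploit the momentum (SARAH/STORM-style) recursion built into line \ref{algo:momentum} of Algorithm \ref{alg: DPTD}, then add the contribution of the injected Gaussian noise at the end. First I would write $p_{t+1}=p'_{t+1}+u^p_{t+1}$ and, since $u^p_{t+1}\sim N(0,\sigma_{t+1}^2\mathbf I_d)$ is drawn independently of everything else and has mean zero, use $\mathbb{E}\|X+u^p_{t+1}\|^2=\mathbb{E}\|X\|^2+\mathbb{E}\|u^p_{t+1}\|^2=\mathbb{E}\|X\|^2+d\sigma_{t+1}^2$ to peel off the additive term $d\sigma_{t+1}^2$. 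This reduces the claim to the noiseless bound on $\mathbb{E}\|\nabla_\theta F(\theta_{t+1},\omega_{t+1})-p'_{t+1}\|^2$, which is exactly the single-timescale momentum estimator analysis of \cite{qiu2020single}; I would reproduce it here for completeness.

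For the noiseless part, substitute the definition $p'_{t+1}=(1-\alpha\nu_t)p_t+\alpha\nu_t\nabla_\theta f(\theta_{t+1},\omega_{t+1};\xi_{t+1})$ and rewrite
\begin{align*}
\nabla_\theta F(\theta_{t+1},\omega_{t+1})-p'_{t+1}
&=(1-\alpha\nu_t)\big(\nabla_\theta F(\theta_t,\omega_t)-p_t\big)\\
&\quad+(1-\alpha\nu_t)\big(\nabla_\theta F(\theta_{t+1},\omega_{t+1})-\nabla_\theta F(\theta_t,\omega_t)\big)\\
&\quad+\alpha\nu_t\big(\nabla_\theta F(\theta_{t+1},\omega_{t+1})-\nabla_\theta f(\theta_{t+1},\omega_{t+1};\xi_{t+1})\big)\,.
\end{align*}
Conditioning on the filtration up to step $t$, the last term has mean zero (Assumption \ref{asmp: iid sampling}, unbiased sampling) and is uncorrelated with the first two, so the cross terms between it and the rest vanish in expectation; its squared norm is bounded by $\alpha^2\nu_t^2\sigma^2$ via Lemma \ref{lemma: bounded variance}. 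For the remaining two terms I would apply Young's inequality $\|a+b\|^2\le(1+\lambda)\|a\|^2+(1+\lambda^{-1})\|b\|^2$ with a suitably chosen $\lambda$ of order $\alpha\nu_t$, so that $(1-\alpha\nu_t)^2(1+\lambda)\le 1-\alpha\nu_t$ (here the constraints $0<\nu_t\le(8\alpha)^{-1}$ and $\alpha\nu_t<1$ are what make the bookkeeping work). The displacement term $\|\nabla_\theta F(\theta_{t+1},\omega_{t+1})-\nabla_\theta F(\theta_t,\omega_t)\|^2$ is controlled by $L_F^2\|(\theta_{t+1},\omega_{t+1})-(\theta_t,\omega_t)\|^2$ using Assumption \ref{asmp:wlipschitz}, and then by $L_F^2\nu_t^2(\|\widetilde\theta_{t+1}-\theta_t\|^2+\|\widetilde\omega_{t+1}-\omega_t\|^2)$ using the update rule in line \ref{algo:moving_average}; absorbing the $(1-\alpha\nu_t)^2(1+\lambda^{-1})$ factor and $\nu_t^2\le\nu_t/(8\alpha)$ yields the stated coefficient $\tfrac{9\nu_t L_F^2}{8\alpha}$. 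Collecting terms and adding back $d\sigma_{t+1}^2$ gives the first inequality; the dual-side inequality is identical with $\nabla_\theta f\to\nabla_\omega f$, $p\to d$, using $\beta=\alpha$ as set in Theorem \ref{theorem: utility}.

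The main obstacle is getting the constants to line up cleanly: one must choose the Young's-inequality parameter $\lambda$ and the split between the $(1+\lambda)$ and $(1+\lambda^{-1})$ branches so that the contraction factor is exactly $1-\alpha\nu_t$ (not merely $1-c\alpha\nu_t$ for some $c<1$) while the displacement coefficient stays at $\tfrac{9L_F^2}{8\alpha}\nu_t$; this is where the precise numeric hypotheses $\nu_t\le(8\alpha)^{-1}$ and $\eta\le(4L_F)^{-1}$ are consumed. Everything else — unbiasedness of the stochastic gradient, independence and variance of the Gaussian noise, the Lipschitz-gradient step, and the substitution of the moving-average update — is routine. Since all of this except the $d\sigma_{t+1}^2$ term is verbatim the argument in \cite{qiu2020single}, I would present the noise-free recursion briefly and highlight only the modification that the independent mean-zero Gaussian perturbation contributes an extra $d\sigma_{t+1}^2$ additively, leaving the contraction structure intact.
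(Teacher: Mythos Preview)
Your proposal is correct and matches the paper's proof essentially step for step: the paper likewise expands $\nabla_\theta F(\theta_{t+1},\omega_{t+1})-p_{t+1}$ via the update rule, kills the cross terms with the stochastic-gradient piece and the Gaussian noise by unbiasedness/independence, applies Young's inequality with $\lambda=\alpha\nu_t$ (yielding $(1-\alpha\nu_t)^2(1+\alpha\nu_t)\le 1-\alpha\nu_t$ and $(1-\alpha\nu_t)^2(1+1/(\alpha\nu_t))\le 9/(8\alpha\nu_t)$ from $\nu_t\le(8\alpha)^{-1}$), and then uses Assumption~\ref{asmp:wlipschitz} together with the moving-average update to get the $\tfrac{9\nu_t L_F^2}{8\alpha}$ coefficient. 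The only cosmetic difference is that you peel off $u^p_{t+1}$ first and reduce to the noise-free recursion, whereas the paper carries $u^p_{t+1}$ through the expansion and separates it when taking expectations; also note that the hypothesis $\eta\le(4L_F)^{-1}$ is not actually used in this lemma's proof.
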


\begin{proof}[Proof of Theorem \ref{theorem: utility}]
% \noindent \textbf{Main proof.} 
% We assume \zhao{it is not assumption, it is part of our algo, we need to justify our choice on step size} that the step size is of the form
% $
% \nu_t = \frac{a}{  (t+b) ^\frac{1}{2}}
% $, where $a=\frac{1}{16}$. 
Recall the step size is chosen as $
\nu_t = \frac{a}{  (t+b) ^\frac{1}{2}}
$ with $a=\frac{1}{16}$ in Theorem \ref{theorem: utility}. 
% By our assumption on $L_F$-smoothness and $\mu$-strongly concavity, we have a basic fact \dong{a well-known fact?} \zhao{cite} that $L_F\geq \mu$. Then we interpret \dong{I don't think we are interpreting, `` by the parameter setting in Theorem...'' }the parameter setting in Theorem \ref{theorem: utility} as
% By our assumption on $L_F$-smoothness and $\mu$-strongly concavity, we have a basic fact \dong{a well-known fact?} \zhao{cite} that $L_F\geq \mu$. 
By Assumpion \ref{asmp:wlipschitz} and Assumption \ref{asmp: strongly concave}, it is clear that $L_F\geq \mu$. 
% \dong{I don't think we are interpreting, `` by the parameter setting in Theorem...'' }
The parameter $\eta$ and $\nu_{t}$ in Theorem \ref{theorem: utility} could be further bounded as
\begin{align*}
    \eta \leq \frac{\mu}{4L_F^2}\leq \frac{1}{4L_F}
\end{align*}
and
\begin{align*}
    \nu_{t} \leq \frac{a}{b^{1 / 2}} \leq \min \left\{\frac{1}{27}, \quad \frac{\mu}{16 \kappa L_{F}^{2}}\right\}\,.
\end{align*}
% $$
% \eta \leq \frac{\mu}{4L_F^2}\leq \frac{1}{4L_F}
% $$
% $$
% \nu_{t} \leq \frac{a}{b^{1 / 2}} \leq \min \left\{\frac{1}{27}, \quad \frac{\mu}{16 \kappa L_{F}^{2}}\right\}
% $$
% Thus, with such parameter settings, we can directly apply Lemmas \ref{lemma: first in main proof}, \ref{lemma: second in main proof} and \ref{lemma: third in main proof} in the following proof. 
Thus, with such parameter settings, we are able to apply Lemmas \ref{lemma: first in main proof}, \ref{lemma: second in main proof} and \ref{lemma: third in main proof} in the following proof. 
By Lemma \ref{lemma: first in main proof}, we have
\begin{align*}
    J\left  (\theta_{t+1}\right) -J\left  (\theta_{t}\right) 
\leq -\frac{3 \nu_{t}}{4 \kappa}\left\|\widetilde{\theta}_{t+1}-\theta_{t}\right\|^{2}
+2 L_{F}^{2} \kappa \nu_{t}\left\|\omega_{t}-\omega\left  (\theta_{t}\right) ^{*}\right\|^{2}+4 \kappa \nu_{t}\left\|\nabla_{\theta} F\left  (\theta_{t}, \omega_{t}\right) -p_{t}\right\|^{2}\,.
\end{align*}
% $$
% \begin{aligned}
% &J\left  (\theta_{t+1}\right) -J\left  (\theta_{t}\right) \\
% \leq& -\frac{3 \nu_{t}}{4 \kappa}\left\|\widetilde{\theta}_{t+1}-\theta_{t}\right\|^{2}
% +2 L_{F}^{2} \kappa \nu_{t}\left\|\omega_{t}-\omega\left  (\theta_{t}\right) ^{*}\right\|^{2}+4 \kappa \nu_{t}\left\|\nabla_{\theta} F\left  (\theta_{t}, \omega_{t}\right) -p_{t}\right\|^{2}\,.
% \end{aligned}
% $$
Taking expectation on both sides shows that
\begin{align}\label{ineq: main proof 1}
   \mathbb{E}\left[J\left  (\theta_{t+1}\right) -J\left  (\theta_{t}\right) \right] 
\leq -\frac{3 \nu_{t}}{4 \kappa} \mathbb{E}\left\|\tilde{\theta}_{t+1}-\theta_{t}\right\|^{2}
+2 L_{F}^{2} \kappa \nu_{t} \mathbb{E}\left\|\omega_{t}-\omega^{*}\left  (\theta_{t}\right) \right\|^{2} +4 \nu_{t} \kappa \mathbb{E}\left\|\nabla_{\theta} F\left  (\theta_{t}, \omega_{t}\right) -p_{t}\right\|^{2}\,.
\end{align}
In the above inequality, the LHS will be a telescoping sum if we sum over $t$ from $t=0$ to $T-1$. And then we can move the first term on the RHS to the LHS, which will give us an upper bound for the summation of $\mathbb{E}\left\|\tilde{\theta}_{t+1}-\theta_{t}\right\|^{2}$. Thus, to get the final bound, we need to get the upper bound of another two terms on the RHS of Eq. (\ref{ineq: main proof 1}), \textit{i.e.}, $\mathbb{E}\left\|\omega_{t}-\omega^{*}\left  (\theta_{t}\right) \right\|^{2}$ and $ \mathbb{E}\left\|\nabla_{\theta} F\left  (\theta_{t}, \omega_{t}-p_{t}\right) \right\|^{2}$.
% Then, by Lemma \ref{lemma: third in main proof} and taking expectation on both sides, we can establish an inequality whose right-hand side gives a contraction of $\mathbb{E}\left\|\omega_{t}-\omega^{*}\left  (\theta_{t}\right) \right\|^{2}$. The inequality is
% $\mathbb{E}\left\|\omega_{t}-\omega^{*}\left  (\theta_{t}\right) \right\|^{2}$ could be upper bounded by Lemma \ref{lemma: third in main proof} as
Furthermore, Lemma \ref{lemma: third in main proof} shows that
\begin{align*}
  \mathbb{E}\left\|\omega_{t+1}-\omega^{*}\left  (\theta_{t+1}\right) \right\|^{2}
\leq& \left  (1-\frac{\mu \eta \nu_{t}}{4}\right)  \mathbb{E}\left\|\omega_{t}-\omega^{*}\left  (\theta_{t}\right) \right\|^{2}
-\frac{3 \nu_{t}}{4} \mathbb{E}\left\|\widetilde{\omega}_{t+1}-\omega_{t}\right\|^{2}\\
&+\frac{75 \eta \nu_{t}}{16 \mu} \mathbb{E}\left\|d_{t}-\nabla F_\omega\left  (\theta_{t}, \omega_{t}\right) \right\|^{2}+\frac{75 L_{\omega}^{2} \nu_{t}}{16 \mu \eta} \mathbb{E}\left\|\widetilde{\theta}_{t+1}-\theta_{t}\right\|^{2}\,.  
\end{align*}
% $$
% \begin{aligned}
% &\mathbb{E}\left\|\omega_{t+1}-\omega^{*}\left  (\theta_{t+1}\right) \right\|^{2}\\
% \leq &\left  (1-\frac{\mu \eta \nu_{t}}{4}\right)  \mathbb{E}\left\|\omega_{t}-\omega^{*}\left  (\theta_{t}\right) \right\|^{2}
% -\frac{3 \nu_{t}}{4} \mathbb{E}\left\|\widetilde{\omega}_{t+1}-\omega_{t}\right\|^{2}\\
% &+\frac{75 \eta \nu_{t}}{16 \mu} \mathbb{E}\left\|d_{t}-\nabla F_\omega\left  (\theta_{t}, \omega_{t}\right) \right\|^{2}+\frac{75 L_{\omega}^{2} \nu_{t}}{16 \mu \eta} \mathbb{E}\left\|\widetilde{\theta}_{t+1}-\theta_{t}\right\|^{2}\,.
% \end{aligned}
% $$
Multiplying both sides of the above inequality by $10L_F^2\kappa/  (\mu\eta) $ leads to
\begin{align*}
    \frac{10 L_{F}^{2} \kappa}{\mu \eta} \mathbb{E}\left\|\omega_{t+1}-\omega^{*}\left  (\theta_{t+1}\right) \right\|^{2} 
\leq & \frac{10 L_{F}^{2} \kappa}{\mu \eta}\left  (1-\frac{\mu \eta \nu_{t}}{4}\right)  \mathbb{E}\left\|\omega_{t}-\omega^{*}\left  (\theta_{t}\right) \right\|^{2}-\frac{15 L_{F}^{2} \kappa \nu_{t}}{2 \mu \eta} \mathbb{E}\left\|\widetilde{\omega}_{t+1}-\omega_{t}\right\|^{2} \\
&+\frac{375 L_{F}^{2} \kappa \nu_{t}}{8 \mu^{2}} \mathbb{E}\left\|d_{t}-\nabla F_\omega\left  (\theta_{t}, \omega_{t}\right) \right\|^{2}+\frac{375 L_{F}^{2} L_{\omega}^{2} \kappa \nu_{t}}{8 \mu^{2} \eta^{2}} \mathbb{E}\left\|\tilde{\theta}_{t+1}-\theta_{t}\right\|^{2}\,.
\end{align*}
% $$
% \begin{aligned}
% \frac{10 L_{F}^{2} \kappa}{\mu \eta} &\mathbb{E}\left\|\omega_{t+1}-\omega^{*}\left  (\theta_{t+1}\right) \right\|^{2} \\
% \leq & \frac{10 L_{F}^{2} \kappa}{\mu \eta}\left  (1-\frac{\mu \eta \nu_{t}}{4}\right)  \mathbb{E}\left\|\omega_{t}-\omega^{*}\left  (\theta_{t}\right) \right\|^{2}-\frac{15 L_{F}^{2} \kappa \nu_{t}}{2 \mu \eta} \mathbb{E}\left\|\widetilde{\omega}_{t+1}-\omega_{t}\right\|^{2} \\
% &+\frac{375 L_{F}^{2} \kappa \nu_{t}}{8 \mu^{2}} \mathbb{E}\left\|d_{t}-\nabla F_\omega\left  (\theta_{t}, \omega_{t}\right) \right\|^{2}+\frac{375 L_{F}^{2} L_{\omega}^{2} \kappa \nu_{t}}{8 \mu^{2} \eta^{2}} \mathbb{E}\left\|\tilde{\theta}_{t+1}-\theta_{t}\right\|^{2}\,.
% \end{aligned}
% $$
% To get a subtraction of $\left\| \omega_t-\omega^*  (\theta_t) \right\|$ from $t+1$ to $t$, we rearrange the terms and  get 
Rearranging the terms shows that 
\begin{align}\label{ineq: main proof 2}
    \frac{10 L_{F}^{2} \kappa}{\mu \eta} &\left  (\mathbb{E}\left\|\omega_{t+1}-\omega^{*}\left  (\theta_{t+1}\right) \right\|^{2}-\mathbb{E}\left\|\omega_{t}-\omega^{*}\left  (\theta_{t}\right) \right\|^{2}\right) \notag \\
\leq &-\frac{5 L_{F}^{2} \kappa \nu_{t}}{2} \mathbb{E}\left\|\omega_{t}-\omega^{*}\left  (\theta_{t}\right) \right\|^{2}-\frac{15 L_{F}^{2} \kappa \nu_{t}}{2 \mu \eta} \mathbb{E}\left\|\widetilde{\omega}_{t+1}-\omega_{t}\right\|^{2}\notag \\
&+\frac{375 L_{F}^{2} \kappa \nu_{t}}{8 \mu^{2}} \mathbb{E}\left\|d_{t}-\nabla F_{\omega}\left  (\theta_{t}, \omega_{t}\right) \right\|^{2}+\frac{375 L_{F}^{2} L_{\omega}^{2} \kappa \nu_{t}}{8 \mu^{2} \eta^{2}} \mathbb{E}\left\|\widetilde{\theta}_{t+1}-\theta_{t}\right\|^{2}\,.
\end{align}
% \begin{equation}
% \label{ineq: main proof 2}
% \begin{aligned}
% \frac{10 L_{F}^{2} \kappa}{\mu \eta} &\left  (\mathbb{E}\left\|\omega_{t+1}-\omega^{*}\left  (\theta_{t+1}\right) \right\|^{2}-\mathbb{E}\left\|\omega_{t}-\omega^{*}\left  (\theta_{t}\right) \right\|^{2}\right)  \\
% \leq &-\frac{5 L_{F}^{2} \kappa \nu_{t}}{2} \mathbb{E}\left\|\omega_{t}-\omega^{*}\left  (\theta_{t}\right) \right\|^{2}-\frac{15 L_{F}^{2} \kappa \nu_{t}}{2 \mu \eta} \mathbb{E}\left\|\widetilde{\omega}_{t+1}-\omega_{t}\right\|^{2} \\
% &+\frac{375 L_{F}^{2} \kappa \nu_{t}}{8 \mu^{2}} \mathbb{E}\left\|d_{t}-\nabla F_{\omega}\left  (\theta_{t}, \omega_{t}\right) \right\|^{2}+\frac{375 L_{F}^{2} L_{\omega}^{2} \kappa \nu_{t}}{8 \mu^{2} \eta^{2}} \mathbb{E}\left\|\widetilde{\theta}_{t+1}-\theta_{t}\right\|^{2}
% \end{aligned}
% \end{equation}
Then we define
\begin{align*}
    P_{t}:=J\left  (\theta_{t}\right) -J^{*}+\frac{10 L_{F}^{2} \kappa}{\mu \eta}\left\|\omega_{t}-\omega^{*}\left  (\theta_{t}\right) \right\|^{2}, \quad \forall t \geq 0
\end{align*}
% $$
% P_{t}:=J\left  (\theta_{t}\right) -J^{*}+\frac{10 L_{F}^{2} \kappa}{\mu \eta}\left\|\omega_{t}-\omega^{*}\left  (\theta_{t}\right) \right\|^{2}, \quad \forall t \geq 0
% $$
where $J^*>-\infty$ is the minimal value of $J$. Thus we have $J  (\theta) -J^*>0, \forall \theta \in \Theta$.
% Summing up Eq.  (\ref{ineq: main proof 1})  and Eq.  (\ref{ineq: main proof 2}), we have
Taking both Eq.  (\ref{ineq: main proof 1})  and Eq.  (\ref{ineq: main proof 2}) into consideration, we have
\begin{align*}
    \mathbb{E}\left[P_{t+1}-P_{t}\right] \leq &-\left  (\frac{3 \nu_{t}}{4 \kappa}-\frac{375 L_{F}^{2} L_{\omega}^{2} \kappa \nu_{t}}{8 \mu^{2} \eta^{2}}\right)  \mathbb{E}\left\|\tilde{\theta}_{t+1}-\theta_{t}\right\|^{2}-\frac{15 L_{F}^{2} \kappa \nu_{t}}{2 \mu \eta} \mathbb{E}\left\|\widetilde{\omega}_{t+1}-\omega_{t}\right\|^{2} \\
&+\frac{375 L_{F}^{2} \kappa \nu_{t}}{8 \mu^{2}} \mathbb{E}\left\|d_{t}-\nabla_{\omega} F\left  (\theta_{t}, \omega_{t}\right) \right\|^{2}+4 \nu_{t} \kappa \mathbb{E}\left\|p_{t}-\nabla_{\theta} F\left  (\theta_{t}, \omega_{t}\right) \right\|^{2}-\frac{L_{F}^{2} \kappa \nu_{t}}{2} \mathbb{E}\left\|\omega_{t}-\omega^{*}\left  (\theta_{t}\right) \right\|^{2}\,.
\end{align*}
% $$
% \begin{aligned}
% \mathbb{E}\left[P_{t+1}-P_{t}\right] \leq &-\left  (\frac{3 \nu_{t}}{4 \kappa}-\frac{375 L_{F}^{2} L_{\omega}^{2} \kappa \nu_{t}}{8 \mu^{2} \eta^{2}}\right)  \mathbb{E}\left\|\tilde{\theta}_{t+1}-\theta_{t}\right\|^{2}\\
% &-\frac{15 L_{F}^{2} \kappa \nu_{t}}{2 \mu \eta} \mathbb{E}\left\|\widetilde{\omega}_{t+1}-\omega_{t}\right\|^{2} \\
% &+\frac{375 L_{F}^{2} \kappa \nu_{t}}{8 \mu^{2}} \mathbb{E}\left\|d_{t}-\nabla_{\omega} F\left  (\theta_{t}, \omega_{t}\right) \right\|^{2}\\
% &+4 \nu_{t} \kappa \mathbb{E}\left\|p_{t}-\nabla_{\theta} F\left  (\theta_{t}, \omega_{t}\right) \right\|^{2}\\
% &-\frac{L_{F}^{2} \kappa \nu_{t}}{2} \mathbb{E}\left\|\omega_{t}-\omega^{*}\left  (\theta_{t}\right) \right\|^{2}
% \end{aligned}\,.
% $$
We can simplify the coefficient $-\left  (\frac{3 \nu_{t}}{4 \kappa}-\frac{375 L_{F}^{2} L_{\omega}^{2} \kappa \nu_{t}}{8 \mu^{2} \eta^{2}}\right) $ in the above inequality. First,  by the parameter setting in Theorem \ref{theorem: utility}, we have $0<\kappa \leq \eta \mu^{2} /\left  (9 L_{F}^{2}\right) $, which gives us $\eta \geq 9 L_{F}^{2} \kappa / \mu^{2}$ and further $\eta^{2} \geq 81 L_{F}^{4} \kappa^{2} / \mu^{4}$. Second, by Lemma \ref{lemma: lispchitz constant of omega}, we have $L_\omega = L_F/\mu$. Thus 
%   \zhao{detail \& where does the LHS come from?}
\begin{align*}
      \frac{375 L_{F}^{2} L_{\omega}^{2} \kappa \nu_{t}}{8 \mu^{2} \eta^{2}} = \frac{375 L_{F}^{4} \kappa \nu_{t}}{8 \mu^{4} \eta^{2}} \leq \frac{125\nu_t}{216\kappa}\,,
\end{align*}
%   $$
%   \frac{375 L_{F}^{2} L_{\omega}^{2} \kappa \nu_{t}}{8 \mu^{2} \eta^{2}} = \frac{375 L_{F}^{4} \kappa \nu_{t}}{8 \mu^{4} \eta^{2}} \leq \frac{125\nu_t}{216\kappa}\,,
%   $$
  and the coefficient is bounded by
  \begin{align*}
   -\left  (\frac{3 \nu_{t}}{4 \kappa}-\frac{375 L_{F}^{2} L_{\omega}^{2} \kappa \nu_{t}}{8 \mu^{2} \eta^{2}}\right) \leq -\frac{37\nu_t}{216\kappa}\leq -\frac{\nu_t}{8\kappa}\,,   
  \end{align*}
%  $$
%  -\left  (\frac{3 \nu_{t}}{4 \kappa}-\frac{375 L_{F}^{2} L_{\omega}^{2} \kappa \nu_{t}}{8 \mu^{2} \eta^{2}}\right) \leq -\frac{37\nu_t}{216\kappa}\leq -\frac{\nu_t}{8\kappa}\,,
%  $$
which implies that
\begin{align}\label{ineq: main proof 3}
    \mathbb{E}\left[P_{t+1}-P_{t}\right] \leq &-\frac{\nu_{t}}{8 \kappa} \mathbb{E}\left\|\widetilde{\theta}_{t+1}-\theta_{t}\right\|^{2}-\frac{15 L_{F}^{2} \kappa \nu_{t}}{2 \mu \eta} \mathbb{E}\left\|\widetilde{\omega}_{t+1}-\omega_{t}\right\|^{2}-\frac{L_{F}^{2} \kappa \nu_{t}}{2} \mathbb{E}\left\|\omega_{t}-\omega^{*}\left  (\theta_{t}\right) \right\|^{2}\notag \\
&+\frac{375 L_{F}^{2} \kappa \nu_{t}}{8 \mu^{2}} \mathbb{E}\left\|d_{t}-\nabla_{\omega} F\left  (\theta_{t}, \omega_{t}\right) \right\|^{2}+4 \nu_{t} \kappa \mathbb{E}\left\|p_{t}-\nabla_{\theta} F\left  (\theta_{t}, \omega_{t}\right) \right\|^{2}\,.
\end{align}
% \begin{equation}
% \label{ineq: main proof 3}
% \begin{aligned}
% \mathbb{E}\left[P_{t+1}-P_{t}\right] \leq &-\frac{\nu_{t}}{8 \kappa} \mathbb{E}\left\|\widetilde{\theta}_{t+1}-\theta_{t}\right\|^{2}\\
% &-\frac{15 L_{F}^{2} \kappa \nu_{t}}{2 \mu \eta} \mathbb{E}\left\|\widetilde{\omega}_{t+1}-\omega_{t}\right\|^{2}\\
% &-\frac{L_{F}^{2} \kappa \nu_{t}}{2} \mathbb{E}\left\|\omega_{t}-\omega^{*}\left  (\theta_{t}\right) \right\|^{2} \\
% &+\frac{375 L_{F}^{2} \kappa \nu_{t}}{8 \mu^{2}} \mathbb{E}\left\|d_{t}-\nabla_{\omega} F\left  (\theta_{t}, \omega_{t}\right) \right\|^{2}\\
% &+4 \nu_{t} \kappa \mathbb{E}\left\|p_{t}-\nabla_{\theta} F\left  (\theta_{t}, \omega_{t}\right) \right\|^{2}\,.
% \end{aligned}
% \end{equation}
The LHS of Eq. (\ref{ineq: main proof 3}) will be a telescoping sum if we task summation from $t=0$ to $T-1$. And then we can move $\mathbb{E}\left\|\widetilde{\theta}_{t+1}-\theta_{t}\right\|^{2}$ and $\mathbb{E}\left\|\omega_{t}-\omega^{*}\left  (\theta_{t}\right) \right\|^{2}$ from the RHS to the LHS, which will  help us bound the two terms. 
Thus, we expect to upper bound
$\mathbb{E}\left\|d_{t}-\nabla_{\omega} F\left  (\theta_{t}, \omega_{t}\right) \right\|^{2}$ and $\mathbb{E}\left\|p_{t}-\nabla_{\theta} F\left  (\theta_{t}, \omega_{t}\right) \right\|^{2}$.
By Lemma \ref{lemma: forth in main proof}, we have
\begin{align}\label{ineq: main proof 4}
\mathbb{E}\left\|\nabla_{\theta} F\left  (\theta_{t+1}, \omega_{t+1}\right) -p_{t+1}\right\|^{2}  \leq& \left  (1-\alpha \nu_{t}\right)  \mathbb{E}\left\|\nabla_{\theta} F\left  (\theta_{t}, \omega_{t}\right) -p_{t}\right\|^{2} +\frac{9 \nu_{t} L_{F}^{2}}{8 \alpha} \mathbb{E}\left  (\left\|\widetilde{\theta}_{t+1}-\theta_{t}\right\|^{2}+\left\|\widetilde{\omega}_{t+1}-\omega_{t}\right\|^{2}\right) \notag\\
&+\alpha^{2} \nu_{t}^{2} \sigma^{2} + d\sigma_{t+1}^2\,,
\end{align}
% \begin{equation}
% \label{ineq: main proof 4}
% \begin{aligned}
% & \mathbb{E}\left\|\nabla_{\theta} F\left  (\theta_{t+1}, \omega_{t+1}\right) -p_{t+1}\right\|^{2} \\ \leq& \left  (1-\alpha \nu_{t}\right)  \mathbb{E}\left\|\nabla_{\theta} F\left  (\theta_{t}, \omega_{t}\right) -p_{t}\right\|^{2} \\
% &+\frac{9 \nu_{t} L_{F}^{2}}{8 \alpha} \mathbb{E}\left  (\left\|\widetilde{\theta}_{t+1}-\theta_{t}\right\|^{2}+\left\|\widetilde{\omega}_{t+1}-\omega_{t}\right\|^{2}\right) \\
% &+\alpha^{2} \nu_{t}^{2} \sigma^{2} + d\sigma_{t+1}^2\,,
% \end{aligned}
% \end{equation}
and
\begin{align}\label{ineq: main proof 5}
\mathbb{E}\left\|\nabla_{\omega} F\left  (\theta_{t+1}, \omega_{t+1}\right) -d_{t+1}\right\|^{2}  \leq &\left  (1-\alpha \nu_{t}\right)  \mathbb{E}\left\|\nabla_{\omega} F\left  (\theta_{t}, \omega_{t}\right) -d_{t}\right\|^{2} +\frac{9 \nu_{t} L_{F}^{2}}{8 \alpha} \mathbb{E}\left  (\left\|\widetilde{\theta}_{t+1}-\theta_{t}\right\|^{2}+\left\|\widetilde{\omega}_{t+1}-\omega_{t}\right\|^{2}\right) \notag\\
&+\alpha^{2} \nu_{t}^{2} \sigma^{2} + d\sigma_{t+1}^2\,. 
\end{align}
% \begin{equation}
% \label{ineq: main proof 5}
% \begin{aligned}
% &\mathbb{E}\left\|\nabla_{\omega} F\left  (\theta_{t+1}, \omega_{t+1}\right) -d_{t+1}\right\|^{2} \\ \leq &\left  (1-\alpha \nu_{t}\right)  \mathbb{E}\left\|\nabla_{\omega} F\left  (\theta_{t}, \omega_{t}\right) -d_{t}\right\|^{2} \\
% &+\frac{9 \nu_{t} L_{F}^{2}}{8 \alpha} \mathbb{E}\left  (\left\|\widetilde{\theta}_{t+1}-\theta_{t}\right\|^{2}+\left\|\widetilde{\omega}_{t+1}-\omega_{t}\right\|^{2}\right) \\
% &+\alpha^{2} \nu_{t}^{2} \sigma^{2} + d\sigma_{t+1}^2\,.
% \end{aligned}
% \end{equation}
% By the parameter setting in Theorem \ref{theorem: utility}, we set $\alpha=3$, which will further help us simplify the coefficient and get a contraction.
% If we set $\alpha=3$ as suggested in Theorem \ref{theorem: utility}, we could further simplify the coefficient and get a contraction.

\noindent We define a Lyapunov function to enable a telescoping summation, which is for $\forall t \geq 0$, 
\begin{align*}
Q_{t}:=P_{t}+\frac{2 \kappa}{\mu \eta}\left\|\nabla_{\theta} F\left  (\theta_{t}, \omega_{t}\right) -p_{t}\right\|^{2}+\frac{2 \kappa}{\mu \eta}\left\|\nabla_{\omega} F\left  (\theta_{t}, \omega_{t}\right) -d_{t}\right\|^{2}\,.
\end{align*}
% $$
% Q_{t}:=P_{t}+\frac{2 \kappa}{\mu \eta}\left\|\nabla_{\theta} F\left  (\theta_{t}, \omega_{t}\right) -p_{t}\right\|^{2}+\frac{2 \kappa}{\mu \eta}\left\|\nabla_{\omega} F\left  (\theta_{t}, \omega_{t}\right) -d_{t}\right\|^{2}\,.
% $$
Multiplying both side of Eq. (\ref{ineq: main proof 4}) and Eq. (\ref{ineq: main proof 5}) by $2\kappa/  (\mu\eta) $ and combining with Eq. (\ref{ineq: main proof 3}), we have
\begin{align*}
\mathbb{E}[Q_{t+1}-Q_t]
\leq &-\left  (\frac{\nu_{t}}{8 \kappa}-\frac{3 \kappa \nu_{t} L_{F}^{2}}{2 \mu \eta}\right)  \mathbb{E}\left\|\widetilde{\theta}_{t+1}-\theta_{t}\right\|^{2}-\frac{L_{F}^{2} \kappa \nu_{t}}{2} \mathbb{E}\left\|\omega_{t}-\omega^{*}\left  (\theta_{t}\right) \right\|^{2}+\frac{4d\kappa\sigma^2_{t+1}}{\mu\eta}\\
&-\frac{6 L_{F}^{2} \kappa \nu_{t}}{\mu \eta} \mathbb{E}\left\|\widetilde{\omega}_{t+1}-\omega_{t}\right\|^{2}+\frac{36 \sigma^{2} \nu_{t}^{2} \kappa}{\mu \eta} -\left  (\frac{12 \nu_{t} \kappa}{\mu \eta}-4 \nu_{t} \kappa\right)  \mathbb{E}\left\|\nabla_{\theta} F\left  (\theta_{t}, \omega_{t}\right) -p_{t}\right\|^{2}\\
&-\left  (\frac{12 \nu_{t} \kappa}{\mu \eta}-\frac{375 L_{F}^{2} \kappa \nu_{t}}{8 \mu^{2}}\right)  \mathbb{E}\left\|\nabla_{\omega} F\left  (\theta_{t}, \omega_{t}\right) -d_{t}\right\|^{2}\,.
\end{align*}
% $$
% \begin{aligned}
% \mathbb{E}&[Q_{t+1}-Q_t]\\
% \leq &-\left  (\frac{\nu_{t}}{8 \kappa}-\frac{3 \kappa \nu_{t} L_{F}^{2}}{2 \mu \eta}\right)  \mathbb{E}\left\|\widetilde{\theta}_{t+1}-\theta_{t}\right\|^{2}\\
% &-\frac{L_{F}^{2} \kappa \nu_{t}}{2} \mathbb{E}\left\|\omega_{t}-\omega^{*}\left  (\theta_{t}\right) \right\|^{2}+\frac{4d\kappa\sigma^2_{t+1}}{\mu\eta}\\
% &-\frac{6 L_{F}^{2} \kappa \nu_{t}}{\mu \eta} \mathbb{E}\left\|\widetilde{\omega}_{t+1}-\omega_{t}\right\|^{2}+\frac{36 \sigma^{2} \nu_{t}^{2} \kappa}{\mu \eta} \\
% &-\left  (\frac{12 \nu_{t} \kappa}{\mu \eta}-4 \nu_{t} \kappa\right)  \mathbb{E}\left\|\nabla_{\theta} F\left  (\theta_{t}, \omega_{t}\right) -p_{t}\right\|^{2}\\
% &-\left  (\frac{12 \nu_{t} \kappa}{\mu \eta}-\frac{375 L_{F}^{2} \kappa \nu_{t}}{8 \mu^{2}}\right)  \mathbb{E}\left\|\nabla_{\omega} F\left  (\theta_{t}, \omega_{t}\right) -d_{t}\right\|^{2}
% \end{aligned}
% $$
% The coefficients in the above inequality are complex and we simplify them by utilizing the parameter setting, similar to what we have done before.

By the parameter setting in Theorem \ref{theorem: utility}, we have $0<\eta \leq \mu /\left  (4 L_{F}^{2}\right) $ and $0<\kappa \leq \eta \mu^{2} /\left  (9 L_{F}^{2}\right) $, or $\eta \geq 9 L_{F}^{2} \kappa / \mu^{2}$. 
Moreover, by our assumption on $L_F$-smoothness and $\mu$-strongly concavity, we have $L_F\geq \mu>0$. 
% \zhao{where does $L_F\geq\mu>0$ come from?   (it seems not in our assumptions?) }
We now simply the coefficients in the above display.
First, for $-\left  (\frac{\nu_{t}}{8 \kappa}-\frac{3 \kappa \nu_{t} L_{F}^{2}}{2 \mu \eta}\right) $, we have
%\zhao{how to cancel $\kappa$ in the numerator of the LHS in the first ineq?}
\begin{align*}
-\left  (\frac{\nu_{t}}{8 \kappa}-\frac{3 \kappa \nu_{t} L_{F}^{2}}{2 \mu \eta}\right) \leq -\frac{\nu_t}{8\kappa}+\frac{\nu_t\mu}{6}\,,
\end{align*}
% $$
% -\left  (\frac{\nu_{t}}{8 \kappa}-\frac{3 \kappa \nu_{t} L_{F}^{2}}{2 \mu \eta}\right) \leq -\frac{\nu_t}{8\kappa}+\frac{\nu_t\mu}{6}\,,
% $$
and we remove $\mu$ by the following inequality
\begin{align*}
    \mu\kappa\leq \frac{\eta\mu^3}{9L^2_F}\leq \frac{\eta\mu}{9} \leq \frac{\mu^2}{36L^2_F} \leq \frac{1}{36}\,.
\end{align*}
% $$
% \mu\kappa\leq \frac{\eta\mu^3}{9L^2_F}\leq \frac{\eta\mu}{9} \leq \frac{\mu^2}{36L^2_F} \leq \frac{1}{36}\,.
% $$
Thus we obtain
\begin{align*}
-\left  (\frac{\nu_{t}}{8 \kappa}-\frac{3 \kappa \nu_{t} L_{F}^{2}}{2 \mu \eta}\right) \leq -\frac{\nu_t}{8\kappa}+\frac{\nu_t}{216\kappa} \leq-\frac{\nu_{t}}{16 \kappa}\,.
\end{align*}
% $$
% -\left  (\frac{\nu_{t}}{8 \kappa}-\frac{3 \kappa \nu_{t} L_{F}^{2}}{2 \mu \eta}\right) \leq -\frac{\nu_t}{8\kappa}+\frac{\nu_t}{216\kappa} \leq-\frac{\nu_{t}}{16 \kappa}\,.
% $$
Second, for $-\left  (\frac{12 \nu_{t} \kappa}{\mu \eta}-4 \kappa \nu_{t}\right) $, 
% we need the following equality to help us
notice that
% \zhao{why $\mu \eta\leq 2?$}
\begin{align*}
\mu\eta\leq \frac{\mu^2}{4L_F^2}\leq \frac{1}{4}\,,
\end{align*}
% $$
% \mu\eta\leq \frac{\mu^2}{4L_F^2}\leq \frac{1}{4}\,,
% $$
and hence
\begin{align*}
-\left  (\frac{12 \nu_{t} \kappa}{\mu \eta}-4 \kappa \nu_{t}\right)  \leq \frac{-12 \nu_{t} \kappa}{\mu \eta}+\frac{\kappa\nu_t}{\mu\eta} \leq-\frac{4 \nu_{t} \kappa}{\mu \eta}\,.
\end{align*}
% $$
% -\left  (\frac{12 \nu_{t} \kappa}{\mu \eta}-4 \kappa \nu_{t}\right)  \leq \frac{-12 \nu_{t} \kappa}{\mu \eta}+\frac{\kappa\nu_t}{\mu\eta} \leq-\frac{4 \nu_{t} \kappa}{\mu \eta}\,.
% $$
Third, for $-\left  (\frac{12 \nu_{t} \kappa}{\mu \eta}-\frac{375 L_{F}^{2} \kappa \nu_{t}}{8 \mu^{2}}\right) $, we obtain the following simplified result 
\begin{align*}
-\left  (\frac{12 \nu_{t} \kappa}{\mu \eta}-\frac{375 L_{F}^{2} \kappa \nu_{t}}{8 \mu^{2}}\right) \leq -\frac{\nu_t\kappa}{\mu}\cdot \frac{9L^2_F}{8\mu}<0\,.
\end{align*}
% $$
% -\left  (\frac{12 \nu_{t} \kappa}{\mu \eta}-\frac{375 L_{F}^{2} \kappa \nu_{t}}{8 \mu^{2}}\right) \leq -\frac{\nu_t\kappa}{\mu}\cdot \frac{9L^2_F}{8\mu}<0\,.
% $$
Then plugging the three simplified coefficients into the inequality and omitting the terms with negative coefficients leads to
\begin{align}\label{ineq: main proof 6}
\mathbb{E}\left[Q_{t+1}-Q_{t}\right] \leq &-\frac{\nu_{t}}{16 \kappa} \mathbb{E}\left\|\widetilde{\theta}_{t+1}-\theta_{t}\right\|^{2}-\frac{4 \nu_{t} \kappa}{\mu \eta} \mathbb{E}\left\|p_{t}-\nabla_{\theta} F\left  (\theta_{t}, \omega_{t}\right) \right\|^{2}-\frac{\kappa \nu_{t} L_{F}^{2}}{2} \mathbb{E}\left\|\omega_{t}-\omega^{*}\left  (\theta_{t}\right) \right\|^{2}\notag \\
&+\frac{36 \sigma^{2} \nu_{t}^{2} \kappa}{\mu \eta}+\frac{4d\kappa\sigma^2_{t+1}}{\mu\eta}\,.
\end{align}
% \begin{equation}
% \label{ineq: main proof 6}
% \begin{aligned}
% \mathbb{E}\left[Q_{t+1}-Q_{t}\right] \leq &-\frac{\nu_{t}}{16 \kappa} \mathbb{E}\left\|\widetilde{\theta}_{t+1}-\theta_{t}\right\|^{2}\\
% &-\frac{4 \nu_{t} \kappa}{\mu \eta} \mathbb{E}\left\|p_{t}-\nabla_{\theta} F\left  (\theta_{t}, \omega_{t}\right) \right\|^{2}\\
% &-\frac{\kappa \nu_{t} L_{F}^{2}}{2} \mathbb{E}\left\|\omega_{t}-\omega^{*}\left  (\theta_{t}\right) \right\|^{2} \\
% &+\frac{36 \sigma^{2} \nu_{t}^{2} \kappa}{\mu \eta}+\frac{4d\kappa\sigma^2_{t+1}}{\mu\eta}\,.
% \end{aligned}
% \end{equation}
% Taking summation on both sides of Eq. (\ref{ineq: main proof 6}) from $t=0$ to $T-1$, and moving terms which are related to our metric from the right to the left, we will get
Taking summation on both sides of Eq. (\ref{ineq: main proof 6}) from $t=0$ to $T-1$ and
rearranging shows that
% \begin{align*}
% \sum_{t=0}^{T-1} \frac{\nu_{t} \kappa}{16}  \left( \frac{1}{\kappa^{2}} \mathbb{E}\left\|\widetilde{\theta}_{t+1}-\theta_{t}\right\|^{2}
% &+\frac{64}{\mu \eta} \mathbb{E}\left\|p_{t}-\nabla_{\theta} F\left  (\theta_{t}, \omega_{t}\right) \right\|^{2}+8 L_{F}^{2} \mathbb{E}\left\|\omega_{t}-\omega^{*}\left  (\theta_{t}\right) \right\|^{2} \right)\\
% &\leq \frac{36 \sigma^{2} \kappa \sum_{t=0}^{T-1} \nu_{t}^{2}+4\kappa d\sum_{t=1}^T\sigma_t^2}{\mu \eta}+\mathbb{E}\left[Q_{0}-Q_{T}\right]\\
% &\leq \frac{36 \sigma^{2} \kappa \sum_{t=0}^{T-1} \nu_{t}^{2}+4\kappa d\sum_{t=1}^T\sigma_t^2}{\mu \eta}+Q_{0}\,,
% \end{align*}
\begin{align*}
\sum_{t=0}^{T-1} &\frac{\nu_{t} \kappa}{16}  \left( \frac{1}{\kappa^{2}} \mathbb{E}\left\|\widetilde{\theta}_{t+1}-\theta_{t}\right\|^{2}+\frac{64}{\mu \eta} \mathbb{E}\left\|p_{t}-\nabla_{\theta} F\left  (\theta_{t}, \omega_{t}\right) \right\|^{2}+8 L_{F}^{2} \mathbb{E}\left\|\omega_{t}-\omega^{*}\left  (\theta_{t}\right) \right\|^{2} \right)\\
&\leq \frac{36 \sigma^{2} \kappa \sum_{t=0}^{T-1} \nu_{t}^{2}+4\kappa d\sum_{t=1}^T\sigma_t^2}{\mu \eta}+\mathbb{E}\left[Q_{0}-Q_{T}\right]\\
&\leq \frac{36 \sigma^{2} \kappa \sum_{t=0}^{T-1} \nu_{t}^{2}+4\kappa d\sum_{t=1}^T\sigma_t^2}{\mu \eta}+Q_{0}\,,
\end{align*}
% $$
% \begin{aligned}
% &\sum_{t=0}^{T-1} \frac{\nu_{t} \kappa}{16}  (\frac{1}{\kappa^{2}} \mathbb{E}\left\|\widetilde{\theta}_{t+1}-\theta_{t}\right\|^{2}+\frac{64}{\mu \eta} \mathbb{E}\left\|p_{t}-\nabla_{\theta} F\left  (\theta_{t}, \omega_{t}\right) \right\|^{2}\\
% &+8 L_{F}^{2} \mathbb{E}\left\|\omega_{t}-\omega^{*}\left  (\theta_{t}\right) \right\|^{2}) \\
% &\leq \frac{36 \sigma^{2} \kappa \sum_{t=0}^{T-1} \nu_{t}^{2}+4\kappa d\sum_{t=1}^T\sigma_t^2}{\mu \eta}+\mathbb{E}\left[Q_{0}-Q_{T}\right]\\
% &\leq \frac{36 \sigma^{2} \kappa \sum_{t=0}^{T-1} \nu_{t}^{2}+4\kappa d\sum_{t=1}^T\sigma_t^2}{\mu \eta}+Q_{0}\,.
% \end{aligned}
% $$
where the last inequality comes from $Q_t\geq 0, \forall t\geq0$ and $Q_0$ is determined in the initialization.

% Since  $\nu_t$ is set to $\frac{a}{  (t+b) ^\frac{1}{2}}$, we can use the fact $\nu_t\geq\nu_T$ for any $0\leq t\leq T$ to contract the left hand side of the above inequality. 
Since  $\nu_t$ is set to $\frac{a}{  (t+b) ^\frac{1}{2}}$, we can use the fact $\nu_t\geq\nu_T$ for any $0\leq t\leq T$ to upper bound the LHS of the above inequality. 
% And we have derived $\mu\eta\leq\frac{1}{4}$ before. Combining $\nu_t\geq \nu_T$ and $\frac{1}{\mu\eta}\geq 1$, we obtain 
Since $\mu\eta\leq\frac{1}{4}$, combining $\nu_t\geq \nu_T$ and $\frac{1}{\mu\eta}\geq 1$ shows that 
\begin{align*}
\frac{\nu_{T} \kappa}{16}  \sum_{t=0}^{T-1}&  \left(\frac{1}{\kappa^{2}} \mathbb{E}\left\|\widetilde{\theta}_{t+1}-\theta_{t}\right\|^{2}+\mathbb{E}\left\|p_{t}-\nabla_{\theta} F\left  (\theta_{t}, \omega_{t}\right) \right\|^{2}+L_{F}^{2} \mathbb{E}\left\|\omega_{t}-\omega^{*}\left  (\theta_{t}\right) \right\|^{2}\right)  \\
 \leq& \sum_{t=0}^{T-1} \frac{\nu_{t} \kappa}{16}  \left(\frac{1}{\kappa^{2}} \mathbb{E}\left\|\tilde{\theta}_{t+1}-\theta_{t}\right\|^{2}+\frac{64}{\mu \eta} \mathbb{E}\left\|p_{t}-\nabla_{\theta} F\left  (\theta_{t}, \omega_{t}\right) \right\|^{2}+8 L_{F}^{2} \mathbb{E}\left\|\omega_{t}-\omega^{*}\left  (\theta_{t}\right) \right\|^{2}\right)  \\
 \leq& \frac{36 \sigma^{2} \kappa \sum_{t=0}^{T-1} \nu_{t}^{2}+4\kappa d\sum_{t=1}^T\sigma_t^2}{\mu \eta}+Q_{0}\,.
\end{align*}
% $$
% \begin{aligned}
% &\frac{\nu_{T} \kappa}{16}  \sum_{t=0}^{T-1}  (\frac{1}{\kappa^{2}} \mathbb{E}\left\|\widetilde{\theta}_{t+1}-\theta_{t}\right\|^{2}+\mathbb{E}\left\|p_{t}-\nabla_{\theta} F\left  (\theta_{t}, \omega_{t}\right) \right\|^{2}\\
% &+L_{F}^{2} \mathbb{E}\left\|\omega_{t}-\omega^{*}\left  (\theta_{t}\right) \right\|^{2})  \\
%  \leq& \sum_{t=0}^{T-1} \frac{\nu_{t} \kappa}{16}  (\frac{1}{\kappa^{2}} \mathbb{E}\left\|\tilde{\theta}_{t+1}-\theta_{t}\right\|^{2}+\frac{64}{\mu \eta} \mathbb{E}\left\|p_{t}-\nabla_{\theta} F\left  (\theta_{t}, \omega_{t}\right) \right\|^{2}\\
% &+8 L_{F}^{2} \mathbb{E}\left\|\omega_{t}-\omega^{*}\left  (\theta_{t}\right) \right\|^{2})  \\
%  \leq& \frac{36 \sigma^{2} \kappa \sum_{t=0}^{T-1} \nu_{t}^{2}+4\kappa d\sum_{t=1}^T\sigma_t^2}{\mu \eta}+Q_{0}\,.
% \end{aligned}
% $$
% Multiplying both sides by $16/  (T\nu_T\kappa) $, we get
% \zhao{$+\frac{16Q_{0}}{\kappa \nu_T T}$?}
Rearranging the above display leads to
\begin{align}\label{ineq: main proof 7}
\frac{1}{T}  \sum_{t=0}^{T-1}  \left(\frac{1}{\kappa^{2}} \mathbb{E}\left\|\widetilde{\theta}_{t+1}-\theta_{t}\right\|^{2}+\mathbb{E}\left\|p_{t}-\nabla_{\theta} F\left  (\theta_{t}, \omega_{t}\right) \right\|^{2}+L_{F}^{2} \mathbb{E}\left\|\omega_{t}-\omega^{*}\left  (\theta_{t}\right) \right\|^{2}\right) 
 \leq& \frac{576 \sigma^{2}  \sum_{t=0}^{T-1} \nu_{t}^{2}+64d\sum_{t=1}^T\sigma_t^2}{\mu \eta T\nu_T}\notag\\
 &+\frac{16Q_{0}}{\kappa \nu_T T}\,.
\end{align}
% \begin{equation}
% \label{ineq: main proof 7}
% \begin{aligned}
% &\frac{1}{T}  \sum_{t=0}^{T-1}  (\frac{1}{\kappa^{2}} \mathbb{E}\left\|\widetilde{\theta}_{t+1}-\theta_{t}\right\|^{2}+\mathbb{E}\left\|p_{t}-\nabla_{\theta} F\left  (\theta_{t}, \omega_{t}\right) \right\|^{2}\\
% &+L_{F}^{2} \mathbb{E}\left\|\omega_{t}-\omega^{*}\left  (\theta_{t}\right) \right\|^{2})  \\
%  \leq& \frac{576 \sigma^{2}  \sum_{t=0}^{T-1} \nu_{t}^{2}+64d\sum_{t=1}^T\sigma_t^2}{\mu \eta T\nu_T}+\frac{16Q_{0}}{\kappa \nu_T T}\zhao{+\frac{16Q_{0}}{\kappa \nu_T T}?}\,.
% \end{aligned}
% \end{equation}
Applying Jensen's inequality to the LHS of Eq.  (\ref{ineq: main proof 7})  shows that
\begin{align}\label{ineq: main proof 8}
\frac{1}{T} \sum_{t=0}^{T-1}& \mathbb{E}  \left(\frac{1}{\kappa}\left\|\widetilde{\theta}_{t+1}-\theta_{t}\right\|+\left\|p_{t}-\nabla_{\theta} F\left  (\theta_{t}, \omega_{t}\right) \right\|+L_{F}\left\|\omega_{t}-\omega^{*}\left  (\theta_{t}\right) \right\|\right) \notag\\
\leq&\left[\frac{3}{T}\sum_{t=0}^{T-1} \mathbb{E}  \left(\frac{1}{\kappa^{2}}\left\|\widetilde{\theta}_{t+1}-\theta_{t}\right\|^{2}+\left\|p_{t}-\nabla_{\theta} F\left  (\theta_{t}, \omega_{t}\right) \right\|^{2}+L_{F}^{2}\left\|\omega_{t}-\omega^{*}\left  (\theta_{t}\right) \right\|^{2}\right) \right]^{1 / 2}\,.
\end{align}
% \begin{equation}
% \label{ineq: main proof 8}
% \begin{aligned}
% &\frac{1}{T} \sum_{t=0}^{T-1} \mathbb{E}  (\frac{1}{\kappa}\left\|\widetilde{\theta}_{t+1}-\theta_{t}\right\|+\left\|p_{t}-\nabla_{\theta} F\left  (\theta_{t}, \omega_{t}\right) \right\| \\
% &+L_{F}\left\|\omega_{t}-\omega^{*}\left  (\theta_{t}\right) \right\|) \\
% \leq&[\frac{3}{T}\sum_{t=0}^{T-1} \mathbb{E}  (\frac{1}{\kappa^{2}}\left\|\widetilde{\theta}_{t+1}-\theta_{t}\right\|^{2}+\left\|p_{t}-\nabla_{\theta} F\left  (\theta_{t}, \omega_{t}\right) \right\|^{2}\\
% &+L_{F}^{2}\left\|\omega_{t}-\omega^{*}\left  (\theta_{t}\right) \right\|^{2}) ]^{1 / 2}
% \end{aligned}
% \end{equation}
% \zhao{it is not wrong but why do we need to relax $\frac{1}{T}$ to $\frac{3}{T}$ here?}\ze{3 comes from Jensen's inequality directly.}\zhao{to be fixed}
% For the RHS of Eq. (\ref{ineq: main proof 7}), we can bound it by the value of $\nu_t$, $\sigma_t$. 
The RHS of Eq. (\ref{ineq: main proof 7}) could be bounded by of $\nu_t$ and $\sigma_t$. 
By the Gaussian noise set in Theorem \ref{theorem: privacy guarantee}, we have
$$
\sigma_t^2=\frac{14G^2T\alpha'}{n^2\beta'\epsilon},\ \forall t\geq 0 \,,
$$
where $\alpha'=\frac{\log  (1/\delta) }{  (1-\beta') \epsilon}+1$ and $\beta'\in  (0,1) $. Recall that $\nu_t=\frac{1}{4  (t+b) ^\frac{1}{2}}$. Then we obtain
\begin{align}\label{ineq: main proof 9}
\frac{576 \sigma^{2}  \sum_{t=0}^{T-1} \nu_{t}^{2}+64d\sum_{t=1}^T\sigma_t^2}{\mu \eta T\nu_T}+\frac{16Q_{0}}{\kappa \nu_T T}\leq& \frac{144 \sigma^{2}}{\mu \eta T}   (T+b) ^\frac{1}{2}\log  (T+b)  +\frac{3584 G^2d T   (T+b) ^{\frac 12} \left  (\frac{\log \left  (\frac{1}{\delta }\right) }{  (1-\beta' )  \epsilon }+1\right) }{\beta'  \eta  \mu  n^2 \epsilon }\notag\\   
&+\frac{64 Q_{0}  (T+b) ^{1 / 2}}{\kappa T}\notag\\
\leq &\left  (\frac{64 Q_{0}}{\kappa}+\frac{144 \sigma^{2}}{\mu \eta}\right) \left  (\frac{1}{\sqrt{T}}+\frac{\sqrt{b}}{T}\right)  \log   (T+b) \notag \\
&+\frac{3584 G^2d T   (T^\frac{1}{2}+b^{\frac 12})  \left  (\frac{\log \left  (\frac{1}{\delta }\right) }{  (1-\beta' )  \epsilon }+1\right) }{\beta'  \eta  \mu  n^2 \epsilon }\,,
\end{align}
where the last inequality is due to $\sqrt{x+y}\leq \sqrt{x}+\sqrt{y}$ for $x\geq 0$, $y\geq 0$.

\noindent Now combining Eq. (\ref{ineq: main proof 7}), Eq. (\ref{ineq: main proof 8}) and Eq. (\ref{ineq: main proof 9}), we have
\begin{align}\label{ineq: main proof 10}
\frac{1}{T} \sum_{t=0}^{T-1}& \mathbb{E}  \left(\frac{1}{\kappa}\left\|\widetilde{\theta}_{t+1}-\theta_{t}\right\|+\left\|p_{t}-\nabla_{\theta} F\left  (\theta_{t}, \omega_{t}\right) \right\| 
+L_{F}\left\|\omega_{t}-\omega^{*}\left  (\theta_{t}\right) \right\|\right) \notag\\
\leq &\sqrt{3}\left  (\frac{576 \sigma^{2}  \sum_{t=0}^{T-1} \nu_{t}^{2}+64d\sum_{t=1}^T\sigma_t^2}{\mu \eta T\nu_T}+\frac{16Q_{0}}{\mu \nu_T T}\right) ^\frac{1}{2}\notag\\
\leq &\sqrt{3}\left  (\frac{64 Q_{0}}{\kappa}+\frac{144 \sigma^{2}}{\mu \eta}\right) ^\frac{1}{2}\left  (\frac{1}{{T}^\frac{1}{4}}+\frac{{b}^\frac{1}{4}}{T^\frac{1}{2}}\right)  \log^\frac{1}{2}   (T+b) +\frac{60\sqrt{3} Gd^\frac{1}{2}   (T^\frac{3}{4}+b^{\frac 14}T^\frac{1}{2}) \left  (\frac{\log^\frac{1}{2} \left  (\frac{1}{\delta }\right) }{  (1-\beta' ) ^\frac{1}{2} \epsilon^\frac{1}{2} }+1\right) }{{\beta'}^\frac{1}{2}  \eta^\frac{1}{2}  \mu^\frac{1}{2}  n \epsilon^\frac{1}{2}}\notag\\
\leq &\sqrt{3}\left  (\frac{64 Q_{0}}{\kappa}+\frac{144 \sigma^{2}}{\mu \eta}\right) ^\frac{1}{2}\left  (\frac{1}{{T}^\frac{1}{4}}+\frac{{b}^\frac{1}{4}}{T^\frac{1}{2}}\right)  \log^\frac{1}{2}   (T+b) +\frac{120\sqrt{3} G d^\frac{1}{2}  (T^\frac{3}{4}+b^{\frac 14}T^\frac{1}{2}) \log^\frac{1}{2} \left  (\frac{1}{\delta }\right) }{ \eta^\frac{1}{2}  \mu^\frac{1}{2}   {\beta'}^\frac{1}{2}   (1-\beta' ) ^\frac{1}{2} n\epsilon}
\,,
\end{align}
% \begin{equation}
% \label{ineq: main proof 10}
% \begin{aligned}
% &\frac{1}{T} \sum_{t=0}^{T-1} \mathbb{E}  (\frac{1}{\kappa}\left\|\widetilde{\theta}_{t+1}-\theta_{t}\right\|+\left\|p_{t}-\nabla_{\theta} F\left  (\theta_{t}, \omega_{t}\right) \right\| \\
% &+L_{F}\left\|\omega_{t}-\omega^{*}\left  (\theta_{t}\right) \right\|) \\
% \leq &\sqrt{3}\left  (\frac{576 \sigma^{2}  \sum_{t=0}^{T-1} \nu_{t}^{2}+64d\sum_{t=1}^T\sigma_t^2}{\mu \eta T\nu_T}+\frac{16Q_{0}}{\mu \nu_T T}\right) ^\frac{1}{2}\\
% \leq &\sqrt{3}\left  (\frac{64 Q_{0}}{\kappa}+\frac{144 \sigma^{2}}{\mu \eta}\right) ^\frac{1}{2}\left  (\frac{1}{{T}^\frac{1}{4}}+\frac{{b}^\frac{1}{4}}{T^\frac{1}{2}}\right)  \log^\frac{1}{2}   (T+b) \\
% &+\frac{60\sqrt{3} Gd^\frac{1}{2}   (T^\frac{3}{4}+b^{\frac 14}T^\frac{1}{2}) \left  (\frac{\log^\frac{1}{2} \left  (\frac{1}{\delta }\right) }{  (1-\beta' ) ^\frac{1}{2} \epsilon^\frac{1}{2} }+1\right) }{{\beta'}^\frac{1}{2}  \eta^\frac{1}{2}  \mu^\frac{1}{2}  n \epsilon^\frac{1}{2}}\\
% \leq &\sqrt{3}\left  (\frac{64 Q_{0}}{\kappa}+\frac{144 \sigma^{2}}{\mu \eta}\right) ^\frac{1}{2}\left  (\frac{1}{{T}^\frac{1}{4}}+\frac{{b}^\frac{1}{4}}{T^\frac{1}{2}}\right)  \log^\frac{1}{2}   (T+b) \\
% &+\frac{120\sqrt{3} G d^\frac{1}{2}  (T^\frac{3}{4}+b^{\frac 14}T^\frac{1}{2}) \log^\frac{1}{2} \left  (\frac{1}{\delta }\right) }{ \eta^\frac{1}{2}  \mu^\frac{1}{2}   {\beta'}^\frac{1}{2}   (1-\beta' ) ^\frac{1}{2} n\epsilon}
% \,,
% \end{aligned}
% \end{equation}
where the second inequality is due to $\sqrt{x+y}\leq \sqrt{x}+\sqrt{y}$ for $x\geq 0,y\geq 0$. Simplifying  Eq.  (\ref{ineq: main proof 10})  by replacing the constant coefficients with $C_i$ leads to 
\begin{align}\label{ineq: main proof 11}
\frac{1}{T} \sum_{t=0}^{T-1}& \mathbb{E}  \left(\frac{1}{\kappa}\left\|\widetilde{\theta}_{t+1}-\theta_{t}\right\|+\left\|p_{t}-\nabla_{\theta} F\left  (\theta_{t}, \omega_{t}\right) \right\| +L_{F}\left\|\omega_{t}-\omega^{*}\left  (\theta_{t}\right) \right\|\right) \notag\\
\leq &\sqrt{3}\left  (\frac{64 Q_{0}}{\kappa}+\frac{144 \sigma^{2}}{\mu \eta}\right) ^\frac{1}{2}\left  (\frac{1}{{T}^\frac{1}{4}}+\frac{{b}^\frac{1}{4}}{T^\frac{1}{2}}\right)  \log^\frac{1}{2}   (T+b) +\frac{120\sqrt{3} G d^\frac{1}{2}  (T^\frac{3}{4}+b^{\frac 14}T^\frac{1}{2}) \log^\frac{1}{2} \left  (\frac{1}{\delta }\right) }{ \eta^\frac{1}{2}  \mu^\frac{1}{2}   {\beta'}^\frac{1}{2}   (1-\beta' ) ^\frac{1}{2} n\epsilon}\notag\\
= &\frac{C_1\log^\frac{1}{2}   (T+b) }{{T}^\frac{1}{4}}+\frac{C_2\log^\frac{1}{2}   (T+b) }{T^\frac{1}{2}}+\frac{   (C_3T^\frac{3}{4}+C_4T^\frac{1}{2}) d^\frac{1}{2}\log^\frac{1}{2} \left  (\frac{1}{\delta }\right) }{ n\epsilon}\,,
\end{align}
% \begin{equation}
% \label{ineq: main proof 11}
% \begin{aligned}
% &\frac{1}{T} \sum_{t=0}^{T-1} \mathbb{E}  (\frac{1}{\kappa}\left\|\widetilde{\theta}_{t+1}-\theta_{t}\right\|+\left\|p_{t}-\nabla_{\theta} F\left  (\theta_{t}, \omega_{t}\right) \right\| \\
% &+L_{F}\left\|\omega_{t}-\omega^{*}\left  (\theta_{t}\right) \right\|) \\
% \leq &\sqrt{3}\left  (\frac{64 Q_{0}}{\kappa}+\frac{144 \sigma^{2}}{\mu \eta}\right) ^\frac{1}{2}\left  (\frac{1}{{T}^\frac{1}{4}}+\frac{{b}^\frac{1}{4}}{T^\frac{1}{2}}\right)  \log^\frac{1}{2}   (T+b) \\
% &+\frac{120\sqrt{3} G d^\frac{1}{2}  (T^\frac{3}{4}+b^{\frac 14}T^\frac{1}{2}) \log^\frac{1}{2} \left  (\frac{1}{\delta }\right) }{ \eta^\frac{1}{2}  \mu^\frac{1}{2}   {\beta'}^\frac{1}{2}   (1-\beta' ) ^\frac{1}{2} n\epsilon}\\
% = &\frac{C_1\log^\frac{1}{2}   (T+b) }{{T}^\frac{1}{4}}+\frac{C_2\log^\frac{1}{2}   (T+b) }{T^\frac{1}{2}}\\
% &+\frac{   (C_3T^\frac{3}{4}+C_4T^\frac{1}{2}) d^\frac{1}{2}\log^\frac{1}{2} \left  (\frac{1}{\delta }\right) }{ n\epsilon}\,,
% \end{aligned}
% \end{equation}
where $C_1=\sqrt{3}\left  (\frac{64 Q_{0}}{\kappa}+\frac{144 \sigma^{2}}{\mu \eta}\right) ^\frac{1}{2}=\sqrt{3}\left  (\frac{64 Q_{0}}{\kappa}+\frac{288 G^{2}}{\mu \eta}\right) ^\frac{1}{2}$, $C_2=b^\frac{1}{4}C_1$, $C_3=\frac{120\sqrt{3}G}{  (\beta'  (1-\beta') \eta\mu) ^\frac{1}{2}}, C_4=b^\frac{1}{4}C_3$.

By the parameter setting in Theorem \ref{theorem: utility}, with $T = \frac{C_5n\epsilon}{\sqrt{d\log  (1/\delta) }}$
where $C_5$ is a constant, we can obtain the final bound is
\begin{align*}
\frac{1}{T} \sum_{t=0}^{T-1}& \mathbb{E}  \left(\frac{1}{\kappa}\left\|\widetilde{\theta}_{t+1}-\theta_{t}\right\|+\left\|p_{t}-\nabla_{\theta} F\left  (\theta_{t}, \omega_{t}\right) \right\| +L_{F}\left\|\omega_{t}-\omega^{*}\left  (\theta_{t}\right) \right\|\right) \\
\leq&\frac{ \sqrt[8]{d\log \left  (\frac{1}{\delta }\right) } }{\sqrt{C_5n\epsilon }}   (C_1 \sqrt[4]{C_5n\epsilon } \sqrt{\log \left  (b+\frac{C_5 n \epsilon }{\sqrt{d} \sqrt{\log \left  (\frac{1}{\delta }\right) }}\right) }+C_2 \sqrt[8]{d\log \left  (\frac{1}{\delta }\right) } \sqrt{\log \left  (b+\frac{C_5 n \epsilon }{\sqrt{d} \sqrt{\log \left  (\frac{1}{\delta }\right) }}\right) }\\
&+C_4 C_5 \sqrt[8]{d\log \left  (\frac{1}{\delta }\right) }+C_3 C_5^{5/4} \sqrt[4]{n\epsilon }) \,.
\end{align*}
% $$
% \begin{aligned}
% &\frac{1}{T} \sum_{t=0}^{T-1} \mathbb{E}  (\frac{1}{\kappa}\left\|\widetilde{\theta}_{t+1}-\theta_{t}\right\|+\left\|p_{t}-\nabla_{\theta} F\left  (\theta_{t}, \omega_{t}\right) \right\| \\
% &+L_{F}\left\|\omega_{t}-\omega^{*}\left  (\theta_{t}\right) \right\|) \\
% \leq&\frac{ \sqrt[8]{d\log \left  (\frac{1}{\delta }\right) } }{\sqrt{C_5n\epsilon }}   (C_1 \sqrt[4]{C_5n\epsilon } \sqrt{\log \left  (b+\frac{C_5 n \epsilon }{\sqrt{d} \sqrt{\log \left  (\frac{1}{\delta }\right) }}\right) }\\
% &+C_2 \sqrt[8]{d\log \left  (\frac{1}{\delta }\right) } \sqrt{\log \left  (b+\frac{C_5 n \epsilon }{\sqrt{d} \sqrt{\log \left  (\frac{1}{\delta }\right) }}\right) }\\
% &+C_4 C_5 \sqrt[8]{d\log \left  (\frac{1}{\delta }\right) }+C_3 C_5^{5/4} \sqrt[4]{n\epsilon }) 
% \end{aligned}
% $$
If we hide the factor $\sqrt{\log \left  (b+\frac{C_{5} n \epsilon}{\sqrt{d} \sqrt{\log \left  (\frac{1}{\delta}\right) }}\right) }$, \textit{i.e.}, $\sqrt{\log  (b+T) }$,  we obtain
\begin{align*}
\frac{1}{T} \sum_{t=0}^{T-1} \mathbb{E}  \left(\frac{1}{\kappa}\left\|\widetilde{\theta}_{t+1}-\theta_{t}\right\|+\left\|p_{t}-\nabla_{\theta} F\left  (\theta_{t}, \omega_{t}\right) \right\| +L_{F}\left\|\omega_{t}-\omega^{*}\left  (\theta_{t}\right) \right\|\right) 
\leq \widetilde{\mathcal{O}} \left (\frac{  (d\log  (1/\delta) ) ^\frac{1}{8}}{  (n\epsilon) ^\frac{1}{4}}\right) \,.
\end{align*}
\textbf{Gradient Complexity.} The gradient complexity is equal to $2(T+1)=\mathcal{O}\left(\frac{n\epsilon}{\sqrt{d\log  (1/\delta) }}\right)$ since Algorithm \ref{alg: DPTD} computes gradients for both the primal side and the dual side. 
% \begin{equation}
% \begin{aligned}
% &\frac{1}{T} \sum_{t=0}^{T-1} \mathbb{E}  (\frac{1}{\kappa}\left\|\widetilde{\theta}_{t+1}-\theta_{t}\right\|+\left\|p_{t}-\nabla_{\theta} F\left  (\theta_{t}, \omega_{t}\right) \right\| \\
% &+L_{F}\left\|\omega_{t}-\omega^{*}\left  (\theta_{t}\right) \right\|) \\
% \leq &\widetilde O  (\frac{  (d\log  (1/\delta) ) ^\frac{1}{8}}{  (n\epsilon) ^\frac{1}{4}}) \,.
% \end{aligned}
% \end{equation}
\end{proof}

% \newpage

% \subsection{Proof of Theorem \ref{theorem: utility in TDP}}

\section{Proof of Theorem \ref{theorem: privacy guarantee   (TDP) }}
In this section, we provide the proof of Theorem \ref{theorem: privacy guarantee   (TDP) }. 
\begin{proof}[Proof of Theorem \ref{theorem: privacy guarantee   (TDP) }]
According to the update rules in Algorithm \ref{alg: DPTD}, our mechanisms are constructed as 
\begin{equation}
\label{eq: random mechanism of p_t in TDP}
    \mathcal{M}_{t}^p = \left\{
    \begin{aligned}
    &  (1-\alpha\nu_{t-1}) p_{t-1} +\alpha\nu_{t-1}\nabla_\theta f  (\theta_{t},\omega_{t};\xi_{t}) +u_{t}^p,&t> 0\\
    &\nabla_\theta f  (\theta_{0},\omega_{0};\xi_{0}) +u_{0}^p,&t=0
    \end{aligned}\right.\,,
\end{equation}
and
\begin{equation}
\label{eq: random mechansim of d_t in TDP}
\mathcal{M}_{t}^d =\left\{
    \begin{aligned}
    &  (1-\beta\nu_{t-1}) d_{t-1} +\beta\nu_{t-1}\nabla_\omega f  (\theta_{t},\omega_{t};\xi_{t}) +u_{t}^d,&t>0\\
    &\nabla_\omega f  (\theta_{0},\omega_{0};\xi_{0}) +u_{0}^d,&t=0
    \end{aligned}\right. \,.
\end{equation}

% We prove the privacy guarantee of the mechanism on primal side and the proof of the privacy guarantee of the mechanism on dual side follows similarly.\zhao{20210825}
% Given two neighbouring datasets $S$ and $\hat{S}$, both of them consists of $m$ trajectories and they only differ in single trajectory. We aim to show the privacy guarantee of $M_t^p$ and $M_t^d$ for $t=0,1,..,T-1$. Similar to the proof of \ref{theorem: privacy guarantee}, we discuss two cases, $t=0$ and $t>0$.
We aim to show the privacy guarantee of $\mathcal{M}_t^p$ and $\mathcal{M}_t^d$ for $t=0,1,..,T-1$. 
We prove the privacy guarantee of the mechanism on the primal side   (\textit{i.e.}, $\mathcal{M}_t^p$)  and the proof of the privacy guarantee of the mechanism on the dual side   (\textit{i.e.}, $\mathcal{M}_t^d$)  follows similarly.
% Similar to the proof of \ref{theorem: privacy guarantee}, we discuss two cases, $t=0$ and $t>0$.
Similar to the proof of \ref{theorem: privacy guarantee}, we start from the case when $t=0$ and then discuss the case when $t>0$.

\noindent\textbf{Case   (a) .} If $t=0$, we consider the following Gaussian mechanism
\begin{align*}
    \mathcal{G}_0^p = \nabla_\theta f  (\theta_{0},\omega_{0};\xi_{0}) +u_{0}^p\,,
\end{align*}
% $$
% \mathcal{G}_0^p = \nabla_\theta f  (\theta_{0},\omega_{0};\xi_{0}) +u_{0}^p\,,
% $$
% $$
% \mathcal{G}_0^d=\nabla_\omega f  (\theta_{0},\omega_{0};\xi_{0}) +u_{0}^d\,,
% $$
% where $u_{0}^p\sim N  (0,\sigma_0^2\mathbf{I}_d) $, $u_{0}^d\sim N  (0,\sigma_0^2\mathbf{I}_d) $. Then we consider the Gaussian mechanisms without subsampling, which means we have the  access to the full dataset. Denoting $tr_{i}$ as the $i$th trajectory in $S$, $\mathrm{len}  (tr_i) $ as the length of $tr_i$ and $\xi_{i,j}$ as the $j$th triple in $tr_i$,  we obtain
where $u_{0}^p\sim N  (0,\sigma_0^2\mathbf{I}_d) $. To provide the privacy guarantee of the above mechanism, we first prove the privacy guarantee of the following Gaussian mechanisms $\widetilde{\mathcal{G}}_0^p$ without subsampling, which means we have the  access to the full dataset. Denote by $\tau_i$ the $i$-th trajectory in dataset $S$ and by $\xi_{i,j}$ the $j$-th triple in $\tau_i$. Specifically, $\widetilde{\mathcal{G}}_0^p$ is constructed as
\begin{align*}
    \widetilde{\mathcal{G}}_0^p =\sum_{i=0}^{m-1} \sum_{j=0}^{|\tau_i|-1}\nabla_\theta f  (\theta_{0},\omega_{0};\xi_{i,j}) +u_{0}^p\,.
\end{align*}
% $$
% \widetilde{\mathcal{G}_0^p} =\sum_{i=0}^{m-1} \sum_{j=0}^{\mathrm{len}  (tr_i) -1}\nabla_\theta f  (\theta_{0},\omega_{0};\xi_{i,j}) +u_{0}^p\,,
% $$
% $$
% \widetilde{\mathcal{G}_0^d}=\sum_{i=0}^{m-1}\sum_{j=0}^{\mathrm{len}  (tr_i) -1}\nabla_\omega f  (\theta_{0},\omega_{0};\xi_{i,j}) +u_{0}^d\,,
% $$

\noindent\textbf{Sensitivity.} Consider the query on the dataset $S$ as follows
\begin{align*}
    \widetilde {q}_0^p  (S) =\sum_{i=0}^{m-1} \sum_{j=0}^{|\tau_i|-1}\nabla_\theta f  (\theta_{0},\omega_{0};\xi_{i,j}) \,.
\end{align*}

% $$
% \widetilde {q_0^p}  (S) =\sum_{i=0}^{m-1} \sum_{j=0}^{\mathrm{len}  (tr_i) -1}\nabla_\theta f  (\theta_{0},\omega_{0};\xi_{i,j}) \,,
% $$
% $$
% \widetilde{q_0^d}  (S) =\sum_{i=0}^{m-1}\sum_{j=0}^{\mathrm{len}  (tr_i) -1}\nabla_\omega f  (\theta_{0},\omega_{0};\xi_{i,j}) \,.
% $$
% Similarly we can get $\widetilde {q}_0^p  (S') $ and $\widetilde{q}_0^d  (S') $. To get the sensitivity, we first obtain
Similarly we can get $\widetilde {q}_0^p  (\hat{S}) $. 
% To get the sensitivity, we first obtain
% \begin{align*}
%     \widetilde{q}_0^p  (S)  - \widetilde{q}_0^p  (\hat{S}) 
% =\sum_{j=0}^{|\tau_i|-1}\nabla_\theta f  (\theta_{0},\omega_{0};\xi_{i,j})  - \sum_{j=0}^{|\hat{\tau}_i|-1}\nabla_\theta f  (\theta_{0},\omega_{0};\hat{\xi}_{i,j}) \,.
% \end{align*}
% $$
% \begin{aligned}
% &\widetilde{q_0^p}  (S)  - \widetilde{q_0^p}  (S') \\
% =&\sum_{j=0}^{\mathrm{len}  (tr_i) -1}\nabla_\theta f  (\theta_{0},\omega_{0};\xi_{i,j})  - \sum_{j=0}^{\mathrm{len}  (tr_{i'}) -1}\nabla_\theta f  (\theta_{0},\omega_{0};\xi_{i',j}) 
% \end{aligned}
% $$
% $$
% \begin{aligned}
% &\widetilde{q_0^d}  (S)  - \widetilde{q_0^d}  (S') \\
% =&\sum_{j=0}^{\mathrm{len}  (tr_i) -1}\nabla_\omega f  (\theta_{0},\omega_{0};\xi_{i,j})  - \sum_{j=0}^{\mathrm{len}  (tr_{i'}) -1}\nabla_\omega f  (\theta_{0},\omega_{0};\xi_{i',j}) 
% \end{aligned}
% $$
% Thus we bound $\ell_2$-sensitivity as follows
Then the $\ell_2$-sensitivity could be bounded as follows
\begin{align*}
\widetilde{\Delta}_t^p =& \left\|\widetilde{q}_0^p  (S)  - \widetilde{q}_0^p  (\hat{S})  \right\| \\
=& \left\|\sum_{j=0}^{|\tau_i|-1}\nabla_\theta f  (\theta_{0},\omega_{0};\xi_{i,j})  - \sum_{j=0}^{|\hat{\tau}_i|-1}\nabla_\theta f  (\theta_{0},\omega_{0};\hat{\xi}_{i,j})  \right\| \\
\leq & \left\|\sum_{j=0}^{|\tau_i|-1}\nabla_\theta f  (\theta_{0},\omega_{0};\xi_{i,j}) \right\| +\left\| \sum_{j=0}^{|\hat{\tau}_i|-1}\nabla_\theta f  (\theta_{0},\omega_{0};\hat{\xi}_{i,j}) \right\|\\
\leq & 2nG
\end{align*}
The last inequality is because of Assumption \ref{asmp: upper bound of summation}. Similarly, we obtain $\widetilde{\Delta}_0^d\leq2nG $.
\noindent\textbf{Privacy guarantee of $\mathcal{G}_t^p$.} 
By Lemma \ref{lemma: RDP subsampling transformation}, if the Gaussian noise $u_t^p$ has the following variance
\begin{align*}
    \sigma^2_t = \frac{14n^2G^2T\alpha'}{m^2  \left(\epsilon - \frac{\log  (1/\delta) }{\alpha'-1}\right) }\,,
\end{align*}
% $$
% \sigma^2_t = \frac{14n^2  (D_\theta+\epsilon_E) ^2T\alpha'}{m^2  (\epsilon - \frac{\log  (1/\delta) }{\alpha'-1}) }
% $$
where $\sigma'^2=\frac{\sigma_t^2}{4n^2G^2}\geq 0.7$, $\alpha'=\frac{\log  (1/\delta) }{  (1-\beta') \epsilon}+1\leq 2\sigma^2\log  (\frac{n}{\alpha'  (1+\sigma'^2) }) /3+1$ and $\beta'\in   (0,1) $, then our mechanism $\mathcal{G}_t^p$ will satisfy $  \left(\alpha', \frac{14\alpha'n^2G^2}{m^2\sigma_t^2}\right) $-RDP. 
% \begin{align*}
    
% \end{align*}
% $$
%   (\alpha', \frac{14\alpha'n^2  (D_\theta+\epsilon_E) ^2}{m^2\sigma_t^2}) \text{-RDP}
% $$
% Similarly, if the Gaussian noises $u_t^d$ has the following variance
% $$
% \sigma^2_t = \frac{14n^2  (D_\omega+\epsilon_E) ^2T\alpha'}{m^2  (\epsilon - \frac{\log  (1/\delta) }{\alpha'-1}) }
% $$
% where $\sigma'^2=\frac{\sigma_t^2}{4n^2  (D_\omega+\epsilon_E) ^2}\geq 0.7$, $\alpha'=\frac{\log  (1/\delta) }{  (1-\beta') \epsilon}+1\leq 2\sigma^2\log  (\frac{n}{\alpha'  (1+\sigma'^2) }) /3+1$ and $\beta'\in   (0,1) $, then our mechanisms will satisfy the following
% $$
%   (\alpha', \frac{14\alpha'n^2  (D_\omega+\epsilon_E) ^2}{m^2\sigma_t^2}) \text{-RDP}
% $$
% If we take $D_\omega=D_\theta$, they will follow the same RDP.

\noindent\textbf{Case (b).} The sensitivity is bounded the same as in the Case (a). 
% Then we obtain that for each Gaussian mechanism, they are bounded as
Then one can see that Gaussian mechanism $\widetilde{\Delta}^p_t$ is bounded as 
\begin{align*}
    \widetilde{\Delta}^p_t\leq 2\alpha \nu_{t-1}nG\leq 2nG
\end{align*}
% $$
% \widetilde{\Delta^p_t}\leq 2\alpha \nu_{t-1}n  (D_\theta+\epsilon_E) \leq 2n  (D_\theta+\epsilon_E) 
% $$
% $$
% \widetilde{\Delta^d_t}\leq 2\beta \nu_{t-1}n  (D_\omega+\epsilon_E) \leq 2n  (D_\omega+\epsilon_E) 
% $$
with the probability $P_E$.

It is clear that the mechanisms in Case (a) and Case (b) are able to satisfy the same RDP under the same Gaussian noise since they have the same upper bound of the sensitivity.

% \noindent\textbf{Privacy guarantee of $\mathcal{M}^p_t$ and $\mathcal{M}^d_t$.} By the definition of $\mathcal{M}^p_t$ in equality \ref{eq: random mechanism of p_t in TDP} and $\mathcal{M}^d_t$ in equality \ref{eq: random mechansim of d_t in TDP}, they are composed of several Gaussian mechanisms, i.e. $\mathcal{M}_t^p=  (\mathcal{G}^p_0,...,\mathcal{G}^p_t) $ and $\mathcal{M}_t^d=  (\mathcal{G}^d_0,...,\mathcal{G}^d_t) $. According to the composition property of RDP, i.e. Lemma \ref{lemma: composition of RDP}, and setting $D_\theta=D_\omega$, we have $\mathcal{M}_t^p$ and $\mathcal{M}_t^d$ both satisfy $  (\alpha', \sum_{i=0}^t\frac{14\alpha'n^2  (D_\theta+\epsilon_E) ^2}{m^2\sigma_t^2}) $-RDP. Thus the output $\theta_T$ and $\omega_T$ of algorithm \ref{alg: DPTD} satisfy $  (\alpha', \sum_{i=0}^t\frac{14\alpha'n^2  (D_\theta+\epsilon_E) ^2}{m^2\sigma_t^2}) $-RDP. Finally, by using Lemma \ref{lemma: RDP_to_DP}, we transform RDP to DP and thus $\theta_T$ and $\omega_T$ satisfy the following 
% $$
%   (\sum_{i=0}^T\left  (\frac{14\alpha'n^2  (D_\theta+\epsilon_E) ^2}{m^2\sigma_i^2}\right) +\frac{log  (1/\delta) }{\alpha'-1}, \delta) \text{-DP}\,.
% $$
% Plugging in the value of $\sigma_t$, we can simplify it as 
% $$
%   (\epsilon, \delta) \text{-DP}\,,
% $$
% which is $  (\epsilon, \delta) $-TDP under this setting.
\noindent\textbf{Privacy guarantee of $\mathcal{M}^p_t$.} Due to the definition of $\mathcal{M}^p_t$ in Eq. (\ref{eq: random mechanism of p_t in TDP}), $\mathcal{M}^p_t$ is composed of several Gaussian mechanisms, \textit{i.e.}, $\mathcal{M}_t^p=  (\mathcal{G}^p_0,...,\mathcal{G}^p_t) $. Then Lemma \ref{lemma: composition of RDP} implies that  $\mathcal{M}_t^p$ and the output on the primal side satisfies $  \left(\alpha', \sum_{i=0}^T\frac{14\alpha'n^2  G ^2}{m^2\sigma_i^2}\right) $-RDP.
% Thus the output $\theta_T$ and $\omega_T$ of algorithm \ref{alg: DPTD} satisfy $  (\alpha', \sum_{i=0}^t\frac{14\alpha'n^2  (D_\theta+\epsilon_E) ^2}{m^2\sigma_t^2}) $-RDP. 
% Finally, by using Lemma \ref{lemma: RDP_to_DP}, we transform RDP to DP and thus $\theta_T$ and $\omega_T$ satisfy the following
Applying Lemma \ref{lemma: RDP_to_DP} shows that the output satisfies 
$\left  (\sum_{i=0}^T\left  (\frac{14\alpha'n^2  G ^2}{m^2\sigma_i^2}\right) +\frac{log  (1/\delta) }{\alpha'-1}, \delta\right) $-DP.
Substituting the value of $\sigma_t$ simplifies it as $  (\epsilon, \delta) $-DP under trajectory which concludes the proof.
% which is $  (\epsilon, \delta) $-TDP under this setting.
\end{proof}
% ---------------------------------------------- Proof of thm 5.2 ----------------------------------------

% \subsection{Proof of Theorem \ref{theorem: utility}}
\section{Proof of Theorem \ref{theorem: utility in TDP}}
In this section, we provide the proof of Theorem \ref{theorem: utility in TDP}  , which gives the utility of Algorithm \ref{alg: DPTD} when achieving $  (\epsilon,\delta) $-DP under trajectory. 
\begin{proof}[Proof of Theorem \ref{theorem: utility in TDP}]
The main proof is similar and the difference  lies in that we inject different Gaussian noises and the variance of gradient is $\sigma^2$. The variance of the Gaussian noise is
\begin{align*}
    \sigma^2_t = \frac{14n^2  G ^2T\alpha'}{m^2  (\epsilon - \frac{\log  (1/\delta) }{\alpha'-1}) },\ \forall t\geq 0\,,
\end{align*}
% $$
% \sigma^2_t = \frac{14n^2  (D_\theta+\epsilon_E) ^2T\alpha'}{m^2  (\epsilon - \frac{\log  (1/\delta) }{\alpha'-1}) }, for\ t\geq 0\,,
% $$
Thus, we start from rebounding the LHS of Eq. (\ref{ineq: main proof 9}) as follows 
\begin{align}\label{ineq: utility with TDP 1}
\frac{576 \sigma^{2}  \sum_{t=0}^{T-1} \nu_{t}^{2}+64d\sum_{t=1}^T\sigma_t^2}{\mu \eta T\nu_T}+\frac{16Q_{0}}{\kappa \nu_T T}
\leq &\frac{144 \sigma^{2}}{\mu \eta T}   (T+b) ^\frac{1}{2}\log  (T+b) \notag\\ &+\frac{3584 n^2  G ^2d T   (T+b) ^{\frac 12} \left  (\frac{\log \left  (\frac{1}{\delta }\right) }{  (1-\beta' )  \epsilon }+1\right) }{\beta'  \eta  \mu  m^2 \epsilon } +\frac{64 Q_{0}  (T+b) ^{1 / 2}}{\kappa T}\notag\\
\leq &\left  (\frac{64 Q_{0}}{\kappa}+\frac{144 \sigma^{2}}{\mu \eta}\right) \left  (\frac{1}{\sqrt{T}}+\frac{\sqrt{b}}{T}\right)  \log   (T+b) \notag\\
&+\frac{3584 n^2  G ^2d T   (T^\frac{1}{2}+b^{\frac 12})  \left  (\frac{\log \left  (\frac{1}{\delta }\right) }{  (1-\beta' )  \epsilon }+1\right) }{\beta'  \eta  \mu  m^2 \epsilon }\,,
\end{align}
% \begin{equation}
% \label{ineq: utility with TDP 1}
% \begin{aligned}
% &\frac{576 \sigma^{2}  \sum_{t=0}^{T-1} \nu_{t}^{2}+64d\sum_{t=1}^T\sigma_t^2}{\mu \eta T\nu_T}+\frac{16Q_{0}}{\kappa \nu_T T}\\
% \leq &\frac{144 \sigma^{2}}{\mu \eta T}   (T+b) ^\frac{1}{2}\log  (T+b)  \\
% &+\frac{3584 n^2  (D_\theta+\epsilon_E) ^2d T   (T+b) ^{\frac 12} \left  (\frac{\log \left  (\frac{1}{\delta }\right) }{  (1-\beta' )  \epsilon }+1\right) }{\beta'  \eta  \mu  m^2 \epsilon }\\ 
% &+\frac{64 Q_{0}  (T+b) ^{1 / 2}}{\kappa T}\\
% \leq &\left  (\frac{64 Q_{0}}{\kappa}+\frac{144 \sigma^{2}}{\mu \eta}\right) \left  (\frac{1}{\sqrt{T}}+\frac{\sqrt{b}}{T}\right)  \log   (T+b) \\
% &+\frac{3584 n^2  (D_\theta+\epsilon_E) ^2d T   (T^\frac{1}{2}+b^{\frac 12})  \left  (\frac{\log \left  (\frac{1}{\delta }\right) }{  (1-\beta' )  \epsilon }+1\right) }{\beta'  \eta  \mu  m^2 \epsilon }\,,
% \end{aligned}
% \end{equation}
where the last inequality is due to $\sqrt{x+y}\leq \sqrt{x}+\sqrt{y}$ for $x\geq 0$, $y\geq 0$.

Combining Eq.  (\ref{ineq: main proof 7}), Eq.  (\ref{ineq: main proof 8})  and Eq.  (\ref{ineq: utility with TDP 1})  shows that
\begin{align*}
\frac{1}{T} \sum_{t=0}^{T-1}& \mathbb{E}  \left(\frac{1}{\kappa}\left\|\widetilde{\theta}_{t+1}-\theta_{t}\right\|+\left\|p_{t}-\nabla_{\theta} F\left  (\theta_{t}, \omega_{t}\right) \right\| +L_{F}\left\|\omega_{t}-\omega^{*}\left  (\theta_{t}\right) \right\|\right) \\
\leq & \sqrt{3}\left  (\frac{576 \sigma^{2}  \sum_{t=0}^{T-1} \nu_{t}^{2}+64d\sum_{t=1}^T\sigma_t^2}{\mu \eta T\nu_T}+\frac{16Q_{0}}{\mu \nu_T T}\right) ^\frac{1}{2}\\
\leq &\sqrt{3}\left  (\frac{64 Q_{0}}{\kappa}+\frac{144 \sigma^{2}}{\mu \eta}\right) ^\frac{1}{2}\left  (\frac{1}{{T}^\frac{1}{4}}+\frac{{b}^\frac{1}{4}}{T^\frac{1}{2}}\right)  \log^\frac{1}{2}   (T+b) +\frac{60\sqrt{3} n  G d^\frac{1}{2}   (T^\frac{3}{4}+b^{\frac 14}T^\frac{1}{2}) \left  (\frac{\log^\frac{1}{2} \left  (\frac{1}{\delta }\right) }{  (1-\beta' ) ^\frac{1}{2} \epsilon^\frac{1}{2} }+1\right) }{{\beta'}^\frac{1}{2}  \eta^\frac{1}{2}  \mu^\frac{1}{2}  m \epsilon^\frac{1}{2}}\\
\leq &\sqrt{3}\left  (\frac{64 Q_{0}}{\kappa}+\frac{144 \sigma^{2}}{\mu \eta}\right) ^\frac{1}{2}\left  (\frac{1}{{T}^\frac{1}{4}}+\frac{{b}^\frac{1}{4}}{T^\frac{1}{2}}\right)  \log^\frac{1}{2}   (T+b) +\frac{120\sqrt{3} n  G  d^\frac{1}{2}  (T^\frac{3}{4}+b^{\frac 14}T^\frac{1}{2}) \log^\frac{1}{2} \left  (\frac{1}{\delta }\right) }{ \eta^\frac{1}{2}  \mu^\frac{1}{2}   {\beta'}^\frac{1}{2}   (1-\beta' ) ^\frac{1}{2} m\epsilon}
\,,
\end{align*}
where the second inequality is due to $\sqrt{x+y}\leq \sqrt{x}+\sqrt{y}$ for $x\geq 0$, $y\geq 0$.

\noindent Simplifying Eq.  (\ref{ineq: main proof 10})  by replacing the constant coefficients with $C_i$ leads to 
\begin{align*}
\frac{1}{T} \sum_{t=0}^{T-1}& \mathbb{E}  \left(\frac{1}{\kappa}\left\|\widetilde{\theta}_{t+1}-\theta_{t}\right\|+\left\|p_{t}-\nabla_{\theta} F\left  (\theta_{t}, \omega_{t}\right) \right\| +L_{F}\left\|\omega_{t}-\omega^{*}\left  (\theta_{t}\right) \right\|\right) \\
\leq &\sqrt{3}\left  (\frac{64 Q_{0}}{\kappa}+\frac{144 \sigma^{2}}{\mu \eta}\right) ^\frac{1}{2}\left  (\frac{1}{{T}^\frac{1}{4}}+\frac{{b}^\frac{1}{4}}{T^\frac{1}{2}}\right)  \log^\frac{1}{2}   (T+b) +\frac{120\sqrt{3} n  G  d^\frac{1}{2}  (T^\frac{3}{4}+b^{\frac 14}T^\frac{1}{2}) \log^\frac{1}{2} \left  (\frac{1}{\delta }\right) }{ \eta^\frac{1}{2}  \mu^\frac{1}{2}   {\beta'}^\frac{1}{2}   (1-\beta' ) ^\frac{1}{2} m\epsilon}\\
= &\frac{C_1\log^\frac{1}{2}   (T+b) }{{T}^\frac{1}{4}}+\frac{C_2\log^\frac{1}{2}   (T+b) }{T^\frac{1}{2}}+\frac{   (C_3T^\frac{3}{4}+C_4T^\frac{1}{2}) nd^\frac{1}{2}\log^\frac{1}{2} \left  (\frac{1}{\delta }\right) }{ m\epsilon}\\
=&\widetilde{\mathcal{O}} \left(\frac{1}{T^\frac{1}{4}}\right) +\mathcal{O}  \left(\frac{T^\frac{3}{4}nd^\frac{1}{2}\log^\frac{1}{2} \left  (1/\delta\right) }{m\epsilon}\right) \,,
\end{align*}
% \begin{equation}
% \begin{aligned}
% &\frac{1}{T} \sum_{t=0}^{T-1} \mathbb{E}  (\frac{1}{\kappa}\left\|\widetilde{\theta}_{t+1}-\theta_{t}\right\|+\left\|p_{t}-\nabla_{\theta} F\left  (\theta_{t}, \omega_{t}\right) \right\| \\
% &+L_{F}\left\|\omega_{t}-\omega^{*}\left  (\theta_{t}\right) \right\|) \\
% \leq &\sqrt{3}\left  (\frac{64 Q_{0}}{\kappa}+\frac{144 \sigma^{2}}{\mu \eta}\right) ^\frac{1}{2}\left  (\frac{1}{{T}^\frac{1}{4}}+\frac{{b}^\frac{1}{4}}{T^\frac{1}{2}}\right)  \log^\frac{1}{2}   (T+b) \\
% &+\frac{120\sqrt{3} n  (D_\theta+\epsilon_E)  d^\frac{1}{2}  (T^\frac{3}{4}+b^{\frac 14}T^\frac{1}{2}) \log^\frac{1}{2} \left  (\frac{1}{\delta }\right) }{ \eta^\frac{1}{2}  \mu^\frac{1}{2}   {\beta'}^\frac{1}{2}   (1-\beta' ) ^\frac{1}{2} m\epsilon}\\
% = &\frac{C_1\log^\frac{1}{2}   (T+b) }{{T}^\frac{1}{4}}+\frac{C_2\log^\frac{1}{2}   (T+b) }{T^\frac{1}{2}}\\
% &+\frac{   (C_3T^\frac{3}{4}+C_4T^\frac{1}{2}) nd^\frac{1}{2}\log^\frac{1}{2} \left  (\frac{1}{\delta }\right) }{ m\epsilon}\\
% =&\widetilde O  (\frac{1}{T^\frac{1}{4}}) +O  (\frac{T^\frac{3}{4}nd^\frac{1}{2}\log^\frac{1}{2} \left  (1/\delta\right) }{m\epsilon}) \,,
% \end{aligned}
% \end{equation}
where $C_1=\sqrt{3}\left  (\frac{64 Q_{0}}{\kappa}+\frac{144 \sigma^{2}}{\mu \eta}\right) ^\frac{1}{2}=\sqrt{3}\left  (\frac{64 Q_{0}}{\kappa}+\frac{288 G^{2}}{\mu \eta}\right) ^\frac{1}{2}$, $C_2=b^\frac{1}{4}C_1$, $C_3=\frac{120\sqrt{3}  G }{  (\beta'  (1-\beta') \eta\mu) ^\frac{1}{2}}, C_4=b^\frac{1}{4}C_3$.

By the parameter setting in Theorem \ref{theorem: utility in TDP}, if we set $T$ as follows
\begin{align*}
T = \frac{C_5m\epsilon}{n\sqrt{d\log  (1/\delta) }}\,,
\end{align*}
% $$
% T = \frac{C_5mn\epsilon}{\sqrt{d\log  (1/\delta) }}\,,
% $$
and hide the factor $\sqrt{\log  (b+T) }$ , the same as in Theorem \ref{theorem: utility}, we will obtain
\begin{align*}
\frac{1}{T} \sum_{t=0}^{T-1} \mathbb{E}\|\mathfrak{M}_t \|
\leq \widetilde{\mathcal{O}}  \left(\frac{n^\frac{1}{4}  (d\log  (1/\delta) ) ^\frac{1}{8}}{  (m\epsilon) ^\frac{1}{4}}\right) \,.
\end{align*}
% \begin{equation}
% \begin{aligned}
% \frac{1}{T} \sum_{t=0}^{T-1} \mathbb{E}\|\mathfrak{M}_t \|
% \leq \widetilde O  (\frac{n^\frac{7}{4}  (d\log  (1/\delta) ) ^\frac{1}{8}}{  (m\epsilon) ^\frac{1}{4}}) \,.
% \end{aligned}
% \end{equation}

\noindent\textbf{Gradient Complexity.} The gradient complexity is equal to $2(T+1)=\mathcal{O}\left(\frac{m\epsilon}{n\sqrt{d\log  (1/\delta) }}\right)$ since Algorithm \ref{alg: DPTD} computes gradients for both the primal side and the dual side. 
\end{proof}

% \subsection{Proof of Lemma \ref{lemma: bounded variance}}
\section{Proof of Technical Lemmas}\label{sec:app:pf_tech_lem}
In this section, we give the detailed proof of several technical lemmas.
\subsection{Proof of Lemma \ref{lemma: bounded variance}}
\textbf{Lemma \ref{lemma: bounded variance}}[Bounded  Variance]
Under Assumption \ref{asmp: G-lispchitz}, the variance of the stochastic gradient $\nabla f  (\theta, \omega; \xi) =\left  (\nabla_{\theta} f  (\theta, \omega; \xi) , \nabla_{\omega} f  (\theta, \omega; \xi) \right) $ is bounded as $\mathbb{E}_{\xi \sim \Xi }\|\nabla f  (\theta, \omega; \xi) -\nabla F  (\theta, \omega) \|^{2} \leq \sigma^{2}$, where $\sigma^2=4G^2$.
% $$
% T = \frac{C_5mn\epsilon}{\sqrt{d\log  (1/\delta) }}\,,
% $$
\begin{proof}
By Assumption \ref{asmp: G-lispchitz}, we have that for any $\theta\in \Theta,\omega\in \Omega$, $\left\|\nabla_\theta f  (\theta, \omega;\xi) \right\|\leq G$ and $\left\|\nabla_\omega f  (\theta, \omega;\xi) \right\|\leq G$.
% \zhao{what does $\nabla_\theta^Tf  (\theta,\omega;\xi) $ stand for? T seems redundant}
We start directly from the LHS,
\begin{align*}
\mathbb{E}_{\xi \sim \Xi }\|\nabla_\theta f  (\theta, \omega; \xi) -\nabla_\theta F  (\theta, \omega) \|^{2}
=&\mathbb{E}_{\xi \sim \Xi }[ \left\| \nabla_\theta f  (\theta,\omega;\xi)  \right\|^2 +\left\| \nabla_\theta F  (\theta,\omega)  \right\|^2 - 2\nabla_\theta^\top f  (\theta,\omega;\xi) \nabla_\theta F  (\theta,\omega)  ]\\
\leq& 2G^2-2\mathbb{E}_{\xi \sim \Xi }\left[  \nabla_\theta^\top f  (\theta,\omega;\xi) \nabla_\theta F  (\theta,\omega) \right]\\
=&2G^2-2\mathbb{E}_{\xi\sim\Xi}[\nabla_\theta^\top f  (\theta,\omega;\xi) ]\nabla_\theta F  (\theta,\omega) \\
\leq&2G^2\,,
\end{align*}
where $\sigma^2=2G^2$.
% $$
% \begin{aligned}
% &\mathbb{E}_{\xi \sim \Xi }\|\nabla_\theta f  (\theta, \omega; \xi) -\nabla_\theta F  (\theta, \omega) \|^{2}\\
% =&\mathbb{E}_{\xi \sim \Xi }[ \left\| \nabla_\theta f  (\theta,\omega;\xi)  \right\|^2 +\left\| \nabla_\theta F  (\theta,\omega)  \right\|^2  \\
% &- 2\nabla_\theta^Tf  (\theta,\omega;\xi) \nabla_\theta F  (\theta,\omega)  ]\\
% \leq& G^2+D_\theta^2-2\mathbb{E}_{\xi \sim \Xi }\left[  \nabla_\theta^Tf  (\theta,\omega;\xi) \nabla_\theta F  (\theta,\omega) \right]\\
% =&G^2+D_\theta^2-2\mathbb{E}_{\xi\sim\Xi}[\nabla_\theta^Tf  (\theta,\omega;\xi) ]\nabla_\theta F  (\theta,\omega) \\
% \leq&G^2+D_\theta^2\,,
% \end{aligned}
% $$
Similarly, $\mathbb{E}_{\xi \sim \Xi }\|\nabla_\omega f  (\theta, \omega; \xi) -\nabla_\omega F  (\theta, \omega) \|^{2}\leq 2G^2$ also holds.
% $$
% \begin{aligned}
% &\mathbb{E}_{\xi \sim \Xi }\|\nabla_\omega f  (\theta, \omega; \xi) -\nabla_\omega F  (\theta, \omega) \|^{2}\\
% =&\mathbb{E}_{\xi \sim \Xi }[ \left\| \nabla_\omega f  (\theta,\omega;\xi)  \right\|^2 +\left\| \nabla_\omega F  (\theta,\omega)  \right\|^2  \\
% &- 2\nabla_\omega^Tf  (\theta,\omega;\xi) \nabla_\omega F  (\theta,\omega)  ]\\
% \leq& G^2+D_\omega^2-2\mathbb{E}_{\xi \sim \Xi }\left[  \nabla_\omega^Tf  (\theta,\omega;\xi) \nabla_\omega F  (\theta,\omega) \right]\\
% =&G^2+D_\omega^2-2\mathbb{E}_{\xi\sim\Xi}[\nabla_\omega^Tf  (\theta,\omega;\xi) ]\nabla_\omega F  (\theta,\omega) \\
% \leq&G^2+D_\omega^2\,,
% \end{aligned}
% $$
Thus, if we set $\sigma^2=2G^2$, we have $\mathbb{E}_{\xi \sim \Xi }\|\nabla f  (\theta, \omega; \xi) -\nabla F  (\theta, \omega) \|^{2} \leq \sigma^{2}$.
\end{proof}

% \subsection{Proof of Lemma \ref{lemma: forth in main proof}}
\subsection{Proof of Lemma \ref{lemma: forth in main proof}}
\textbf{Lemma \ref{lemma: forth in main proof}}[With Bounded  Variance]

\noindent Under Assumptions \ref{asmp:wlipschitz}, \ref{asmp: feasible convex set}, \ref{asmp: strongly concave}, letting $0<\nu_t\leq   (8\alpha) ^{-1}$ and $0<\eta\leq  (4L_F) ^{-1}$, with the updating rules shown in Algorithm \ref{alg: DPTD}, we have
\begin{align}\label{ineq: 1 in proof of lemma}
 \mathbb{E}\left\|\nabla_{\theta} F\left  (\theta_{t+1}, \omega_{t+1}\right) -p_{t+1}\right\|^{2}  \leq& \left  (1-\alpha \nu_{t}\right)  \mathbb{E}\left\|\nabla_{\theta} F\left  (\theta_{t}, \omega_{t}\right) -p_{t}\right\|^{2} +\frac{9 \nu_{t} L_{F}^{2}}{8 \alpha} \mathbb{E}\left  (\left\|\widetilde{\theta}_{t+1}-\theta_{t}\right\|^{2}+\left\|\widetilde{\omega}_{t+1}-\omega_{t}\right\|^{2}\right) \notag\\
 &+\alpha^{2} \nu_{t}^{2} \sigma^{2} + d\sigma_{t+1}^2\,,
\end{align}
and
% \begin{equation}
% \label{ineq: 1 in proof of lemma}
% \begin{aligned}
% & \mathbb{E}\left\|\nabla_{\theta} F\left  (\theta_{t+1}, \omega_{t+1}\right) -p_{t+1}\right\|^{2} \\ \leq& \left  (1-\alpha \nu_{t}\right)  \mathbb{E}\left\|\nabla_{\theta} F\left  (\theta_{t}, \omega_{t}\right) -p_{t}\right\|^{2} \\
% &+\frac{9 \nu_{t} L_{F}^{2}}{8 \alpha} \mathbb{E}\left  (\left\|\widetilde{\theta}_{t+1}-\theta_{t}\right\|^{2}+\left\|\widetilde{\omega}_{t+1}-\omega_{t}\right\|^{2}\right) \\
% &+\alpha^{2} \nu_{t}^{2} \sigma^{2} + d\sigma_{t+1}^2\,,
% \end{aligned}
% \end{equation}
\begin{align}\label{ineq: 2 in proof of lemma}
\mathbb{E}\left\|\nabla_{\omega} F\left  (\theta_{t+1}, \omega_{t+1}\right) -d_{t+1}\right\|^{2}  \leq &\left  (1-\alpha \nu_{t}\right)  \mathbb{E}\left\|\nabla_{\omega} F\left  (\theta_{t}, \omega_{t}\right) -d_{t}\right\|^{2}+\frac{9 \nu_{t} L_{F}^{2}}{8 \alpha} \mathbb{E}\left  (\left\|\widetilde{\theta}_{t+1}-\theta_{t}\right\|^{2}+\left\|\widetilde{\omega}_{t+1}-\omega_{t}\right\|^{2}\right) \notag\\
&+\alpha^{2} \nu_{t}^{2} \sigma^{2} + d\sigma_{t+1}^2\,.
\end{align}
% \begin{equation}
% \label{ineq: 2 in proof of lemma}
% \begin{aligned}
% &\mathbb{E}\left\|\nabla_{\omega} F\left  (\theta_{t+1}, \omega_{t+1}\right) -d_{t+1}\right\|^{2} \\ \leq &\left  (1-\alpha \nu_{t}\right)  \mathbb{E}\left\|\nabla_{\omega} F\left  (\theta_{t}, \omega_{t}\right) -d_{t}\right\|^{2} \\
% &+\frac{9 \nu_{t} L_{F}^{2}}{8 \alpha} \mathbb{E}\left  (\left\|\widetilde{\theta}_{t+1}-\theta_{t}\right\|^{2}+\left\|\widetilde{\omega}_{t+1}-\omega_{t}\right\|^{2}\right) \\
% &+\alpha^{2} \nu_{t}^{2} \sigma^{2} + d\sigma_{t+1}^2\,.
% \end{aligned}
% \end{equation}

\begin{proof}
We first show the detailed proof for Eq. (\ref{ineq: 1 in proof of lemma}) in the lemma, and for the proof of Eq. (\ref{ineq: 2 in proof of lemma}) we will only give a proof sketch since the proof of the two inequalities is similar.

We start from the LHS of Eq. (\ref{ineq: 1 in proof of lemma}). Decompose the term $\nabla_{\theta} F\left  (\theta_{t+1}, \omega_{t+1}\right) -p_{t+1}$ as follows
\begin{align*}
\nabla_{\theta} F\left  (\theta_{t+1}, \omega_{t+1}\right) -p_{t+1}
=& \nabla_{\theta} F\left  (\theta_{t+1}, \omega_{t+1}\right) -\left  (1-\alpha \nu_{t}\right)  p_{t}-u_{t+1}^p-\alpha \nu_{t} \nabla_{\theta} f\left  (\theta_{t+1}, \omega_{t+1} ; \xi_{t+1}\right)  \\
=& \left  (1-\alpha \nu_{t}\right) \left[\nabla_{\theta} F\left  (\theta_{t+1}, \omega_{t+1}\right) -p_{t}\right]-u_{t+1}^p+\alpha \nu_{t}\left[\nabla_{\theta} F\left  (\theta_{t+1}, \omega_{t+1}\right) -\nabla_{\theta} f\left  (\theta_{t+1}, \omega_{t+1} ; \xi_{t+1}\right) \right]\,,
\end{align*}
% $$
% \begin{aligned}
% &\nabla_{\theta} F\left  (\theta_{t+1}, \omega_{t+1}\right) -p_{t+1} \\
% =& \nabla_{\theta} F\left  (\theta_{t+1}, \omega_{t+1}\right) -\left  (1-\alpha \nu_{t}\right)  p_{t}-u_{t+1}^p\\
% &-\alpha \nu_{t} \nabla_{\theta} f\left  (\theta_{t+1}, \omega_{t+1} ; \xi_{t+1}\right)  \\
% =& \left  (1-\alpha \nu_{t}\right) \left[\nabla_{\theta} F\left  (\theta_{t+1}, \omega_{t+1}\right) -p_{t}\right]-u_{t+1}^p\\
% &+\alpha \nu_{t}\left[\nabla_{\theta} F\left  (\theta_{t+1}, \omega_{t+1}\right) -\nabla_{\theta} f\left  (\theta_{t+1}, \omega_{t+1} ; \xi_{t+1}\right) \right]\,,
% \end{aligned}
% $$
where we use the updating rule $p_{t+1} =    (1-\alpha\nu_t) p_t +\alpha\nu_t\nabla_\theta f  (\theta_{t+1},\omega_{t+1};\xi_{t+1}) +u_{t+1}^p$ in Algorithm \ref{alg: DPTD}. 

Taking expectation of the square of the norm on both sides leads to
\begin{align}\label{eq: 3 in proof of lemma}
&\mathbb{E}\left\|\nabla_{\theta} F\left  (\theta_{t+1}, \omega_{t+1}\right) -p_{t+1}\right\|^{2}\notag\\
=&\left  (1-\alpha \nu_{t}\right) ^{2} \mathbb{E}\left\|\nabla_{\theta} F\left  (\theta_{t+1}, \omega_{t+1}\right) -p_{t}\right\|^{2}+\alpha^{2} \nu_{t}^{2} \mathbb{E}\left\|\nabla_{\theta} F\left  (\theta_{t+1}, \omega_{t+1}\right) -\nabla_{\theta} f\left  (\theta_{t+1}, \omega_{t+1} ; \xi_{t+1}\right) \right\|^{2}\notag\\
&+  (2\alpha-2\alpha^2 \nu_{t})   \nu_{t}\times \mathbb{E}\left\langle\nabla_{\theta} F  (\theta_{t+1}, \omega_{t+1}) -p_{t}, \nabla_{\theta} F  (\theta_{t+1}, \omega_{t+1}) -\nabla_{\theta} f  (\theta_{t+1}, \omega_{t+1} ; \xi_{t+1}) \right\rangle\notag\\
&+\mathbb{E}\left\|u_{t+1}^p \right\|^2+2\mathbb{E}\left\langle   (1-\alpha\nu_t) p_t+\alpha\nu_t\nabla_\theta f  (\theta_{t+1},\omega_{t+1};\xi_{t+1}) ,u_{t+1}^p \right\rangle\,.
\end{align}
% \begin{equation}
% \label{eq: 3 in proof of lemma}
% \begin{aligned}
% &\mathbb{E}\left\|\nabla_{\theta} F\left  (\theta_{t+1}, \omega_{t+1}\right) -p_{t+1}\right\|^{2}\\
% =&\left  (1-\alpha \nu_{t}\right) ^{2} \mathbb{E}\left\|\nabla_{\theta} F\left  (\theta_{t+1}, \omega_{t+1}\right) -p_{t}\right\|^{2}\\
% &+\alpha^{2} \nu_{t}^{2} \mathbb{E}\left\|\nabla_{\theta} F\left  (\theta_{t+1}, \omega_{t+1}\right) -\nabla_{\theta} f\left  (\theta_{t+1}, \omega_{t+1} ; \xi_{t+1}\right) \right\|^{2}\\
% &+  (2\alpha-2\alpha^2 \nu_{t})   \nu_{t}\times\\ &\mathbb{E}\langle\nabla_{\theta} F  (\theta_{t+1}, \omega_{t+1}) -p_{t}, \nabla_{\theta} F  (\theta_{t+1}, \omega_{t+1}) -\nabla_{\theta} f  (\theta_{t+1}, \omega_{t+1} ; \xi_{t+1}) \rangle\\
% &+\mathbb{E}\left\|u_{t+1}^p \right\|^2\\
% &+2\mathbb{E}\langle   (1-\alpha\nu_t) p_t+\alpha\nu_t\nabla_\theta f  (\theta_{t+1},\omega_{t+1};\xi_{t+1}) ,u_{t+1}^p \rangle
% \end{aligned}
% \end{equation}
% Note that the third term in the RHS of the above equality is too long to be fit in a single row, thus it is written in two lines.

% Then we aim to simplify the RHS of Eq. (\ref{eq: 3 in proof of lemma}).
It remains to simplify the RHS of Eq. (\ref{eq: 3 in proof of lemma}).
By the Gaussian noise we define in Algorithm \ref{alg: DPTD}, we have
\begin{equation}
\label{eq: 4 in proof of lemma}
\mathbb{E}\left\|u_{t+1}^p \right\|^2 = d\sigma_{t+1}^2\,,
\end{equation}
and
\begin{equation}
\label{eq: 5 in proof of lemma}
    \mathbb{E}\langle   (1-\alpha\nu_t) p_t+\alpha\nu_t\nabla_\theta f  (\theta_{t+1},\omega_{t+1};\xi_{t+1}) ,u_{t+1}^p \rangle = 0\,,
\end{equation}
where the second equality is because $\mathbb{E}[u_{t+1}^p ]=0$. 
% Additionally, by chain rule of expectation \zhao{maybe tower rule of conditional expectation?}, we can obtain

By the tower rule of conditional expectation, one can see that
\begin{align}\label{eq: 6 in proof of lemma}
&\mathbb{E}\left\langle\nabla_{\theta} F\left  (\theta_{t+1}, \omega_{t+1}\right) -p_{t}, \nabla_{\theta} F\left  (\theta_{t+1}, \omega_{t+1}\right) -\nabla_{\theta} f\left  (\theta_{t+1}, \omega_{t+1} ; \xi_{t+1}\right) \right\rangle\notag\\
=& \mathbb{E}\left[\mathbb{E}_{\xi_{t+1}}\left\langle\nabla_{\theta} F\left  (\theta_{t+1}, \omega_{t+1}\right) -p_{t},\nabla_{\theta} F\left  (\theta_{t+1}, \omega_{t+1}\right) -\nabla_{\theta} f\left  (\theta_{t+1}, \omega_{t+1} ; \xi_{t+1}\right) \right\rangle\right ] \notag\\
=& 0\,.
\end{align}
% \begin{equation}
% \label{eq: 6 in proof of lemma}
% \begin{aligned}
% &\mathbb{E}\langle\nabla_{\theta} F\left  (\theta_{t+1}, \omega_{t+1}\right) -p_{t}, \\
% &\nabla_{\theta} F\left  (\theta_{t+1}, \omega_{t+1}\right) -\nabla_{\theta} f\left  (\theta_{t+1}, \omega_{t+1} ; \xi_{t+1}\right) \rangle \\
% =& \mathbb{E}\{\mathbb{E}_{\xi_{t+1}}\langle\nabla_{\theta} F\left  (\theta_{t+1}, \omega_{t+1}\right) -p_{t},\\
% &\nabla_{\theta} F\left  (\theta_{t+1}, \omega_{t+1}\right) -\nabla_{\theta} f\left  (\theta_{t+1}, \omega_{t+1} ; \xi_{t+1}\right) \rangle\} \\
% =& 0\,.
% \end{aligned}
% \end{equation}
Combining Eq. (\ref{eq: 4 in proof of lemma}), Eq. (\ref{eq: 5 in proof of lemma}), Eq. (\ref{eq: 6 in proof of lemma}), we simplify Eq. (\ref{eq: 3 in proof of lemma}) as
\begin{align}\label{eq: 7 in proof of lemma}
\mathbb{E}\left\|\nabla_{\theta} F\left  (\theta_{t+1}, \omega_{t+1}\right) -p_{t+1}\right\|^{2}
=&\left  (1-\alpha \nu_{t}\right) ^{2} \mathbb{E}\left\|\nabla_{\theta} F\left  (\theta_{t+1}, \omega_{t+1}\right) -p_{t}\right\|^{2}+d\sigma_{t+1}^2\notag\\
&+\alpha^{2} \nu_{t}^{2} \mathbb{E}\left\|\nabla_{\theta} F\left  (\theta_{t+1}, \omega_{t+1}\right) -\nabla_{\theta} f\left  (\theta_{t+1}, \omega_{t+1} ; \xi_{t+1}\right) \right\|^{2}\,.
\end{align}
% \begin{equation}
% \label{eq: 7 in proof of lemma}
% \begin{aligned}
% &\mathbb{E}\left\|\nabla_{\theta} F\left  (\theta_{t+1}, \omega_{t+1}\right) -p_{t+1}\right\|^{2}\\
% =&\left  (1-\alpha \nu_{t}\right) ^{2} \mathbb{E}\left\|\nabla_{\theta} F\left  (\theta_{t+1}, \omega_{t+1}\right) -p_{t}\right\|^{2}+d\sigma_{t+1}^2\\
% &+\alpha^{2} \nu_{t}^{2} \mathbb{E}\left\|\nabla_{\theta} F\left  (\theta_{t+1}, \omega_{t+1}\right) -\nabla_{\theta} f\left  (\theta_{t+1}, \omega_{t+1} ; \xi_{t+1}\right) \right\|^{2}\,.
% \end{aligned}
% \end{equation}
Now we bound the first term $\left  (1-\alpha \nu_{t}\right) ^{2} \mathbb{E}\left\|\nabla_{\theta} F\left  (\theta_{t+1}, \omega_{t+1}\right) -p_{t}\right\|^{2}$ in Eq.  (\ref{eq: 7 in proof of lemma})  as follows
\begin{align}\label{eq: 8 in proof of lemma}
&\left  (1-\alpha \nu_{t}\right) ^{2} \mathbb{E}\left\|\nabla_{\theta} F\left  (\theta_{t+1}, \omega_{t+1}\right) -p_{t}\right\|^{2}\notag\\
=&\left  (1-\alpha \nu_{t}\right) ^{2} \mathbb{E}\left\|\nabla_{\theta} F\left  (\theta_{t+1}, \omega_{t+1}\right) -\nabla_{\theta} F\left  (\theta_{t}, \omega_{t}\right) +\nabla_{\theta} F\left  (\theta_{t}, \omega_{t}\right) -p_{t}\right\|^{2}\notag\\
\leq &\left  (1-\alpha \nu_{t}\right) ^{2}\left  (1+\frac{1}{\alpha \nu_{t}}\right)  \mathbb{E}\left\|\nabla_{\theta} F\left  (\theta_{t+1}, \omega_{t+1}\right) -\nabla_{\theta} F\left  (\theta_{t}, \omega_{t}\right) \right\|^{2}+\left  (1-\alpha \nu_{t}\right) ^{2}\left  (1+\alpha \nu_{t}\right)  \mathbb{E}\left\|\nabla_{\theta} F\left  (\theta_{t}, \omega_{t}\right) -p_{t}\right\|^{2}\notag\\
\leq& \frac{9}{8 \alpha \nu_{t}} \mathbb{E}\left\|\nabla_{\theta} F\left  (\theta_{t+1}, \omega_{t+1}\right) -\nabla_{\theta} F\left  (\theta_{t}, \omega_{t}\right) \right\|^{2}+\left  (1-\alpha \nu_{t}\right)  \mathbb{E}\left\|\nabla_{\theta} F\left  (\theta_{t}, \omega_{t}\right) -p_{t}\right\|^{2}\,,
\end{align}
% \begin{equation}
% \label{eq: 8 in proof of lemma}
% \begin{aligned}
% &\left  (1-\alpha \nu_{t}\right) ^{2} \mathbb{E}\left\|\nabla_{\theta} F\left  (\theta_{t+1}, \omega_{t+1}\right) -p_{t}\right\|^{2}\\
% =&\left  (1-\alpha \nu_{t}\right) ^{2} \mathbb{E}\left\|\nabla_{\theta} F\left  (\theta_{t+1}, \omega_{t+1}\right) -\nabla_{\theta} F\left  (\theta_{t}, \omega_{t}\right) +\nabla_{\theta} F\left  (\theta_{t}, \omega_{t}\right) -p_{t}\right\|^{2}\\
% \leq &\left  (1-\alpha \nu_{t}\right) ^{2}\left  (1+\frac{1}{\alpha \nu_{t}}\right)  \mathbb{E}\left\|\nabla_{\theta} F\left  (\theta_{t+1}, \omega_{t+1}\right) -\nabla_{\theta} F\left  (\theta_{t}, \omega_{t}\right) \right\|^{2} \\
% &+\left  (1-\alpha \nu_{t}\right) ^{2}\left  (1+\alpha \nu_{t}\right)  \mathbb{E}\left\|\nabla_{\theta} F\left  (\theta_{t}, \omega_{t}\right) -p_{t}\right\|^{2}\\
% \leq& \frac{9}{8 \alpha \nu_{t}} \mathbb{E}\left\|\nabla_{\theta} F\left  (\theta_{t+1}, \omega_{t+1}\right) -\nabla_{\theta} F\left  (\theta_{t}, \omega_{t}\right) \right\|^{2}\\
% &+\left  (1-\alpha \nu_{t}\right)  \mathbb{E}\left\|\nabla_{\theta} F\left  (\theta_{t}, \omega_{t}\right) -p_{t}\right\|^{2}\,,
% \end{aligned}
% \end{equation}
where the first inequality is by Young's inequality $\|x+y\|^{2} \leq  (1+\lambda) \|x\|^{2}+\left  (1+\lambda^{-1}\right) \|y\|^{2}$ with $\lambda=\alpha \nu_t$, and the second inequality is due to the condition $0<\nu_t\leq  (8\alpha) ^{-1}$ and then
$$
\left  (1-\alpha \nu_{t}\right) ^{2}\left  (1+\frac{1}{\alpha \nu_{t}}\right)  \leq 1+\frac{1}{\alpha \nu_{t}} \leq \frac{9}{8 \alpha \nu_{t}}\,,
$$
and
$$
\left  (1-\alpha \nu_{t}\right) ^{2}\left  (1+\alpha \nu_{t}\right) =1-\alpha \nu_{t}-\alpha^{2} \nu_{t}^{2}+\alpha^{3} \nu_{t}^{3} \leq 1-\alpha \nu_{t}\,.
$$
% Furthermore, by Assumption \ref{asmp:wlipschitz} we have that $\nabla_\theta F  (\theta,\omega) $ is Lipschiz continuous, and the updating rule is $\theta_{t+1}=\theta_t+\nu_t  (\widetilde\theta_t-\theta_t) $ and $\omega_{t+1}=\omega_t+\nu_t  (\widetilde\omega_{t}-\omega_t) $, we obtain
Furthermore, Assumption \ref{asmp:wlipschitz} implies that $\nabla_\theta F  (\theta,\omega) $ is Lipschiz continuous. Recall that the update rule in Alogrithm \ref{alg: DPTD} is $\theta_{t+1}=\theta_t+\nu_t  (\widetilde\theta_t-\theta_t) $ and $\omega_{t+1}=\omega_t+\nu_t  (\widetilde\omega_{t}-\omega_t) $, which further leads to
\begin{align}\label{eq: 9 in proof of lemma}
\frac{9}{8 \alpha \nu_{t}} \mathbb{E}\left\|\nabla_{\theta} F\left  (\theta_{t+1}, \omega_{t+1}\right) -\nabla_{\theta} F\left  (\theta_{t}, \omega_{t}\right) \right\|^{2}\leq& \frac{9 L_{F}^{2}}{8 \alpha \nu_{t}}\left  (\left\|\theta_{t+1}-\theta_{t}\right\|^{2}+\left\|\omega_{t+1}-\omega_{t}\right\|^{2}\right) \notag\\
\leq &\frac{9 L_{F}^{2} \nu_{t}}{8 \alpha}\left  (\left\|\tilde{\theta}_{t+1}-\theta_{t}\right\|^{2}+\left\|\widetilde{\omega}_{t+1}-\omega_{t}\right\|^{2}\right) \,.
\end{align}
% \begin{equation}
% \label{eq: 9 in proof of lemma}
% \begin{aligned}
% &\frac{9}{8 \alpha \nu_{t}} \mathbb{E}\left\|\nabla_{\theta} F\left  (\theta_{t+1}, \omega_{t+1}\right) -\nabla_{\theta} F\left  (\theta_{t}, \omega_{t}\right) \right\|^{2} \\
% \leq& \frac{9 L_{F}^{2}}{8 \alpha \nu_{t}}\left  (\left\|\theta_{t+1}-\theta_{t}\right\|^{2}+\left\|\omega_{t+1}-\omega_{t}\right\|^{2}\right) \\
% \leq &\frac{9 L_{F}^{2} \nu_{t}}{8 \alpha}\left  (\left\|\tilde{\theta}_{t+1}-\theta_{t}\right\|^{2}+\left\|\widetilde{\omega}_{t+1}-\omega_{t}\right\|^{2}\right) \,.
% \end{aligned}
% \end{equation}
Combining Eq.  (\ref{eq: 8 in proof of lemma})  and Eq.  (\ref{eq: 9 in proof of lemma})  leads to
\begin{align}\label{eq: 10 in proof of lemma}
\left  (1-\alpha \nu_{t}\right) ^{2} \mathbb{E}\left\|\nabla_{\theta} F\left  (\theta_{t+1}, \omega_{t+1}\right) -p_{t}\right\|^{2}
\leq& \frac{9 L_{F}^{2}\nu_{t} }{8 \alpha}\left\|\tilde{\theta}_{t+1}-\theta_{t}\right\|^{2}+\frac{9 \nu_{t} L_{F}^{2}}{8 \alpha}\left\|\widetilde{\omega}_{t+1}-\omega_{t}\right\|^{2}\notag\\
&+\left  (1-\alpha \nu_{t}\right)  \mathbb{E}\left\|\nabla_{\theta} F\left  (\theta_{t}, \omega_{t}\right) -p_{t}\right\|^{2}\,,
\end{align}
% \begin{equation}
% \label{eq: 10 in proof of lemma}
% \begin{aligned}
% &\left  (1-\alpha \nu_{t}\right) ^{2} \mathbb{E}\left\|\nabla_{\theta} F\left  (\theta_{t+1}, \omega_{t+1}\right) -p_{t}\right\|^{2} \\
% \leq& \frac{9 L_{F}^{2}\nu_{t} }{8 \alpha}\left\|\tilde{\theta}_{t+1}-\theta_{t}\right\|^{2}+\frac{9 \nu_{t} L_{F}^{2}}{8 \alpha}\left\|\widetilde{\omega}_{t+1}-\omega_{t}\right\|^{2}\\
% &+\left  (1-\alpha \nu_{t}\right)  \mathbb{E}\left\|\nabla_{\theta} F\left  (\theta_{t}, \omega_{t}\right) -p_{t}\right\|^{2}\,,
% \end{aligned}
% \end{equation}
% which provides a contraction for the first term in Eq. (\ref{eq: 7 in proof of lemma}).
which upper bounds the first term of the RHS in Eq. (\ref{eq: 7 in proof of lemma}).

% For the  third term in Eq. (\ref{eq: 7 in proof of lemma}), due to the bounded variance given by Lemma \ref{lemma: bounded variance}, we obtain
For the  third term of the RHS in Eq. (\ref{eq: 7 in proof of lemma}), Lemma \ref{lemma: bounded variance} implies that
\begin{equation}
\label{eq: 11 in proof of lemma}
\alpha^{2} \nu_{t}^{2} \mathbb{E}\left\|\nabla_{\theta} F\left  (\theta_{t+1}, \omega_{t+1}\right) -\nabla_{\theta} f\left  (\theta_{t+1}, \omega_{t+1} ; \xi_{t+1}\right) \right\|^{2} \leq \alpha^{2} \nu_{t}^{2} \sigma^{2}\,.
\end{equation}
% Combining Eq.  (\ref{eq: 7 in proof of lemma}) , Eq.  (\ref{eq: 10 in proof of lemma})  and Eq.  (\ref{eq: 11 in proof of lemma})  shows that
% \begin{align}\label{eq: 1 result in lemma}
% \mathbb{E}\left\|\nabla_{\theta} F\left  (\theta_{t+1}, \omega_{t+1}\right) -p_{t+1}\right\|^{2} \leq&\left  (1-\alpha \nu_{t}\right)  \mathbb{E}\left\|\nabla_{\theta} F\left  (\theta_{t}, \omega_{t}\right) -p_{t}\right\|^{2}+\frac{9 \nu_{t} L_{F}^{2}}{8 \alpha} \mathbb{E}\left  (\left\|\widetilde{\theta}_{t+1}-\theta_{t}\right\|^{2}+\left\|\widetilde{\omega}_{t+1}-\omega_{t}\right\|^{2}\right) \notag\\
% &+d\sigma^2_{t+1} + \alpha^{2} \nu_{t}^{2} \sigma^{2}\,.
% \end{align}
% \begin{equation}
% \label{eq: 1 result in lemma}
% \begin{aligned}
% &\mathbb{E}\left\|\nabla_{\theta} F\left  (\theta_{t+1}, \omega_{t+1}\right) -p_{t+1}\right\|^{2}\\ \leq&\left  (1-\alpha \nu_{t}\right)  \mathbb{E}\left\|\nabla_{\theta} F\left  (\theta_{t}, \omega_{t}\right) -p_{t}\right\|^{2}\\
% &+\frac{9 \nu_{t} L_{F}^{2}}{8 \alpha} \mathbb{E}\left  (\left\|\widetilde{\theta}_{t+1}-\theta_{t}\right\|^{2}+\left\|\widetilde{\omega}_{t+1}-\omega_{t}\right\|^{2}\right) \\
% &+d\sigma^2_{t+1} + \alpha^{2} \nu_{t}^{2} \sigma^{2}\,.
% \end{aligned}
% \end{equation}
% Eq.  (\ref{eq: 1 result in lemma})  is the first equality  we aim to get. For Eq.  (\ref{ineq: 2 in proof of lemma}) , the proof is very similar. Thus we will not repeat here.
% Eq.  (\ref{eq: 1 result in lemma})  is the first equality  we aim to get. 
% For Eq.  (\ref{ineq: 2 in proof of lemma}) , the proof is very similar. Thus we will not repeat here.
Eq. (\ref{ineq: 1 in proof of lemma}) follows from combining Eq.  (\ref{eq: 7 in proof of lemma}) , Eq.  (\ref{eq: 10 in proof of lemma})  and Eq.  (\ref{eq: 11 in proof of lemma}).
The proof of Eq.  (\ref{ineq: 2 in proof of lemma}) is similar to that of Eq. (\ref{ineq: 1 in proof of lemma}) and is omitted here.
\end{proof}

\section{Implementation Details}\label{app:sec:exp_imple}
The parameters of all the algorithms are introduced as follows. The value function is parameterized by a two-layer  fully-connected neural network with  $50$ hidden neurons and ELU activation function \cite{djork2016elu}.
The discount factor $\gamma$ is set to $0.95$ as in \cite{WaiHYWT19}. We set the feasible sets as  $\Theta=[-1,1]^d$ and $\Omega =[-1,1]^d$ where $d$ is the dimension of the neural network's parameters. For DPTD and TD, we set $\alpha=3$, $\beta=3$, $\kappa=2$, $\eta=2$, $\nu_t=\frac{1}{4(t+3)^{1/2}}$ as suggested in Theorem \ref{theorem: utility}.
The step sizes of DPGLD and DPSRM are also taken as the suggested theoretical values in their original papers. For SGD, the step size 
is maintained in the same order with other algorithms, ranging from $10^{-3}$ to $10^{-4}$.
We implement all the algorithms in PyTorch 1.5.1  \cite{paszke2019pytorch} with Ubuntu 18.04 and an NVIDIA GTX 2080Ti GPU. 

\clearpage
\section{Additional Experiments}\label{app:sec:exp_varying_eps}
We conduct experiments to show the impact of different privacy budgets on the convergence of DPTD. Specifically, we plot the utilities of DPTD under $\epsilon=100.0$, 
$\epsilon=10.0$, $\epsilon=1.0$ and $\epsilon=0.1$ respectively.
As $\epsilon$ increases, DPTD will have more privacy budgets and
the primal and dual gradients of DPTD will be perturbed by Gaussian noises with smaller variance. Therefore, DPTD under larger $\epsilon$ will have better utility, which is validated in Figure \ref{fig: DP}, where DPTD under $\epsilon=100.0$
has the best utility in all three tasks.
Furthermore, 
the utility of DPTD will degrade as $\epsilon$ decreases
and DPTD under $\epsilon=0.1$ has the worst utility.

% ---------------------------------------  fig 2 ---------------------------------------------------
\begin{figure*}[!htbp]
\begin{minipage}[t]{0.33\textwidth}
  \includegraphics[width=0.95\linewidth]{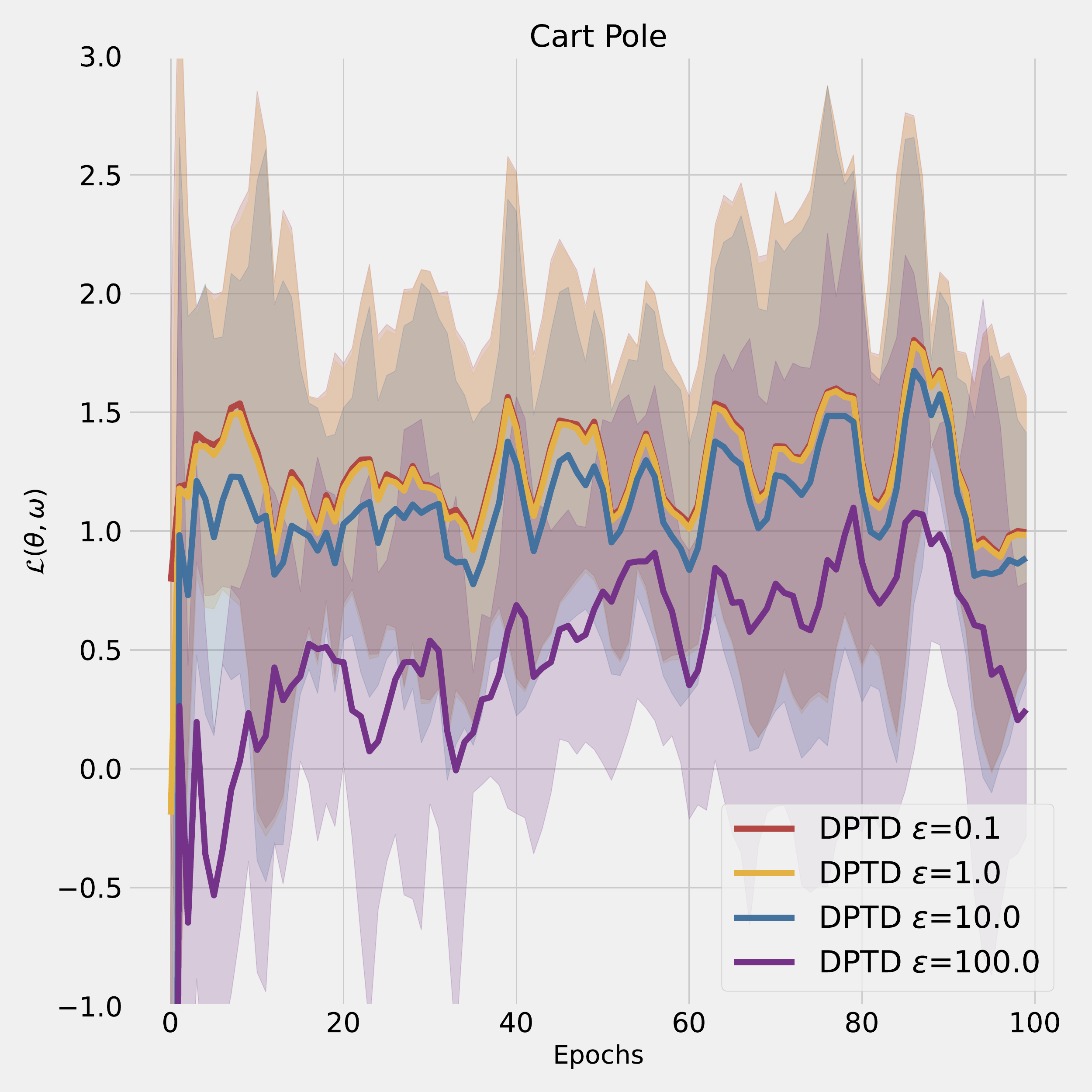}
%   \subcaption{}
   \label{fig: DP mountain car}
\end{minipage}%
\hfill
\begin{minipage}[t]{0.33\textwidth}

  \includegraphics[width=0.95\linewidth]{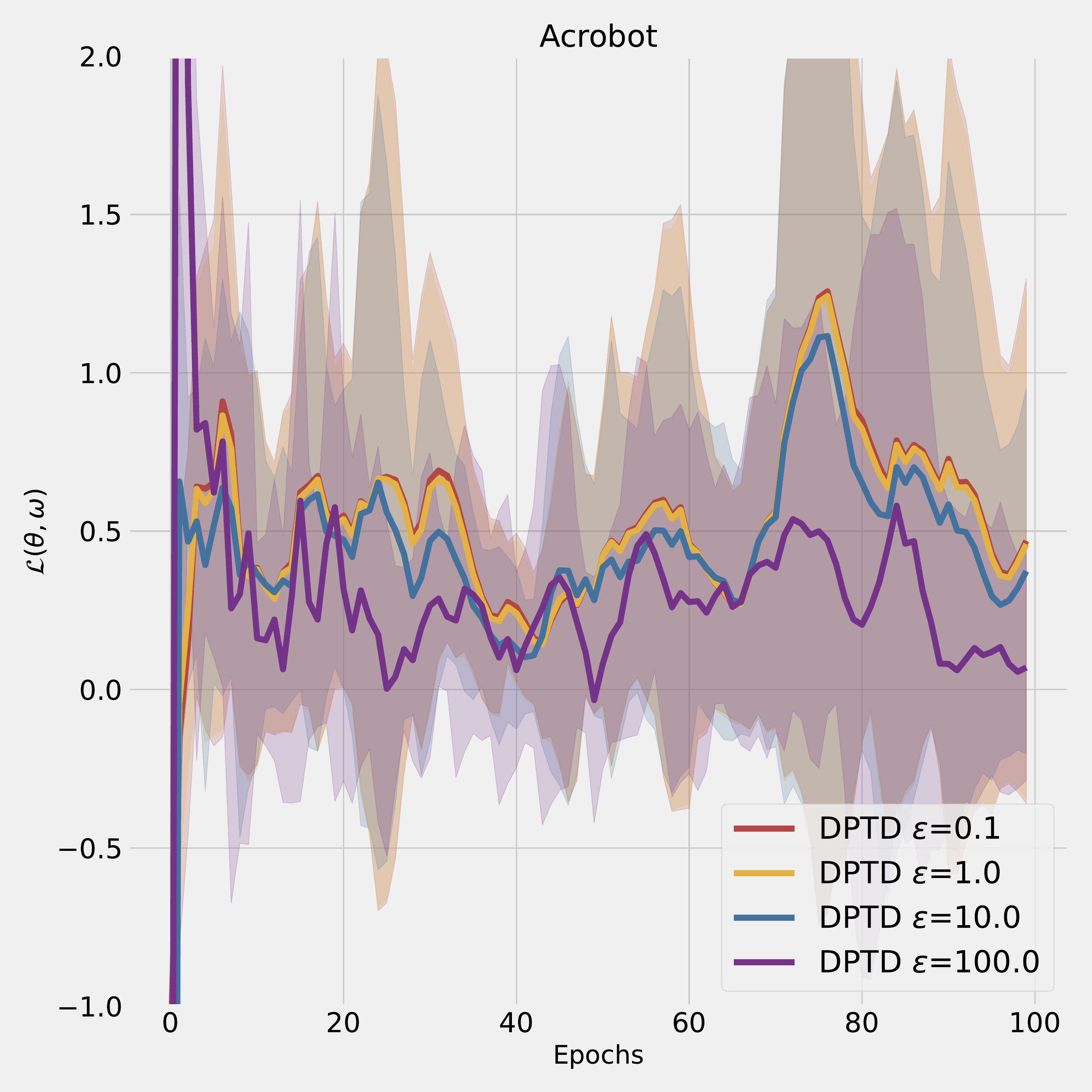}
%   \subcaption{}
   \label{fig: DP cart pole}
\end{minipage}%
\hfill
\begin{minipage}[t]{0.33\textwidth}
  \includegraphics[width=0.95\linewidth]{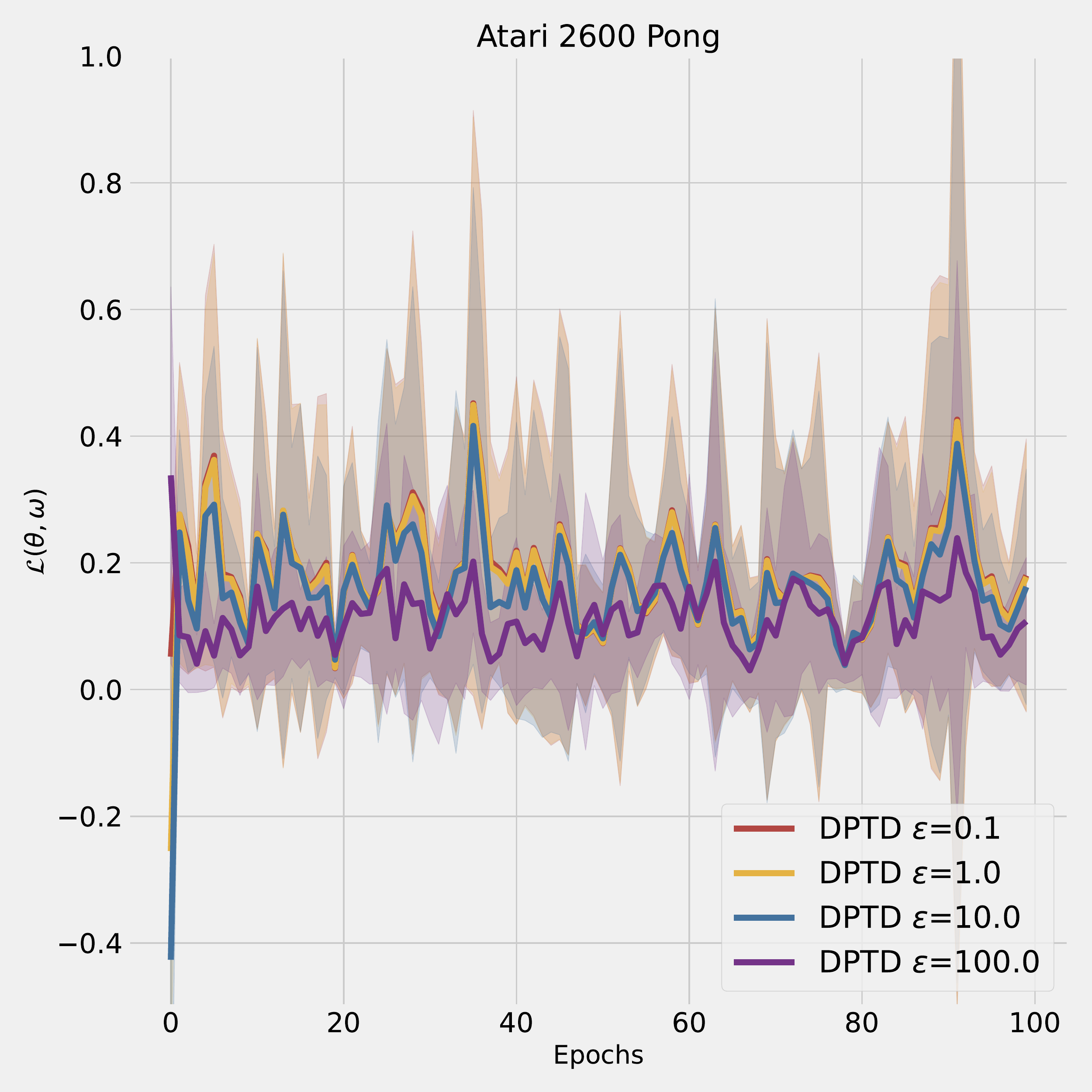}
%   \subcaption{}
   \label{fig: DP acrobot}

\end{minipage}
\caption{
% Compare DPTD under different privacy parameter $\epsilon$, including $\epsilon=0.1, 1.0, 10.0$. Each epoch has $5$ finite trajectories. The shadow denotes $1$-std. The learning curves are averaged over 10 random seeds. The curves are generated without smoothing.
Utilities of DPTD under different privacy parameter $\epsilon$, including $\epsilon=0.1, 1.0, 10.0, 100.0$. Each epoch has $5$ finite trajectories. The shadow denotes $1$-std. The learning curves are averaged over 10 random seeds and are generated without smoothing.
}
\label{fig: DP}
\end{figure*}
% ---------------------------------------  fig 2 ---------------------------------------------------

\end{document}